\title[Coherent and Archimedean choice in general Banach spaces]{Coherent and Archimedean choice in general Banach spaces}
\author{Gert de Cooman}
\address{Ghent University, Foundations Lab for Imprecise Probabilities, Technologiepark-Zwijnaarde 125, 9052 Zwijnaarde, Belgium}
\email{gert.decooman@ugent.be}
\DeclarePairedDelimiter{\group}{(}{)}
\DeclarePairedDelimiter{\sqgroup}{[}{]}
\DeclarePairedDelimiter{\set}{\{}{\}}
\DeclarePairedDelimiter{\structure}{\langle}{\rangle}
\DeclarePairedDelimiter{\normof}{\Vert}{\Vert}
\DeclarePairedDelimiter{\abs}{\vert}{\vert}
\newcommand{\naturals}{\mathbb{N}}
\newcommand{\reals}{\mathbb{R}}
\newcommand{\posreals}{\reals_{>0}}
\newcommand{\nonnegreals}{\reals_{\geq0}}
\newcommand{\unit}{[0,1]}
\newcommand{\states}{\mathcal{X}}
\newcommand{\rewards}{\mathcal{R}}
\newcommand{\statesandrewards}{\states\times\rewards}
\newcommand{\gbls}{\mathcal{G}}
\newcommand{\gblson}[1]{\gbls(#1)}
\newcommand{\gblsonstates}{\gblson{\states}}
\newcommand{\gblsonstatesandrewards}{\gblson{\statesandrewards}}
\newcommand{\hls}{\mathcal{H}}
\newcommand{\difs}{\mathcal{D}} 
\newcommand{\dif}{d}
\newcommand{\altdif}{d'}
\newcommand{\gblgt}{>}
\newcommand{\gblplt}{\olessthan}
\newcommand{\gblpgt}{\ogreaterthan}
\newcommand{\optgt}[1][]{\mathrel{\succ_{#1}}}
\newcommand{\prefgt}[1][]{\mathrel{\rhd_{#1}}}
\newcommand{\hlsgt}{\mathrel{>^\ast}}
\newcommand{\hlpgt}{\mathrel{\ogreaterthan^\ast}}
\newcommand{\hlprefgt}[1][]{\mathrel{\rhd^\ast_{#1}}}
\newcommand{\bestreward}{\top}
\newcommand{\worstreward}{\bot}
\newcommand{\binaryrewards}{\set{\bestreward,\worstreward}}
\newcommand{\opt}[1][]{u_{#1}}
\newcommand{\altopt}[1][]{v_{#1}}
\newcommand{\altopttoo}[1][]{w_{#1}}
\newcommand{\opts}{\mathcal{V}}
\newcommand{\posopts}{\opts_{\optgt0}}
\newcommand{\optset}[1][]{A_{#1}}
\newcommand{\altoptset}[1][]{B_{#1}}
\newcommand{\optsets}{\mathcal{Q}}
\newcommand{\assessment}{\mathcal{A}}
\newcommand{\desirset}[1][]{D_{#1}}
\newcommand{\maxdesirset}[1][]{\hat{D}_{#1}}
\newcommand{\desirsets}{\mathbf{D}}
\newcommand{\cohdesirsets}{\overline{\desirsets}}
\newcommand{\totdesirsets}{\overline{\desirsets}_{\mathrm{T}}}
\newcommand{\mixdesirsets}{\overline{\desirsets}_{\mathrm{M}}}
\newcommand{\archdesirsets}{\overline{\desirsets}_{\mathrm{A}}}
\newcommand{\rejectset}[1][]{K_{#1}}
\newcommand{\natexdesirset}{\closure_{\cohdesirsets}}
\newcommand{\archnatexdesirset}{\closure_{\archdesirsets}}
\newcommand{\natexrejectset}{\closure_{\cohrejectsets}}
\newcommand{\archnatexrejectset}{\closure_{\archcohrejectsets}}
\newcommand{\rejectsets}{\mathbf{K}}
\newcommand{\cohrejectsets}{\overline{\rejectsets}}
\newcommand{\archcohrejectsets}{\overline{\rejectsets}_{\mathrm{A}}}
\newcommand{\mixrejectsets}{\overline{\rejectsets}_{\mathrm{M}}}
\newcommand{\archrejectsets}{\overline{\rejectsets}_{\mathrm{A}}}
\newcommand{\setofdesirsets}{\mathcal{D}}
\newcommand{\setofdualopts}{\mathcal{L}}
\newcommand{\setofldualopts}{\underline{\setofdualopts}}
\newcommand{\choicefun}[1][]{C_{#1}}
\newcommand{\rejectfun}[1][]{R_{#1}}
\newcommand{\hl}[1][]{H_{#1}}
\newcommand{\althl}[1][]{G_{#1}}
\newcommand{\althltoo}[1][]{F_{#1}}
\newcommand{\gbl}[1][]{h_{#1}}
\newcommand{\altgbl}[1][]{g_{#1}}
\newcommand{\dualopt}[1][]{\Lambda_{#1}}
\newcommand{\dualopts}{\opts^\star}
\newcommand{\posdualopts}{\opts^\star_{\optgt0}}
\newcommand{\ddualopt}[1][]{\Gamma_{#1}}
\newcommand{\ddualopts}{\opts^\circ}
\newcommand{\ldualopt}[1][]{\underline{\Lambda}_{#1}}
\newcommand{\udualopt}[1][]{\overline{\Lambda}_{#1}}
\newcommand{\maxludualopt}[2][]{\abs{\overline{\underline{\Lambda}}}\group[#1]{#2}}
\newcommand{\ldualopto}[1][\desirset]{\underline{\Lambda}_{#1,\opt[o]}}
\newcommand{\udualopto}[1][\desirset]{\overline{\Lambda}_{#1,\opt[o]}}
\newcommand{\dualopto}[1][\desirset]{\Lambda_{#1,\opt[o]}}
\newcommand{\ldualopts}[1][]{\underline{\opts}^\star}
\newcommand{\posldualopts}{\underline{\opts}^\star_{\optgt0}}
\newcommand{\optnorm}[2][]{{\normof[#1]{#2}}_{\opts}}
\newcommand{\ddualoptnorm}[2][]{{\normof[#1]{#2}}_{\ddualopts}}
\newcommand{\nml}[1][{\opt[o]}]{\normalise_{#1}}
\newcommand{\cset}[3][]{\set[#1]{#2\colon#3}}
\newcommand{\ind}[1]{\mathbb{I}_{#1}}
\newcommand{\indset}[1]{\ind{\set{#1}}}
\newcommand{\then}{\Rightarrow}
\newcommand{\ifandonlyif}{\Leftrightarrow}
\newcommand{\co}[1]{#1^\mathrm{c}}
\newcommand{\bolleke}{\vcenter{\hbox{\scalebox{0.8}{\(\bullet\)}}}}
\DeclareMathOperator{\posi}{posi}
\DeclareMathOperator{\linspan}{span}
\DeclareMathOperator{\closure}{cl}
\DeclareMathOperator{\topcls}{Cl}
\DeclareMathOperator{\topint}{Int}
\DeclareMathOperator{\normalise}{N}
\newcommand{\heads}{\mathtt{H}}
\newcommand{\tails}{\mathtt{T}}
\newcommand{\coinstates}{\set{\heads,\tails}}
\newcommand{\coinstatesandrewards}{\coinstates\times\binaryrewards}
\newcommand{\stopit}{\hfill\(\triangle\)}
\newcommand{\unitdif}{1_\difs}
\newcommand{\difnorm}[2][]{{\normof[#1]{#2}}_{\difs}}
\newcommand{\ddualdifs}{\difs^\circ}
\newcommand{\dualdifs}{\difs^\star}
\newcommand{\ldualdifs}{\underline{\difs}^\star}
\newcommand{\ddualdifnorm}[2][]{{\normof[#1]{#2}}_{\ddualdifs}}
\newcommand{\lp}{\underline{p}}
\newcommand{\up}{\overline{p}}
\newcommand{\lup}{{[\lp,\up]}}
\newcommand{\ex}[1][]{E_{#1}}
\newcommand{\lex}[1][]{\underline{E}_{#1}}
\newtheorem{theorem}{Theorem}
\newtheorem{proposition}[theorem]{Proposition}
\newtheorem{corollary}[theorem]{Corollary}
\theoremstyle{definition}
\theoremstyle{remark}
\newtheorem{runningexample}{Running example}
\newcommand{\archim}{Ar\-chi\-medean}
\newcommand{\essarchim}{essentially Ar\-chi\-medean}
\newcommand{\archimty}{Ar\-chi\-medeanity}
\newcommand{\essarchimty}{essential Ar\-chi\-medeanity}
\newcommand{\Essarchimty}{Essential Ar\-chi\-medeanity}
\newcommand{\weakcircle}{weak\({}^\circ\)}
\newcommand{\weakstar}{weak\({}^\star\)}
\begin{document}
\begin{abstract}
I introduce and study a new notion of Archimedeanity for binary and non-binary choice between options that live in an abstract Banach space, through a very general class of choice models, called sets of desirable option sets.
In order to be able to bring an important diversity of contexts into the fold, amongst which choice between horse lottery options, I pay special attention to the case where these linear spaces don't include all `constant' options.
I consider the frameworks of conservative inference associated with Archimedean (and coherent) choice models, and also pay quite a lot of attention to representation of general (non-binary) choice models in terms of the simpler, binary ones.
The representation theorems proved here provide an axiomatic characterisation for, amongst many other choice methods, Levi's E-admissibility and Walley--Sen maximality.
\end{abstract}
\maketitle


\section{Introduction}\label{sec:introduction}
This paper deals with rational decision making under uncertainty using choice functions, along the lines established by Teddy Seidenfeld and colleagues~\cite{seidenfeld2010}.
What are the underlying ideas?
A subject is to choose between options~\(\opt\), which are typically uncertain rewards, and which live in a so-called option space~\(\opts\).
Her choices are typically represented using a \emph{rejection function}~\(\rejectfun\) or a \emph{choice function}~\(\choicefun\).
For any finite set~\(\optset\) of options, the subset~\(\rejectfun(\optset)\subseteq\optset\) contains those options that our subject rejects from the option set~\(\optset\), and the remaining options in~\(\choicefun(\optset)\coloneqq\optset\setminus\rejectfun(\optset)\) are the ones that are then still being considered.
It is important to stress here that \(\choicefun(\optset)\) is not necessarily a singleton, so this approach allows for indecision and indifference.
Also, the \emph{binary} choices are the ones where \(\optset\) has only two elements, and I won't be assuming that these binary choices completely determine the behaviour of the functions~\(\rejectfun\) or~\(\choicefun\) on option sets with more than two elements: I'll be considering choice behaviour that is not necessarily binary in nature.

My aim here is to present a theory of \emph{coherent} and \emph{\archim} choice (functions) that is quite general in that it allows for indecision, allows choice to be non-binary, and works with very general option spaces: general Banach spaces that needn't have constants.
This theory comes with a notion of \emph{conservative inference}, which allows us start with simple assessments and use the coherence axioms to check their internal consistency and to infer what they imply.\footnote{I will use `I' and `me' whenever I feel that what is being described is a result of personal choice or inclination. The editorial `we' and `us' will be used to refer to you and me in the common endeavour of working ourselves through this paper.}
It also comes with powerful \emph{representation results}, which show how this type of choice behaviour can be captured in terms of simpler, binary choice models.

For the basic theory of coherent choice (functions) on general linear option spaces but without {\archimty}, I'll rely fairly heavily on earlier work by Jasper De Bock and myself~\cite{debock2018:choice:arxiv,debock2018:choice:smps,debock2019:choice:arxiv,debock2019:choice:isipta}, where we show how to provide such general---but not necessarily {\archim}---choice models with an interpretation that leads to simple coherence (or rationality) axioms, conservative inference, and representation results.
The present paper expands that work to include a discussion of a novel notion of {\archimty}.
Since this more involved approach needs a notion of closeness, I'll need to focus on option spaces that are Banach, but I still want to keep the treatment general enough so as to avoid the need for including constant options.

{\archimty} is typically introduced as some (or other) simplifying assumption for choice models, and its aim is to make sure that the real number scale in some way suffices for representing the complexity of the choice behaviour that they represent.
It typically leads to choice models that involve real functionals of some kind.
Here, I'll introduce and study a version of {\archimty} that will allow both binary and more general choice models to be represented through the intervention of (super)linear bounded real functionals on the option space.

The manifold reasons for working with option spaces that are general linear spaces were discussed at length in earlier papers~\cite{debock2019:choice:arxiv,debock2019:choice:isipta}.
The ideas behind this we first explored by Van Camp~\cite{2017vancamp:phdthesis}.
In summary, doing so allows us to deal with options that are \emph{gambles}, i.e.~bounded real-valued maps on some set of possible states~\(\states\), which are considered as uncertain rewards~\cite{walley1991,troffaes2013:lp}.
These maps constitute a linear space~\(\gblsonstates\), closed under point-wise addition and point-wise multiplication with real numbers.
But working with general linear option spaces also brings in, in one fell swoop, vector-valued gambles~\cite{zaffalon2017:incomplete:preferences,2017vancamp:phdthesis}, polynomial gambles to deal with exchangeability~\cite{cooman2010,vancamp2018:exchangeable:choice}, equivalence classes of gambles to deal with indifference~\cite{vancamp2018:choice:and:indifference}, and abstract gambles defined without an underlying possibility space~\cite{williams2007}.
All these examples come with their own particular version of a natural, or background ordering between options that is always present, even in the absence of any real preferences or beliefs on the part of the subject.
In all these cases, the space of options essentially includes all real constants---or constant gambles.
But there are interesting cases where we also want to consider option spaces that don't include all real constants.

Let me give two examples to indicate why we should want that. 
First of all, in the case of horse lottery options, we consider a finite set~\(\rewards\) of rewards, a set of possible states~\(\states\), and the set~\(\hls\) of all state-dependent probability mass functions~\(\hl\colon\statesandrewards\to\reals\) on rewards, also called \emph{horse lotteries}, where
\begin{equation*}
\hl(x,r)\geq0
\text{ and }
\sum_{s\in\rewards}\hl(x,s)=1
\text{ for all \(x\in\states\) and \(r\in\rewards\).}
\end{equation*}
Horse lotteries are often considered as options between which preferences can be expressed~\cite{anscombe1963,seidenfeld1995,seidenfeld2010,zaffalon2017:incomplete:preferences,2017vancamp:phdthesis}, but they're rather cumbersome to work with, because the set~\(\hls\subseteq\gblsonstatesandrewards\) of all horse lotteries with state space~\(\states\) and reward set~\(\rewards\) doesn't constitute a linear space, but is only closed under convex mixtures.
There are a number of ways to circumvent this problem, which consist in `embedding' the horse lotteries, and the preferences between them, into a suitably chosen linear space of gambles, without adding or losing preference information.

One such approach was suggested by Zaffalon and Miranda \cite{zaffalon2017:incomplete:preferences} for binary preferences, and extended to more general, possibly non-binary, preferences by Van Camp \cite{2017vancamp:phdthesis}.
While mathematically sound, it appears to me to have a slight disadvantage: it requires singling out one reward~\(\worstreward\) in the reward set~\(\rewards\) as special, to reduce the reward set to \(\rewards_\worstreward\coloneqq\rewards\setminus\set{\worstreward}\), and to work with the linear space of gambles \(\gblson{\states\times\rewards_\worstreward}\) as an option space.

It turns out we can get rid of this arguably somewhat arbitrary choice by focusing on another linear subspace of~\(\gblsonstatesandrewards\):
\begin{equation*}
\difs
\coloneqq\linspan(\hls-\hls)
=\cset{\alpha(\hl-\althl)}{\alpha\in\posreals\text{ and }\hl,\althl\in\hls}.
\end{equation*}
For more context and details, I refer to the running example further on, as well as the relevant discussion in Arthur Van Camp's PhD thesis \cite{2017vancamp:phdthesis}.
For now, it is enough to notice that, since \(\sum_{r\in\rewards}\alpha[\hl(\bolleke,r)-\althl(\bolleke,r)]=\alpha(1-1)=0\), \(0\) is the only `real constant' that this linear space of gambles~\(\difs\) contains.
This indicates why, in order to build a theory general enough to incorporate the horse lottery approach without too many restrictions and arbitrary choices, it will help to also pay attention to linear option spaces that don't necessarily include (all) real constants.

A second example is drawn from recent discussions of decision-theoretic uncertainty modelling in quantum mechanics, by Benavoli et al.~\cite{benavoli2016:hermitian}.
In that context, the options are the measurements, which correspond to Hermitian operators on a complex \(n\)-dimensional Hilbert space, and constitute an \(n^2\)-dimensional real linear space without `real constants'. 
The background ordering is related to the notion of positive definiteness for such Hermitian operators.
It turns out that we can meaningfully single out one operator---such as the identity operator---as somehow special, call it the unit constant, and in this way embed the real numbers, or real constants, into the option space as all real multiples of this unit constant.
This simple idea, suitably generalised, will play an instrumental part further on in the discussion.

In order to keep the length of this paper manageable, I've decided to focus on the mathematical developments, and to keep the discussion fairly abstract.
For a detailed exposition of the motivation for and the interpretation of the choice models discussed below, I refer to earlier joint papers by Jasper De Bock and myself~\cite{debock2018:choice:arxiv,debock2018:choice:smps,debock2019:choice:arxiv,debock2019:choice:isipta}.
I also recommend Jasper De Bock's most recent paper on {\archimty}~\cite{debock2020:axiomatic:archimedean:arxiv}, as it contains a persuasive motivation for the new {\archimty} condition I am about to discuss, in the more restrictive and concrete context where options are gambles.

How do I plan to proceed?
I briefly introduce binary choice models on abstract option spaces in Section~\ref{sec:binary:choice}, and extend the discussion to general---not necessarily binary---choice in Section~\ref{sec:non-binary-choice}.
I rely on results in earlier papers to provide an axiomatisation and a conservative inference framework for these choice models, and recall that there are general theorems that allow for representation of general models in terms of binary ones.
After these introductory sections, I focus on adding {\archimty} to the picture.
The basic representation tools that will turn out to be useful in this more restricted context, namely linear and superlinear bounded real functionals, are discussed in Section~\ref{sec:technical-aspects}.
The more classical approach to {\archimty}~\cite{walley1991} for binary choice---which I'll call {\essarchimty}---is given an abstract treatment in Section~\ref{sec:essential:archimedeanity}.
I discuss the simple trick that allows us to deal elegantly with option spaces without constants---namely proclaiming some option to be constant as far as representation is concerned---in Section~\ref{sec:normalisation}.
Sections~\ref{sec:archimedanity:binary} and~\ref{sec:archimedeanity:non-binary} then deal with the new notion of {\archimty} in the cases of binary, and general, preferences, and discuss conservative inference and the representation of general {\archim} choice models in terms of binary essentially {\archim} ones.
I conclude in Section~\ref{sec:conclusions} by stressing the relevance of my findings: they show that the axioms presented here allow for a complete characterisation of several decision methods in the literature, amongst which Levi's  E-admissibility~\cite{levi1980a} and Walley--Sen maximality~\cite{walley1991}.
The axiomatic characterisation of E-admissibility that I present here as an interesting special case, seems to generalise the one given by Seidenfeld et al.~\cite{seidenfeld2010} from horse lotteries to options that live in general Banach spaces, but I believe the axioms and representation results used here have the advantage of being more elegant, transparent and easily interpretable.

Where convenient, and in order to anchor the discussion somewhat, I'll use a simple running example to shed more concrete light on the more abstract ideas and constructions in the main text.

\section{Coherent sets of desirable options}\label{sec:binary:choice}
We begin by considering a linear space~\(\opts\), whose elements~\(\opt\) are called \emph{options}, and which represent the objects that a subject can choose between.
This \emph{option space}~\(\opts\) has some so-called \emph{background ordering}~\(\optgt\), which is `natural' in that we will assume that our subject's choices will always at least respect this ordering, even before she has started reflecting on her preferences.
\emph{This background ordering~\(\optgt\) is taken to be a strict vector ordering on~\(\opt\)}, so an irreflexive and transitive binary relation that is compatible with the addition and scalar multiplication of options.
We'll use the obvious notation~\(\posopts\) for the convex cone~\(\cset{\opt\in\opts}{\opt\optgt0}\) of all options that are strictly preferred to the zero option~\(0\) with respect to this background ordering.

\begin{runningexample}\label{ex:setting:the:stage}
As a simple running example, I suggest we consider flipping a coin with two sides: heads~\(\heads\) and tails~\(\tails\).
We let \(\states\coloneqq\coinstates\) be the set of possible outcomes of the coin flip, and also consider a reward set~\(\rewards\coloneqq\set{\bestreward,\worstreward}\), with a better reward \(\bestreward\) and a worse reward \(\worstreward\).\footnote{This is the simplest reward set possible. It is definitely feasible to develop this and more complicated examples for larger reward sets, but as this running example has to serve as an illustration whose aim is to provide you with some intuition, I've decided to keep it as simple as possible.}
As already suggested in the Introduction, a horse lottery in this context is a map \(\hl\colon\statesandrewards\to\reals\) such that
\begin{equation*}
\hl(\heads,\bestreward)\geq0
\text{ and }
\hl(\tails,\bestreward)\geq0
\text{ and }
\hl(\heads,\worstreward)\geq0
\text{ and }
\hl(\tails,\worstreward)\geq0
\text{ and }
\hl(\bolleke,\bestreward)+\hl(\bolleke,\worstreward)=1,
\end{equation*}
or in other words, it's a probability mass function on the rewards---a so-called \emph{lottery}---that depends on the outcome of the coin flip.
We collect all such horse lotteries in the set~\(\hls\), which is closed under convex combinations.

We use this convex set~\(\hls\) to construct the following linear subspace of the four-dimensional linear space~\(\gblson{\coinstatesandrewards}\) of all gambles on~\(\statesandrewards=\coinstatesandrewards\):
\begin{equation}\label{eq:define:the:difs}
\difs
\coloneqq\linspan(\hls-\hls)
=\cset{\alpha(\hl-\althl)}{\alpha\in\posreals\text{ and }\hl,\althl\in\hls}.
\end{equation}
Observe that for all~\(\dif\in\difs\):
\begin{equation*}
\dif(\bolleke,\bestreward)+\dif(\bolleke,\worstreward)=0,
\end{equation*}
which tells us that \(\dif\) is two-dimensional, and also that if \(d\) is everywhere equal to some real constant~\(c\), then necessarily \(c+c=0\), or in other words, \(c=0\).
The only constant gamble in the linear space~\(\difs\) is therefore the zero gamble.

The option space~\(\difs\) is isomorphic to the linear space~\(\gblson{\coinstates}\) of all gambles on~\(\coinstates\), using the correspondence~\(\dif\mapsto\dif(\bolleke,\bestreward)=-\dif(\bolleke,\worstreward)\).
I'll also use this correspondence in further instalments of this running example, simply because it will allow me to give more direct graphical illustrations of the various notions to be introduced further on.
The alternative option space~\(\gblson{\coinstates}\) does include all real constants, but it has the disadvantage that it singles out one reward~\(\bestreward\) in the reward set as special something which, as we'll see, can be avoided by working with the option space~\(\difs\).
That such a special reward needs to be singled out becomes even more apparent when we use reward sets~\(\rewards\) with a cardinality higher than two. 
I refer to the relevant papers by Zaffalon and Miranda \cite{zaffalon2017:incomplete:preferences}, and Van Camp \cite{2017vancamp:phdthesis}, for more details on this alternative approach to relating choice models on horse lotteries to choice models on gambles.
\stopit
\end{runningexample}

We'll assume that our subject's binary choices between options can be modelled by a so-called \emph{set of desirable options}~\(\desirset\subseteq\opts\), where an option is called \emph{desirable} when the subject strictly prefers it to the zero option.
We'll denote the set of all possible sets of desirable options---all subsets of~\(\opts\)---by~\(\desirsets\).
Of course, a set of desirable options~\(\desirset\) strictly speaking only covers the strict preferences~\(\prefgt\) between options~\(\opt\) and~\(0\): \(\opt\prefgt0\ifandonlyif\opt\in\desirset\).
For other strict preferences, it is assumed that they're compatible with the vector addition of options: \(\opt\prefgt\altopt\ifandonlyif\opt-\altopt\prefgt0\ifandonlyif\opt-\altopt\in\desirset\).

We impose the following rationality requirements on a subject's strict preferences.
A set of desirable options~\(\desirset\in\desirsets\) is called \emph{coherent}~\cite{walley2000,cooman2010,couso2011} if it satisfies the following axioms:
\begin{enumerate}[label=\(\mathrm{D}_{\arabic*}\).,ref=\(\mathrm{D}_{\arabic*}\),leftmargin=*]
\item\label{ax:desirs:nozero} \(0\notin\desirset\);
\item\label{ax:desirs:cone} if \(\opt,\altopt\in\desirset\) and \((\lambda,\mu)>0\), then \(\lambda\opt+\mu\altopt\in\desirset\);
\item\label{ax:desirs:pos} \(\posopts\subseteq\desirset\).
\end{enumerate}
We'll use the notation~\((\lambda,\mu)>0\) to mean that \(\lambda,\mu\) are non-negative real numbers such that \(\lambda+\mu>0\).
We denote the set of all coherent sets of desirable options by~\(\cohdesirsets\).

\begin{runningexample}\label{ex:from:horse:lotteries:to:sets:of:desirable:options}
Let's now go back to our simple coin example.
We'll consider two particular strict vector orderings~\(\gblgt\) and~\(\gblpgt\) as potential background orderings on the linear space~\(\gblson{\coinstates}\):
\begin{align*}
\gbl\gblgt\altgbl
&\ifandonlyif\group{\forall x\in\states}\gbl(x)\geq\altgbl(x)\text{ and }\group{\exists x\in\states}\gbl(x)>\altgbl(x),\\
\gbl\gblpgt\altgbl
&\ifandonlyif\group{\forall x\in\states}\gbl(x)>\altgbl(x).
\end{align*}
By the way, any strict vector ordering~\(\prefgt\) on~\(\gblson{\coinstates}\) induces an `equivalent' strict vector ordering on~\(\difs\), for which we use the same notation~\(\prefgt\), as follows:
\begin{equation*}
\dif\prefgt\altdif\ifandonlyif\dif(\bolleke,\bestreward)\prefgt\altdif(\bolleke,\bestreward),
\text{ for all \(\dif,\altdif\in\difs\)}.
\end{equation*}
In particular, we have that
\begin{equation}\label{eq:define:the:background:orderings}
\left\{
\begin{aligned}
\dif\in\difs_{\gblgt0}
&\ifandonlyif\dif\gblgt0
\ifandonlyif\dif(\heads,\bestreward)\geq0\text{ and }\dif(\tails,\bestreward)\geq0\text{ and }\dif(\heads,\bestreward)+\dif(\tails,\bestreward)>0\\
\dif\in\difs_{\gblpgt0}
&\ifandonlyif\dif\gblpgt0
\ifandonlyif\dif(\heads,\bestreward)>0\text{ and }\dif(\tails,\bestreward)>0.
\end{aligned}
\right.
\end{equation}

The relationship~\eqref{eq:define:the:difs} between~\(\difs\) and~\(\hls\) provides an intuitive justification for introducing the following relationship between such a strict vector ordering~\(\prefgt\) on~\(\difs\) and a strict partial ordering~\(\hlprefgt\) on~\(\hls\):
\begin{equation}\label{eq:define:the:ordering:on:difs}
\alpha(\hl-\althl)\prefgt0\ifandonlyif\hl\hlprefgt\althl
\text{ for all \(\hl,\althl\in\hls\)}.
\end{equation}
Observe, by the way, that defining a correspondence in this manner makes sense---is consistent---if and only if the strict partial ordering~\(\hlprefgt\) on~\(\hls\) satisfies the so-called \emph{mixture independence} condition \cite{seidenfeld1995,seidenfeld2010,nau2006,aumann1962} for horse lotteries:
\begin{equation}\label{eq:mixture:independence}
\hl\hlprefgt\althl\ifandonlyif\alpha\hl+(1-\alpha)\althltoo\hlprefgt\alpha\althl+(1-\alpha)\althltoo
\text{ for all \(\hl,\althl,\althltoo\in\hls\) and all \(\alpha\in(0,1]\)}.
\end{equation}
The two candidate background orderings~\(\hlsgt\) and~\(\hlpgt\) on~\(\hls\) that correspond to the respective background orderings~\(\gblgt\) and~\(\gblpgt\) on~\(\difs\)---and on~\(\gblson{\coinstates}\)---are then given by:
\begin{equation*}
\left.
\begin{aligned}
\hl\hlsgt\althl\ifandonlyif\hl(\bolleke,\bestreward)\gblgt\althl(\bolleke,\bestreward)&\\
\hl\hlpgt\althl\ifandonlyif\hl(\bolleke,\bestreward)\gblpgt\althl(\bolleke,\bestreward)&
\end{aligned}
\right\}
\text{ for all \(\hl,\althl\in\hls\)}.
\end{equation*}
\par
If we start out with a strict partial order~\(\hlprefgt\) on~\(\hls\) that satisfies the mixture independence condition~\eqref{eq:mixture:independence}, define the corresponding strict preference ordering~\(\prefgt\) on the option space~\(\difs\) using the correspondence~\eqref{eq:define:the:ordering:on:difs}, and then construct the corresponding set of desirable options~\(\desirset\) by letting
\begin{equation*}
\desirset
\coloneqq\cset{\dif\in\difs}{\dif\prefgt0}
=\cset{\alpha(\hl-\althl)}{\alpha\in\posreals\text{ and }\hl\hlprefgt\althl},
\end{equation*}
then this~\(\desirset\) automatically satisfies the coherence requirements~\ref{ax:desirs:nozero} and~\ref{ax:desirs:cone}.
For~\ref{ax:desirs:pos}, it suffices to require that~\({\hlsgt}\subseteq{\hlprefgt}\) or~\({\hlpgt}\subseteq{\hlprefgt}\), depending on the actual choice of the background ordering.
An option \(\dif\in\difs\) is then considered to be desirable by our subject when \(\dif\in\desirset\), or equivalently, when~\(\dif\prefgt0\).
This can also---equivalently---be taken to mean that the subject considers the uncertain reward \(\dif(\bolleke,\bestreward)\), expressed in units (utiles) of a linear utility scale, as strictly preferable to the zero reward.
The gamble, or uncertain reward, \(\dif(\bolleke,\bestreward)\) yields \(\dif(\heads,\bestreward)\) utiles when the coin flip results in~\(\heads\), and \(\dif(\tails,\bestreward)\) when it results in~\(\tails\).

We conclude that working with strict partial orderings on the set of horse lotteries~\(\hls\) that satisfy the mixture independence condition and that respect some background ordering amounts to working with coherent sets of desirable options in the option space~\(\difs\).
This idea was already explored, in a more generic context, by Van Camp \cite{2017vancamp:phdthesis}. 

In this running example, I'll therefore shift the focus towards modelling choice using the latter model.
In Figure~\ref{fig:some:heads:and:tails}, I depict a few coherent sets of desirable options in the option space~\(\difs\), with background orderings~\(\gblgt\) and~\(\gblpgt\), using the representations of their elements~\(\dif\) as uncertain rewards~\(\dif(\bolleke,\bestreward)\)---or gambles---in~\(\gbls{\coinstates}\).
\stopit
\end{runningexample}

\begin{figure}[h]
\centering
\begin{tikzpicture}[scale=.285]\footnotesize
\fill[blue!50] (0,4) -- (4,4) -- (4,0) -- (0,0) -- cycle;
\draw[gray,->] (-2.2,0) -- (4.2,0) node[below right] {\(\heads\)};
\draw[gray,->] (0,-2.2) -- (0,4.2) node[above left] {\(\tails\)};
\draw[blue,semithick] (0,0) -- (0,4);
\draw[blue,semithick] (0,0) -- (4,0);
\node[draw=blue,fill=white,circle,inner sep=1pt] at (0,0) {};
\node[white] at (2,2) {\(\difs_{\gblgt0}\)};
\end{tikzpicture}
\quad
\begin{tikzpicture}[scale=.285]\footnotesize
\fill[blue!50] (0,4) -- (4,4) -- (4,0) -- (0,0) -- cycle;
\draw[gray,->] (-2.2,0) -- (4.2,0) node[below right] {\(\heads\)};
\draw[gray,->] (0,-2.2) -- (0,4.2) node[above left] {\(\tails\)};
\draw[blue,densely dotted,thick] (0,0) -- (4,0);
\draw[blue,densely dotted,thick] (0,0) -- (0,4);
\node[draw=blue,fill=white,circle,inner sep=1pt] at (0,0) {};
\node[white] at (2,2) {\(\difs_{\gblpgt0}\)};
\end{tikzpicture}
\quad
\begin{tikzpicture}[scale=.285]\footnotesize
\fill[blue!50] (4,4) -- (4,-1) -- (0,0) -- (-2,4) -- cycle;
\draw[gray,->] (-2.2,0) -- (4.2,0) node[below right] {\(\heads\)};
\draw[gray,->] (0,-2.2) -- (0,4.2) node[above left] {\(\tails\)};
\draw[blue,semithick] (0,0) -- (-2,4);
\draw[blue,semithick] (0,0) -- (4,-1);
\node[draw=blue,fill=white,circle,inner sep=1pt] at (0,0) {};
\end{tikzpicture}
\quad
\begin{tikzpicture}[scale=.285]\footnotesize
\fill[blue!50] (4,4) -- (4,-2) -- (0,0) -- (-1,4) -- cycle;
\draw[gray,->] (-2.2,0) -- (4.2,0) node[below right] {\(\heads\)};
\draw[gray,->] (0,-2.2) -- (0,4.2) node[above left] {\(\tails\)};
\draw[blue,densely dotted,thick] (0,0) -- (-1,4);
\draw[blue,densely dotted,thick] (0,0) -- (4,-2);
\node[draw=blue,fill=white,circle,inner sep=1pt] at (0,0) {};
\end{tikzpicture}
\quad
\begin{tikzpicture}[scale=.285]\footnotesize
\fill[blue!50] (4,4) -- (4,-2) -- (1,-2) -- (0,0) -- (-2,4) -- cycle;
\draw[gray,->] (-2.2,0) -- (4.2,0) node[below right] {\(\heads\)};
\draw[gray,->] (0,-2.2) -- (0,4.2) node[above left] {\(\tails\)};
\draw[blue,densely dotted,thick] (0,0) -- (-2,4);
\draw[blue,semithick] (0,0) -- (1,-2);
\node[draw=blue,fill=white,circle,inner sep=1pt] at (0,0) {};
\end{tikzpicture}
\caption{Representations (in blue) of a number of sets of desirable options that are coherent under both background orderings~\(\gblgt\) and~\(\gblpgt\). The two plots on the left represent the `positive' cones~\(\difs_{\gblgt0}\) and~\(\difs_{\gblpgt0}\) of options that are strictly preferred to the zero option under the respective background orderings~\(\gblgt\) and~\(\gblpgt\). The rightmost plot represents a total set of desirable options.
Each of these sets of desirable options~\(\desirset\) is graphically represented by the values~\(\dif(\bolleke,\bestreward)\) that its elements~\(\dif\in\desirset\) assume in the reward~\(\bestreward\). Full blue lines indicate `borders' that are included in the set, dotted lines represent `borders' that aren't.}
\label{fig:some:heads:and:tails}
\end{figure}

It is easy to see that the set~\(\cohdesirsets\) of all coherent sets of desirable options is an intersection structure: for any non-empty family of sets of desirable options~\(\desirset[i]\in\cohdesirsets\), \(i\in I\), its intersection~\(\bigcap_{i\in I}\desirset[i]\) also belongs to~\(\cohdesirsets\).
This also implies that we can introduce a \emph{coherent closure} operator~\(\natexdesirset\colon\desirsets\to\cohdesirsets\cup\set{\opts}\) by letting
\begin{equation*}
\natexdesirset(\assessment)
\coloneqq\bigcap\cset{\desirset\in\cohdesirsets}{\assessment\subseteq\desirset}
\text{ for all \(\assessment\subseteq\opts\)}
\end{equation*}
be the smallest---if any---coherent set of desirable options that includes~\(\assessment\).
Such an~\(\assessment\) typically represents a so-called \emph{assessment}: a not necessarily exhaustive collection of options that a subject states to be desirable.

We call such an assessment~\(\assessment\subseteq\opts\) \emph{consistent} if \(\natexdesirset(\assessment)\neq\opts\), or equivalently, if \(\assessment\) is included in some coherent set of desirable options.
The closure operator~\(\natexdesirset\) implements \emph{conservative inference} with respect to the coherence axioms, in that it extends a consistent assessment~\(\assessment\) to the most conservative---smallest possible---coherent set of desirable options~\(\natexdesirset(\assessment)\) that includes it.

It is clear from this discussion that the so-called \emph{vacuous set of desirable options}~\(\posopts\) is the smallest element of \(\cohdesirsets\) with respect to set inclusion, and that therefore \(\natexdesirset(\emptyset)=\posopts\): it corresponds to making no assessment at all, whence, of course, its name.

A coherent set of desirable options~\(\maxdesirset\) is called \emph{maximal} if none of its strict supersets is coherent: \((\forall\desirset\in\cohdesirsets)(\maxdesirset\subseteq\desirset\then\maxdesirset=\desirset)\).
This turns out to be equivalent to the following so-called \emph{totality} condition on~\(\maxdesirset\)~\cite{cooman2010,couso2011}:
\begin{enumerate}[label=\(\mathrm{D}_{\mathrm{T}}\).,ref=\(\mathrm{D}_{\mathrm{T}}\),leftmargin=*]
\item\label{ax:desirs:totality} for all~\(\opt\in\opts\setminus\set{0}\), either~\(\opt\in\maxdesirset\) or~\(-\opt\in\maxdesirset\).
\end{enumerate}
The set of all maximal sets of desirable options is denoted by~\(\totdesirsets\).
These maximal elements can be used to represent all coherent sets of desirable options via intersection.

\begin{theorem}[Closure {\protect\cite{cooman2010}}]\label{theo:coherent:representation:desirsets}
Consider any~\(\desirset\in\desirsets\), then \(\natexdesirset(\desirset)=\bigcap\cset{\maxdesirset\in\totdesirsets}{\desirset\subseteq\maxdesirset}\).
Hence, \(\desirset\) is consistent if and only if \(\cset{\maxdesirset\in\totdesirsets}{\desirset\subseteq\maxdesirset}\neq\emptyset\).
And a consistent~\(\desirset\) is coherent if and only if \(\desirset=\bigcap\cset{\maxdesirset\in\totdesirsets}{\desirset\subseteq\maxdesirset}\).
\end{theorem}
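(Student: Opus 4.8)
The plan is to prove the displayed identity first and then read the two ``hence'' statements off it, using that \(\cohdesirsets\) is an intersection structure. Throughout I abbreviate \(\mathcal{M}\coloneqq\cset{\maxdesirset\in\totdesirsets}{\desirset\subseteq\maxdesirset}\), and I use that, as already noted, maximality of a coherent set is equivalent to the totality condition~\ref{ax:desirs:totality}; consequently every \(\maxdesirset\in\totdesirsets\) contains, for each option \(\opt\neq0\), \emph{exactly} one of \(\opt\) and \(-\opt\)---at least one by~\ref{ax:desirs:totality}, and at most one, since \(\opt,-\opt\in\maxdesirset\) would force \(0=\opt+(-\opt)\in\maxdesirset\) by~\ref{ax:desirs:cone}, contradicting~\ref{ax:desirs:nozero}.

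The one genuine tool I would isolate is an \emph{extension lemma}: every coherent set of desirable options is included in some maximal one. I would obtain this from Zorn's lemma applied to the coherent supersets of a given coherent set, ordered by inclusion; the only thing to verify is that the union of a chain of coherent sets is again coherent, which is routine, since \(0\) stays out and any positive combination of two members already lives in the larger of the two chain elements producing them. A maximal element of this family is then maximal in all of \(\cohdesirsets\), hence total.

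With this in hand, I would split the identity on consistency. If \(\desirset\) is inconsistent then \(\natexdesirset(\desirset)=\opts\) and \(\mathcal{M}=\emptyset\), so both sides equal \(\opts\) under the convention that an empty intersection is the whole space. If \(\desirset\) is consistent then \(\natexdesirset(\desirset)\) is coherent, and the inclusion \(\natexdesirset(\desirset)\subseteq\bigcap\mathcal{M}\) is immediate, because each \(\maxdesirset\in\mathcal{M}\) is one of the coherent supersets over which \(\natexdesirset(\desirset)\) is the intersection. The reverse inclusion \(\bigcap\mathcal{M}\subseteq\natexdesirset(\desirset)\) is the heart of the matter, and I would prove its contrapositive: given \(\opt\notin\natexdesirset(\desirset)\), exhibit some \(\maxdesirset\in\mathcal{M}\) with \(\opt\notin\maxdesirset\). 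The case \(\opt=0\) is trivial, as \(0\) lies in no coherent set. For \(\opt\neq0\) the crux is the sub-claim that \(\desirset[1]\coloneqq\natexdesirset(\desirset\cup\set{-\opt})\) is again consistent and still avoids \(\opt\); both parts follow from the concrete description of the closure as the positive hull \(\posi(\assessment\cup\posopts)\) of a consistent assessment (readily checked from~\ref{ax:desirs:nozero}--\ref{ax:desirs:pos}). Indeed, a positive combination of elements of \(\natexdesirset(\desirset)\cup\set{-\opt}\) that equals \(0\) would witness inconsistency, and one that equals \(\opt\) would witness \(\opt\in\desirset[1]\); in either case, if the combination does not involve \(-\opt\) it already places \(0\) or \(\opt\) in \(\natexdesirset(\desirset)\), and if it does involve \(-\opt\) one solves for \(\opt\) to the same effect---each outcome contradicting~\ref{ax:desirs:nozero} or the hypothesis \(\opt\notin\natexdesirset(\desirset)\). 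Granting the sub-claim, \(\desirset[1]\) is coherent, so the extension lemma provides a total \(\maxdesirset\supseteq\desirset[1]\supseteq\desirset\); as \(-\opt\in\maxdesirset\) and total sets contain exactly one of \(\opt,-\opt\), indeed \(\opt\notin\maxdesirset\).

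The two consequences then fall out. Since every \(\maxdesirset\in\mathcal{M}\) is coherent and hence proper, \(\bigcap\mathcal{M}=\opts\) precisely when \(\mathcal{M}=\emptyset\), so the identity gives that \(\desirset\) is consistent iff \(\natexdesirset(\desirset)\neq\opts\) iff \(\mathcal{M}\neq\emptyset\). For a consistent \(\desirset\): if it is coherent then \(\natexdesirset(\desirset)=\desirset\), and the identity yields \(\desirset=\bigcap\mathcal{M}\); conversely, if \(\desirset=\bigcap\mathcal{M}\) with \(\mathcal{M}\neq\emptyset\), then \(\desirset\) is an intersection of coherent sets and hence coherent, because \(\cohdesirsets\) is an intersection structure. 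I expect the separation sub-claim in the third paragraph to be the only real obstacle; everything else is bookkeeping with Zorn's lemma and the intersection-structure property.
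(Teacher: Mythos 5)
Your proposal is correct. Note that the paper itself offers no proof of this theorem---it is imported from~\cite{cooman2010}---and your argument (Zorn's lemma to extend any coherent set to a maximal, hence total, one; the identification \(\natexdesirset(\assessment)=\posi(\assessment\cup\posopts)\) for consistent~\(\assessment\); and the key sub-claim that \(\natexdesirset(\desirset\cup\set{-\opt})\) remains consistent whenever \(\opt\notin\natexdesirset(\desirset)\) and \(\opt\neq0\), so that a total extension containing \(-\opt\) must exclude \(\opt\)) is essentially the standard proof given in that reference, so there is no divergence between your route and the source the paper relies on.
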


\begin{corollary}[Representation]\label{cor:coherent:representation:desirsets}
A set of desirable options~\(\desirset\in\desirsets\) is coherent if and only if there is some non-empty~\(\setofdesirsets\subseteq\totdesirsets\) such that \(\desirset=\bigcap\cset{\maxdesirset}{\maxdesirset\in\setofdesirsets}\).
In that case, the largest such set~\(\setofdesirsets\) is \(\cset{\maxdesirset\in\totdesirsets}{\desirset\subseteq\maxdesirset}\).
\end{corollary}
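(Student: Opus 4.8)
The plan is to derive everything from the Closure Theorem~\ref{theo:coherent:representation:desirsets}, together with the two structural facts already recorded above: that \(\cohdesirsets\) is an intersection structure, and that every maximal set of desirable options is in particular coherent, so \(\totdesirsets\subseteq\cohdesirsets\). Throughout I would write \(\mathcal{M}(\desirset)\coloneqq\cset{\maxdesirset\in\totdesirsets}{\desirset\subseteq\maxdesirset}\) for the family appearing in the statement, so that the corollary becomes a matter of repackaging the theorem.

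For the `only if' direction, I would start from a coherent \(\desirset\). Coherence forces \(0\notin\desirset\), hence \(\desirset\neq\opts\), i.e.\ \(\desirset\) is consistent; the Closure Theorem then gives \(\mathcal{M}(\desirset)\neq\emptyset\), and its final clause yields \(\desirset=\bigcap\mathcal{M}(\desirset)\). Taking \(\setofdesirsets\coloneqq\mathcal{M}(\desirset)\) therefore supplies a non-empty family of maximal sets whose intersection is \(\desirset\), exactly as required.

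For the `if' direction, suppose \(\desirset=\bigcap\cset{\maxdesirset}{\maxdesirset\in\setofdesirsets}\) for some non-empty \(\setofdesirsets\subseteq\totdesirsets\). Since \(\totdesirsets\subseteq\cohdesirsets\), this exhibits \(\desirset\) as the intersection of a non-empty family of coherent sets of desirable options, and coherence of \(\desirset\) then follows at once because \(\cohdesirsets\) is an intersection structure.

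The only remaining point is the `largest such set' claim, which I expect to be the mildest of the obstacles rather than a genuine difficulty. The family \(\mathcal{M}(\desirset)\) is itself a witness, by the `only if' step. To see it is the largest, I would let \(\setofdesirsets\) be any non-empty family with \(\desirset=\bigcap\cset{\maxdesirset}{\maxdesirset\in\setofdesirsets}\); then for every \(\maxdesirset\in\setofdesirsets\) we have \(\desirset\subseteq\maxdesirset\), so \(\maxdesirset\in\mathcal{M}(\desirset)\), whence \(\setofdesirsets\subseteq\mathcal{M}(\desirset)\). Thus \(\mathcal{M}(\desirset)\) contains every admissible family and is the largest among them.
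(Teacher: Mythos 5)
Your proof is correct and matches the intended derivation: the paper states this corollary without a separate proof, as an immediate repackaging of Theorem~\ref{theo:coherent:representation:desirsets} together with the two facts you invoke, namely \(\totdesirsets\subseteq\cohdesirsets\) and that \(\cohdesirsets\) is an intersection structure. One small imprecision in your `only if' step: consistency of \(\desirset\) is by definition \(\natexdesirset(\desirset)\neq\opts\) (equivalently, \(\desirset\) is included in \emph{some} coherent set of desirable options), not \(\desirset\neq\opts\) --- for instance \(\set{0}\neq\opts\) is inconsistent --- but for your coherent \(\desirset\) the needed conclusion is immediate anyway, since \(\desirset\) is included in a coherent set, namely itself, so the argument stands after this one-line repair.
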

\noindent
For more details on these issues, and more `constructive' expressions for~\(\natexdesirset\), see~\cite{cooman2010,debock2019:choice:isipta,debock2019:choice:arxiv}.

I also want to mention another, additional, rationality property, central in Teddy Seidenfeld's work~\cite{seidenfeld1995,seidenfeld2010}, but introduced there in a form more appropriate for strict preferences between horse lotteries.
We can get to the appropriate counterpart here when we introduce the \(\posi\group{\bolleke}\) operator, which, for any subset~\(V\) of~\(\opts\), returns the set of all positive linear combinations of its elements:
\begin{equation*}
\posi(V)\coloneqq\cset[\bigg]{\sum_{k=1}^n\lambda_k\opt[k]}{n\in\naturals,\lambda_k\in\posreals,\opt[k]\in V}.
\end{equation*}
We call a set of desirable options~\(\desirset\in\desirsets\) \emph{mixing} if it's coherent and satisfies the following mixingness axiom~\cite{debock2019:choice:isipta,debock2019:choice:arxiv}:\footnote{The term `mixing' goes back to Seidenfeld's work~\cite{seidenfeld1995,seidenfeld2010} and is appropriate when the focus is on choosing between options. As is apparent from Axiom~\ref{ax:desirs:mixing}, in our context---which focuses on rejecting options rather than choosing them (see the discussion following Theorem~\ref{theo:mixingrepresentation:twosided} below)---the term `unmixing' would be more suitable, as the property it describes allows us to infer something about the desirability of unmixed options from the desirability of some of their mixtures.}
\begin{enumerate}[label=\(\mathrm{D}_{\mathrm{M}}\).,ref=\(\mathrm{D}_{\mathrm{M}}\),leftmargin=*]
\item\label{ax:desirs:mixing} for all finite subsets~\(\optset\) of~\(\opts\), if \(\posi(\optset)\cap\desirset\neq\emptyset\), then also \(\optset\cap\desirset\neq\emptyset\).
\end{enumerate}
We denote the set of all mixing sets of desirable options by~\(\mixdesirsets\).
It does not necessarily constitute an intersection structure, and therefore does not come with a simple conservative inference apparatus.
Its elements can be characterised as follows.

\begin{proposition}[{\protect\cite{2017vancamp:phdthesis,2018vancamp:lexicographic}}]\label{prop:mixingequalslexicographic}
Consider any set of desirable options~\(\desirset\in\cohdesirsets\) and let \(\co{\desirset}\coloneqq\opts\setminus\desirset\).
Then \(\desirset\) is mixing if and only if\/ \(\posi(\co{\desirset})=\co{\desirset}\), or equivalently, \(\desirset\cap\posi(\co{\desirset})=\emptyset\).
\end{proposition}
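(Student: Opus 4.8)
The plan is to unpack the mixing axiom by contraposition and match it against the set-theoretic identity \(\posi(\co{\desirset})=\co{\desirset}\); coherence of \(\desirset\) will enter only through the definition of mixing. I would begin with the easy equivalence of the two right-hand conditions. Since \(\opts=\desirset\sqcup\co{\desirset}\), the identity \(\desirset\cap\posi(\co{\desirset})=\emptyset\) says precisely that \(\posi(\co{\desirset})\subseteq\co{\desirset}\). On the other hand, taking \(n=1\) and \(\lambda_1=1\) in the definition of \(\posi\) shows that \(\co{\desirset}\subseteq\posi(\co{\desirset})\) always holds, so \(\posi(\co{\desirset})\subseteq\co{\desirset}\) is the same as \(\posi(\co{\desirset})=\co{\desirset}\). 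This settles the ``or equivalently'' clause.

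For the main equivalence, I would rewrite Axiom~\ref{ax:desirs:mixing} contrapositively. The axiom states that for every finite \(\optset\subseteq\opts\), \(\posi(\optset)\cap\desirset\neq\emptyset\) implies \(\optset\cap\desirset\neq\emptyset\); the contrapositive of the inner implication reads: for every finite \(\optset\subseteq\opts\), \(\optset\subseteq\co{\desirset}\) implies \(\posi(\optset)\subseteq\co{\desirset}\), using that \(\optset\cap\desirset=\emptyset\iff\optset\subseteq\co{\desirset}\) and likewise for \(\posi(\optset)\). Hence \(\desirset\) satisfies the mixing axiom if and only if \(\posi(\optset)\subseteq\co{\desirset}\) for every finite \(\optset\subseteq\co{\desirset}\).

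It then remains to see that this finitary statement is equivalent to \(\posi(\co{\desirset})\subseteq\co{\desirset}\). This is the observation that every element of \(\posi(\co{\desirset})\) is a positive linear combination of finitely many elements of \(\co{\desirset}\), hence lies in \(\posi(\optset)\) for the finite set \(\optset\) consisting of those elements; conversely \(\posi\) is monotone, so \(\posi(\optset)\subseteq\posi(\co{\desirset})\) whenever \(\optset\subseteq\co{\desirset}\). Combining this with the first paragraph yields the chain ``\(\desirset\) mixing \(\iff\) \(\posi(\co{\desirset})=\co{\desirset}\) \(\iff\) \(\desirset\cap\posi(\co{\desirset})=\emptyset\)'', where the hypothesis \(\desirset\in\cohdesirsets\) is used only to identify ``mixing'' with ``satisfies~\ref{ax:desirs:mixing}''.

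Since the argument is essentially a contrapositive reformulation together with a trivial inclusion, there is no serious obstacle. The only point needing (minor) care is the reduction of the infinitary condition on \(\posi(\co{\desirset})\) to the finitely-supported instances quantified over in the mixing axiom, i.e. the finite-support property of positive linear combinations; this is what lets the finite-set axiom~\ref{ax:desirs:mixing} capture a statement about the possibly infinite set \(\co{\desirset}\).
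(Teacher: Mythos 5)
Your proof is correct. Note that the paper itself does not prove Proposition~\ref{prop:mixingequalslexicographic}---it imports the result from the cited work of Van Camp et al.---so there is no in-paper argument to compare against; your contrapositive reformulation of Axiom~\ref{ax:desirs:mixing}, combined with the observation that every element of \(\posi(\co{\desirset})\) is a positive linear combination of \emph{finitely} many elements of \(\co{\desirset}\) (so the finite-set axiom suffices) and the trivial inclusion \(\co{\desirset}\subseteq\posi(\co{\desirset})\), is precisely the standard and complete way to establish the equivalence.
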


When the option space is a set of gambles on a finite state space, provided with the point-wise partial ordering and the corresponding strict vector ordering as a background ordering, these mixing sets of desirable options correspond to the so-called \emph{lexicographic} sets of desirable options introduced by Van Camp et al.~\cite{2017vancamp:phdthesis,2018vancamp:lexicographic}.
The name derives from the fact that they can be associated with lexicographic probability orderings; for more details, see also~\cite{2017vancamp:phdthesis,2018vancamp:lexicographic,debock2019:choice:isipta,debock2019:choice:arxiv}.

\begin{runningexample}\label{ex:assessment:identical:sides}
In our coin example, we consider an assessment
\begin{equation*}
\assessment_\heads\coloneqq\cset[\big]{(\indset{\heads}-\alpha\indset{\tails})\unitdif}{\alpha\in\nonnegreals}
\subseteq\difs,
\end{equation*}
where we consider the map~\(\unitdif\colon\coinstatesandrewards\to\reals\) with \(\unitdif(\bolleke,\bestreward)=-\unitdif(\bolleke,\worstreward)\coloneqq1\), and the so-called \emph{indicators}~\(\indset{\heads}\) and~\(\indset{\tails}\) defined by
\begin{equation*}
\indset{\heads}(x)\coloneqq
\begin{cases}
1&\text{ if \(x=\heads\)}\\
0&\text{ if \(x=\tails\)}
\end{cases}
\quad\text{and}\quad
\indset{\tails}(x)\coloneqq
\begin{cases}
1&\text{ if \(x=\tails\)}\\
0&\text{ if \(x=\heads\)}.
\end{cases}
\end{equation*}
Check that \(\unitdif\in\difs_{\gblgt0}\) and \(\unitdif\in\difs_{\gblpgt0}\).
As indicated above, if our subject makes this assessment, this means that she will accept gambles of the type~\(\indset{\heads}-\alpha\indset{\tails}\) on the outcome of the coin flip, or in other words, accept bets on the outcome being heads~\(\heads\) at all odds \(\alpha\in\nonnegreals\) against, expressing that she is \emph{practically certain} \cite{walley1991} that the outcome will be heads.
This could correspond to her knowing that the coin has two equal sides, both heads.
Similarly, the assessment
\begin{equation*}
\assessment_\tails\coloneqq\cset[\big]{(\indset{\tails}-\alpha\indset{\heads})\unitdif}{\alpha\in\nonnegreals}
\subseteq\difs
\end{equation*}
expresses that our subject is practically certain that the outcome will be tails, which could derive from her knowing that the coin has two equal sides, both tails.

It is not difficult to see that both assessments are consistent, under each of the background orderings~\(\gblgt\) and~\(\gblpgt\).
The corresponding coherent closures \(\desirset[\heads]\coloneqq\natexdesirset\group{\assessment_\heads}\) and \(\desirset[\tails]\coloneqq\natexdesirset\group{\assessment_\tails}\) with respect to the background orderings~\(\gblgt\) and~\(\gblpgt\) are depicted in Figure~\ref{fig:twice:heads:and:tails}.
They are the smallest convex cones that include the assessments~\(\assessment_\heads\) and~\(\assessment_\tails\), and the background cones~\(\difs_{\gblgt0}\) and~\(\difs_{\gblpgt0}\), respectively.
These coherent sets of desirable options are clearly also mixing, because their complements are convex cones; see Proposition~\ref{prop:mixingequalslexicographic}.
\stopit
\end{runningexample}

\begin{figure}[h]
\centering
\begin{tikzpicture}[scale=.295]\footnotesize
\fill[blue!50] (0,4) -- (4,4) -- (4,-4) -- (0,-4) -- cycle;
\draw[gray,->] (-4.2,0) -- (4.2,0) node[below right] {\(\heads\)};
\draw[gray,->] (0,-4.2) -- (0,4.2) node[above left] {\(\tails\)};
\draw[blue,semithick] (0,0) -- (0,4);
\draw[blue,densely dotted,thick] (0,0) -- (0,-4);
\node[draw=blue,fill=white,circle,inner sep=1pt] at (0,0) {};
\draw[red,thick] (1,0) -- node[midway,right] {\(\assessment_\heads\)} (1,-4);
\node[fill=red,circle,inner sep=1pt] at (1,0) {};
\node[white] at (2,2) {\(\desirset[\heads]\)};
\end{tikzpicture}
\quad
\begin{tikzpicture}[scale=.295]\footnotesize
\fill[blue!50] (4,0) -- (4,4) -- (-4,4) -- (-4,0) -- cycle;
\draw[gray,->] (-4.2,0) -- (4.2,0) node[below right] {\(\heads\)};
\draw[gray,->] (0,-4.2) -- (0,4.2) node[above left] {\(\tails\)};
\draw[blue,semithick] (0,0) -- (4,0);
\draw[blue,densely dotted,thick] (0,0) -- (-4,0);
\node[draw=blue,fill=white,circle,inner sep=1pt] at (0,0) {};
\draw[red,thick] (0,1) -- node[midway,above] {\(\assessment_\tails\)} (-4,1);
\node[fill=red,circle,inner sep=1pt] at (0,1) {};
\node[white] at (2,2) {\(\desirset[\tails]\)};
\end{tikzpicture}
\quad
\begin{tikzpicture}[scale=.295]\footnotesize
\fill[blue!50] (0,4) -- (4,4) -- (4,-4) -- (0,-4) -- cycle;
\draw[gray,->] (-4.2,0) -- (4.2,0) node[below right] {\(\heads\)};
\draw[gray,->] (0,-4.2) -- (0,4.2) node[above left] {\(\tails\)};
\draw[blue,densely dotted,thick] (0,0) -- (0,4);
\draw[blue,densely dotted,thick] (0,0) -- (0,-4);
\node[draw=blue,fill=white,circle,inner sep=1pt] at (0,0) {};
\draw[red,thick] (1,0) -- node[midway,right] {\(\assessment_\heads\)} (1,-4);
\node[fill=red,circle,inner sep=1pt] at (1,0) {};
\node[white] at (2,2) {\(\desirset[\heads]\)};
\end{tikzpicture}
\quad
\begin{tikzpicture}[scale=.295]\footnotesize
\fill[blue!50] (4,0) -- (4,4) -- (-4,4) -- (-4,0) -- cycle;
\draw[gray,->] (-4.2,0) -- (4.2,0) node[below right] {\(\heads\)};
\draw[gray,->] (0,-4.2) -- (0,4.2) node[above left] {\(\tails\)};
\draw[blue,densely dotted,thick] (0,0) -- (4,0);
\draw[blue,densely dotted,thick] (0,0) -- (-4,0);
\node[draw=blue,fill=white,circle,inner sep=1pt] at (0,0) {};
\draw[red,thick] (0,1) -- node[midway,above] {\(\assessment_\tails\)} (-4,1);
\node[fill=red,circle,inner sep=1pt] at (0,1) {};
\node[white] at (2,2) {\(\desirset[\tails]\)};
\end{tikzpicture}
\caption{Representations (in blue) of the coherent sets of desirable options~\(\desirset[\heads]\) and~\(\desirset[\tails]\) under the background ordering~\(\gblgt\) in the leftmost two plots, and the background ordering~\(\gblpgt\) in the rightmost two. Each set of desirable options~\(\desirset\) is graphically represented by the values~\(\dif(\bolleke,\bestreward)\) that its elements~\(\dif\in\desirset\) assume in the reward~\(\bestreward\). Full blue lines indicate `borders' that are included in the set, and dotted lines represent `borders' that aren't.}
\label{fig:twice:heads:and:tails}
\end{figure}

\begin{runningexample}\label{ex:assessment:lower:upper:betting:rates}
We also consider another type of assessment in our coin example:
\begin{equation*}
\assessment_I
\coloneqq\cset[\big]{(\indset{\heads}-\lp+\epsilon)\unitdif}{\epsilon\in\posreals}
\cup\cset[\big]{(\up+\epsilon-\indset{\heads})\unitdif}{\epsilon\in\posreals},
\end{equation*}
where \(0\leq\lp\leq\up\leq1\) and we let \(I\coloneqq\lup\).
Our subject uses this assessment to express that \(\lp\) is her supremum acceptable rate for betting on heads~\(\heads\), and \(1-\up\) her supremum acceptable rate for betting on tails~\(\tails\).
As a particular special case, for any \(p\in\unit\), the assessment
\begin{equation*}
\assessment_p
\coloneqq\assessment_{[p,p]}
=\cset[\big]{[\pm(\indset{\heads}-p)+\epsilon]\unitdif}{\epsilon\in\posreals},
\end{equation*}
expresses that \(p\) is our subject's fair (two-sided) betting rate for heads~\(\heads\).
We also define the maps \(\ex[p],\lex[I]\colon\gblson{\coinstates}\to\reals\) by
\begin{equation}\label{eq:coin:precise:expectation}
\ex[p](f)\coloneqq pf(\heads)+(1-p)f(\tails)
\text{ for all \(f\in\gblson{\coinstates}\)},
\end{equation}
and
\begin{equation}\label{eq:coin:imprecise:expectation}
\lex[I](f)
\coloneqq\min_{p\in I}\ex[p](f)
=\begin{cases}
\ex[\lp](f)&\text{ if \(f(\heads)\geq f(\tails)\)}\\
\ex[\up](f)&\text{ if \(f(\heads)\leq f(\tails)\)}
\end{cases}
\text{ for all \(f\in\gblson{\coinstates}\)}.
\end{equation}

Let's first consider the case that \(0<\lp\leq\up<1\).
It is then fairly easy to see that for the coherent closure~\(\desirset[I]\coloneqq\natexdesirset(\assessment_I)\) of the assessment~\(\assessment_I\),
\begin{equation}\label{eq:coin:imprecise:desirset}
\dif\in\desirset[I]\ifandonlyif\lex[I]\group{\dif(\bolleke,\bestreward)}>0
\text{ for all \(\dif\in\difs\)},
\end{equation}
under both background orderings~\(\gblgt\) and~\(\gblpgt\), which also tells us that the assessment~\(\assessment_I\) is consistent.
See also Figure~\ref{fig:imprecise:heads:and:tails} for a graphical representation of these facts.
It is also clear from these graphs that \(\desirset[I]\) is mixing, or in other words, by Proposition~\ref{prop:mixingequalslexicographic}, that \(\co{\desirset[I]}\) is a convex cone, if and only if \(\lp=\up\eqqcolon p\).
In that case, we'll use the notation \(\desirset[p]\coloneqq\desirset[I]\).

As soon as \(\lp=0\) or \(\up=1\), the respective borders~\(\cset{\dif\in\difs}{\dif(\tails,\bestreward)=0\text{ and }\dif(\heads,\bestreward)>0}\) and~\(\cset{\dif\in\difs}{\dif(\heads,\bestreward)=0\text{ and }\dif(\tails,\bestreward)>0}\) are included in~\(\desirset[I]\) under the background ordering~\(\gblgt\), and excluded under the background ordering~\(\gblpgt\).
In particular, it holds that~\(\desirset[1]=\desirset[\heads]\) and~\(\desirset[0]=\desirset[\tails]\); see also Figure~\ref{fig:twice:heads:and:tails}.
\stopit
\end{runningexample}

\begin{figure}[h]
\centering
\begin{tikzpicture}[scale=.5]\footnotesize
\fill[blue!50] (0,0) -- (4,-2) -- (4,4) -- (-1,4) -- cycle;
\draw[gray,->] (-2.2,0) -- (4.2,0) node[below right] {\(\heads\)};
\draw[gray,->] (0,-2.2) -- (0,4.2) node[above left] {\(\tails\)};
\draw[blue,densely dotted,thick] (0,0) -- (4,-2) node[below] {\(\ex[\lp]=0\)};
\draw[blue,dashed] (0,0) -- (-2,1);
\draw[blue,densely dotted,thick] (0,0) -- (-1,4) node[left] {\(\ex[\up]=0\)};
\draw[blue,dashed] (0,0) -- (.5,-2);
\node[draw=blue,fill=white,circle,inner sep=1pt] at (0,0) {};
\draw[red,thick] (4/3,-2/3) -- (2,0) -- (4,2);
\draw[red,thick] (-.4,1.6) -- (0,2) -- (2,4);
\node[fill=white,draw=red,circle,inner sep=1pt] at (4/3,-2/3) {};
\node[red,fill=white,below left] at (4/3,-2/3) {\(\indset{\heads}-\lp\)};
\node[fill=white,draw=red,circle,inner sep=1pt] at (-.4,1.6) {};
\node[red,fill=white,left=2pt] at (-.4,1.6) {\(\up-\indset{\heads}\)};
\node[red] at (1.5,1.5) {\(\assessment_I\)};
\node[white] at (3,3) {\(\desirset[I]\)};
\end{tikzpicture}
\quad
\begin{tikzpicture}[scale=.5]\footnotesize
\fill[blue!50] (0,0) -- (1,-2) -- (4,-2) -- (4,4) -- (-2,4) -- cycle;
\draw[gray,->] (-2.2,0) -- (4.2,0) node[below right] {\(\heads\)};
\draw[gray,->] (0,-2.2) -- (0,4.2) node[above left] {\(\tails\)};
\draw[blue,densely dotted,thick] (0,0) -- (-2,4) node[left] {\(\ex[p]=0\)};
\draw[blue,densely dotted,thick] (0,0) -- (1,-2) node[below] {\(\ex[p]=0\)};
\node[draw=blue,fill=white,circle,inner sep=1pt] at (0,0) {};
\draw[red,thick] (2/3,-4/3) -- (2,0) -- (4,2);
\draw[red,thick] (-2/3,4/3) -- (0,2) -- (2,4);
\node[fill=white,draw=red,circle,inner sep=1pt] at (2/3,-4/3) {};
\node[red,fill=white,left=3pt] at (2/3,-4/3) {\(\indset{\heads}-p\)};
\node[fill=white,draw=red,circle,inner sep=1pt] at (-2/3,4/3) {};
\node[red,fill=white,left=3pt] at (-2/3,4/3) {\(p-\indset{\heads}\)};
\node[red] at (1.5,1.5) {\(\assessment_p\)};
\node[white] at (3,3) {\(\desirset[p]\)};
\end{tikzpicture}
\caption{Representations (in blue) of the coherent sets of desirable options~\(\desirset[I]\) (leftmost plot) and ~\(\desirset[p]\) (rightmost plot), corresponding with the respective assessments~\(\assessment_I\) and~\(\assessment_p\) (in red), under both background orderings~\(\gblgt\) and~\(\gblpgt\). Each set of desirable options~\(\desirset\) is graphically represented by the values~\(\dif(\bolleke,\bestreward)\) that its elements~\(\dif\in\desirset\) assume in the reward~\(\bestreward\). Dotted blue lines represent `borders' that aren't included in the set.}
\label{fig:imprecise:heads:and:tails}
\end{figure}

\section{Coherent sets of desirable option sets}\label{sec:non-binary-choice}
We now turn from strict \emph{binary} preferences---of one option~\(\opt\) over another option~\(\altopt\)---to more general preferences that are not necessarily binary.
The simplest way to introduce these more general choice models in the present context goes as follows.
We call any finite subset~\(\optset\) of~\(\opts\) an \emph{option set}, and we collect all such option sets into the set~\(\optsets\).
We call an option set~\(\optset\) \emph{desirable} to a subject if she assesses that \emph{at least one option in~\(\optset\) is desirable}, meaning that it is strictly preferred to~\(0\).
We collect a subject's desirable option sets into her \emph{set of desirable option sets~\(\rejectset\)}.
We denote the set of all such possible sets of desirable option sets---all subsets of~\(\optsets\)---by~\(\rejectsets\).

Clearly, the option set \(\set{\opt}\) is desirable if and only if the option~\(\opt\) is.
On the other hand, stating that, say, the option set \(\set{\opt,\altopt}\) is desirable amounts to stating that at least one of the options~\(\opt\) and~\(\altopt\) is desirable, without specifying further which of these two actually is.
In other words, a desirability statement for option sets amounts to an OR of desirability statements for its elements:
\begin{center}
`\(\set{\opt,\altopt}\) is desirable' \(\ifandonlyif\) `\(\opt\) is desirable' OR  `\(\altopt\) is desirable'.
\end{center}

\begin{runningexample}\label{ex:identical:sides:nonbinary:assessment}
In our coin flipping example, we've seen a desirability model for a our subject's beliefs that the coin has two identical sides, namely heads~\(\heads\): the coherent set of desirable options~\(\desirset[\heads]\) based on the assessment~\(\assessment_\heads\).
Similarly, we found a coherent set of desirable options~\(\desirset[\tails]\) based on the assessment~\(\assessment_\tails\), representing beliefs that both sides of the coin are tails~\(\tails\).
But what about representing the beliefs that the coin has two identical sides, without knowing whether they are heads or tails?
This amounts to OR-ing the desirability statements present in the assessments~\(\assessment_\heads\) and~\(\assessment_\tails\),
which can't be represented using a coherent set of desirable options, as such a set is, in effect, only an AND of simple desirability statements.

Simple-minded attempts to force this assessment into a desirability framework lead to failure.
For instance, the assessment \(\assessment_\heads\cup\assessment_\tails\) is inconsistent: it isn't included in any coherent set of desirable options, as a quick look at Figure~\ref{fig:twice:heads:and:tails} will tell us immediately.
This is not surprising, as this assessment represents the belief that the coin has two identical sides that are \emph{both} heads and tails.
On the other hand, the assessment \(\assessment_\heads\cap\assessment_\tails\) is empty, and therefore leads to the \emph{vacuous} set of desirable options~\(\natexdesirset(\emptyset)=\difs_{\gblgt0}\) or \(\natexdesirset(\emptyset)=\difs_{\gblpgt0}\), depending on the choice of the background ordering.

This failure reflects the simple fact that the choice model of coherent sets of desirable options does not allow us to deal with OR-ing desirability statements.
But moving to sets of desirable option \emph{sets} provides a way out, as we will see further on.
We are thus led to considering a desirable option set assessment of the following type:
\begin{multline*}
\assessment_{\heads\,\text{or}\,\tails}
\coloneqq\cset[\big]{\set{\opt,\altopt}}{\opt\in\assessment_\heads\text{ and }\altopt\in\assessment_\tails}\\
=\cset[\Big]{\set[\big]{(\indset{\heads}-\alpha\indset{\tails})\unitdif,(\indset{\tails}-\beta\indset{\heads})\unitdif}}
{\alpha,\beta\in\nonnegreals}
\end{multline*}
and we'll need to come up with rationality requirements and an accompanying method of (conservative) inference in order to find out what this assessment implies.
\stopit
\end{runningexample}

The rationality requirements we will impose on sets of desirable option sets turn out to be fairly natural generalisations of those for sets of desirable options.
A set of desirable option sets~\(\rejectset\subseteq\optsets\) is called \emph{coherent}~\cite{debock2019:choice:isipta,debock2019:choice:arxiv} if it satisfies the following axioms:
\begin{enumerate}[label=\(\mathrm{K}_{\arabic*}\).,ref=\(\mathrm{K}_{\arabic*}\),leftmargin=*,start=0]
\item\label{ax:rejects:removezero} if \(\optset\in\rejectset\) then also \(\optset\setminus\set{0}\in\rejectset\), for all~\(\optset\in\optsets\);
\item\label{ax:rejects:nozero} \(\set{0}\notin\rejectset\);
\item\label{ax:rejects:cone} if \(\optset[1],\optset[2]\in\rejectset\) and if, for all \(\opt\in\optset[1]\) and \(\altopt\in\optset[2]\), \((\lambda_{\opt,\altopt},\mu_{\opt,\altopt})>0\), then also
\begin{equation*}
\cset{\lambda_{\opt,\altopt}\opt+\mu_{\opt,\altopt}\altopt}{\opt\in\optset[1],\altopt\in\optset[2]}
\in\rejectset;
\end{equation*}
\item\label{ax:rejects:mono} if \(\optset[1]\in\rejectset\) and \(\optset[1]\subseteq\optset[2]\), then also \(\optset[2]\in\rejectset\), for all~\(\optset[1],\optset[2]\in\optsets\);
\item\label{ax:rejects:pos} \(\set{\opt}\in\rejectset\), for all~\(\opt\in\posopts\).
\end{enumerate}
We denote the set of all coherent sets of desirable option sets by~\(\cohrejectsets\).

I refer to~\cite{debock2019:choice:isipta,debock2019:choice:arxiv} for a detailed justification of these axioms, starting from the desirability axioms and the above-mentioned interpretation of the desirability of an option set as an OR of desirability statements for its elements.

A coherent set of desirable option sets~\(\rejectset\) contains singletons, doubletons, and so on.
Moreover, it also contains all supersets of any of its elements, by Axiom~\ref{ax:rejects:mono}.
The singletons in~\(\rejectset\) represent the binary choices, or in other words, the pure desirability aspects.
We let
\begin{equation}\label{eq:rejectset:to:desirset}
\desirset[\rejectset]
\coloneqq\cset{\opt\in\opts}{\set{\opt}\in\rejectset}
\end{equation}
be the set of desirable options that represents the binary choices present in the model~\(\rejectset\).
Its elements are the options that---according to~\(\rejectset\)---are definitely desirable.
But there may be elements~\(\optset\) of~\(\rejectset\) of higher cardinality than one that are minimal in the sense that \(\rejectset\) has none of their strict subsets.
This means that our subject holds that at least one option in~\(\optset\) is desirable, but her model holds no more specific information about which of these options actually are desirable.
This indicates that the choice model~\(\rejectset\) has non-binary aspects.
If such is not the case, or in other words, if every element of~\(\rejectset\) goes back to some singleton in~\(\rejectset\), meaning that
\begin{equation*}
(\forall\optset\in\rejectset)
(\exists\opt\in\optset)
\set{\opt}\in\rejectset,
\end{equation*}
then we call the choice model~\(\rejectset\) \emph{binary}.
With any~\(\desirset\in\desirsets\), our interpretation inspires us to associate a set of desirable option sets~\(\rejectset[\desirset]\), defined by
\begin{equation}\label{eq:desirset:to:rejectset}
\rejectset[\desirset]
\coloneqq\cset{\optset\in\optsets}{\optset\cap\desirset\neq\emptyset}.
\end{equation}
It turns out that a set of desirable option sets~\(\rejectset\) is binary if and only if it has the form~\(\rejectset[\desirset]\), and the unique \emph{representing}~\(\desirset\) is then given by~\(\desirset[\rejectset]\).

\begin{proposition}[{\protect\cite{debock2019:choice:arxiv,debock2019:choice:isipta}}]\label{prop:binary:characterisation}
A set of desirable option sets~\(\rejectset\in\rejectsets\) is binary if and only if there is some~\(\desirset\in\desirsets\) such that \(\rejectset=\rejectset[\desirset]\).
This~\(\desirset\) is then necessarily unique, and equal to~\(\desirset[\rejectset]\).
\end{proposition}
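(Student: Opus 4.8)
The plan is to read both implications straight off the defining formulas~\eqref{eq:rejectset:to:desirset} and~\eqref{eq:desirset:to:rejectset}, and then to extract uniqueness from the same elementary computation. The argument is almost entirely a set-theoretic bookkeeping exercise; the one genuinely structural ingredient is the monotonicity axiom~\ref{ax:rejects:mono}, and pinning down exactly where it is needed is the crux. Throughout I treat \(\rejectset\) as coherent, as the surrounding discussion does.

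For the \emph{if} direction, suppose \(\rejectset=\rejectset[\desirset]\) for some \(\desirset\in\desirsets\). I would take an arbitrary \(\optset\in\rejectset=\rejectset[\desirset]\); by~\eqref{eq:desirset:to:rejectset} this means \(\optset\cap\desirset\neq\emptyset\), so I can fix some \(\opt\in\optset\) with \(\opt\in\desirset\). Then \(\set{\opt}\cap\desirset\neq\emptyset\) as well, whence \(\set{\opt}\in\rejectset[\desirset]=\rejectset\). Since \(\opt\in\optset\), this is precisely the witness required by the definition of binarity, so \(\rejectset\) is binary.

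For the \emph{only if} direction, assume \(\rejectset\) is binary and set \(\desirset\coloneqq\desirset[\rejectset]\). I would prove \(\rejectset=\rejectset[\desirset]\) by double inclusion. For \(\rejectset\subseteq\rejectset[\desirset]\), take \(\optset\in\rejectset\); binarity supplies \(\opt\in\optset\) with \(\set{\opt}\in\rejectset\), that is, \(\opt\in\desirset[\rejectset]=\desirset\) by~\eqref{eq:rejectset:to:desirset}, so \(\optset\cap\desirset\neq\emptyset\) and \(\optset\in\rejectset[\desirset]\). For the reverse inclusion, take \(\optset\in\rejectset[\desirset]\), so that \(\optset\cap\desirset\neq\emptyset\); fixing \(\opt\in\optset\cap\desirset\) gives \(\set{\opt}\in\rejectset\) with \(\set{\opt}\subseteq\optset\), and now monotonicity~\ref{ax:rejects:mono} forces \(\optset\in\rejectset\). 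This last step is where coherence does the real work: the definition of binarity only hands us a single desirable singleton inside each member of \(\rejectset\), and it is~\ref{ax:rejects:mono} that lets us recover every superset so that the whole of \(\rejectset[\desirset]\) is captured.

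For uniqueness I would show that any representing \(\desirset\) must coincide with \(\desirset[\rejectset]\): for every \(\opt\in\opts\) the chain \(\opt\in\desirset\ifandonlyif\set{\opt}\cap\desirset\neq\emptyset\ifandonlyif\set{\opt}\in\rejectset[\desirset]\ifandonlyif\set{\opt}\in\rejectset\ifandonlyif\opt\in\desirset[\rejectset]\) holds, using \(\rejectset=\rejectset[\desirset]\) for the central equivalence, so \(\desirset=\desirset[\rejectset]\); the same chain simultaneously confirms that \(\desirset[\rejectset]\) is the representing set produced above. I expect the only delicate point to be the reverse inclusion in the \emph{only if} direction: it is exactly there that the statement would collapse for a non-coherent \(\rejectset\), since a finite \(\rejectset\) meeting the binarity condition but omitting some superset of one of its singletons satisfies the definition yet cannot have the form \(\rejectset[\desirset]\). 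The proof must therefore invoke~\ref{ax:rejects:mono} precisely once, and nowhere else.
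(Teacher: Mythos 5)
Your proof is correct, and there is nothing in the paper to compare it against at the level of detail: Proposition~\ref{prop:binary:characterisation} is imported from the cited work of De Bock and de Cooman without proof, and your double-inclusion argument---binarity supplying the witness singleton for \(\rejectset\subseteq\rejectset[{\desirset[\rejectset]}]\), Axiom~\ref{ax:rejects:mono} recovering supersets for the reverse inclusion, and the elementary chain of equivalences for uniqueness---is exactly the standard route. Your interpretive decision to read the statement for coherent \(\rejectset\) is also the right call, and your diagnosis of why it is forced is sharp: taken literally for arbitrary \(\rejectset\in\rejectsets\), the `only if' direction is false, since for instance \(\rejectset=\set{\set{\opt}}\) satisfies the witness condition defining binarity, yet cannot equal any \(\rejectset[\desirset]\), because every set of the form \(\rejectset[\desirset]\) is closed under supersets while \(\rejectset\) is not. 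So monotonicity---available here through coherence---is genuinely indispensable, you invoke it exactly where it is needed and nowhere else, and the remaining three steps (the `if' direction, the inclusion \(\rejectset\subseteq\rejectset[{\desirset[\rejectset]}]\), and uniqueness) are correctly carried out without it.
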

\noindent
The coherence of a binary set of desirable option sets is completely determined by the coherence of its corresponding set of desirable options, which is an indication that our way of generalising coherence from sets of desirable options to sets of desirable option sets has some merits.
We'll see shortly that it has many more.

\begin{proposition}[{\protect\cite{debock2019:choice:arxiv,debock2019:choice:isipta}}]\label{prop:coherence:for:binary}
Consider any binary set of desirable option sets~\(\rejectset\in\rejectsets\) and let \(\desirset[\rejectset]\in\desirsets\) be its corresponding set of desirable options.
Then \(\rejectset\) is coherent if and only if \(\desirset[\rejectset]\)~is.
Conversely, consider any set of desirable options~\(\desirset\in\desirsets\) and let \(\rejectset[\desirset]\) be its corresponding binary set of desirable option sets, then \(\rejectset[\desirset]\) is coherent if and only if \(\desirset\) is.
\end{proposition}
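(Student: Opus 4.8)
Because the correspondences~\eqref{eq:rejectset:to:desirset} and~\eqref{eq:desirset:to:rejectset} are, by Proposition~\ref{prop:binary:characterisation}, mutually inverse between \(\desirsets\) and the binary sets of desirable option sets, the two biconditionals in the statement are equivalent, so the plan is to establish only the second (the ``converse'') one: for an arbitrary \(\desirset\in\desirsets\), the model \(\rejectset[\desirset]\) is coherent if and only if \(\desirset\) is. Granting this, the first biconditional drops out, since a binary \(\rejectset\) equals \(\rejectset[\desirset[\rejectset]]\) by Proposition~\ref{prop:binary:characterisation}, so ``\(\rejectset\) coherent \(\ifandonlyif\) \(\desirset[\rejectset]\) coherent'' is precisely the converse statement read off at \(\desirset=\desirset[\rejectset]\). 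Everything then reduces to checking that the rule \(\rejectset[\desirset]=\cset{\optset\in\optsets}{\optset\cap\desirset\neq\emptyset}\) carries the axioms~\ref{ax:desirs:nozero}--\ref{ax:desirs:pos} to~\ref{ax:rejects:removezero}--\ref{ax:rejects:pos} and back.

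For the ``only if'' half I would assume \(\desirset\) coherent and verify~\ref{ax:rejects:removezero}--\ref{ax:rejects:pos} in turn, each time unfolding membership in \(\rejectset[\desirset]\) as ``having a member in \(\desirset\)''. Axiom~\ref{ax:rejects:nozero} is~\ref{ax:desirs:nozero} verbatim (\(\set{0}\cap\desirset=\emptyset\) says \(0\notin\desirset\)); \ref{ax:rejects:mono} is immediate since enlarging an option set can only enlarge its trace on \(\desirset\); and~\ref{ax:rejects:pos} is~\ref{ax:desirs:pos}, as \(\posopts\subseteq\desirset\) forces \(\set{\opt}\cap\desirset\neq\emptyset\) for every \(\opt\optgt0\). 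Axiom~\ref{ax:rejects:removezero} uses~\ref{ax:desirs:nozero} in the guise that deleting \(0\) is harmless: since \(0\notin\desirset\), we have \((\optset\setminus\set{0})\cap\desirset=\optset\cap\desirset\), which stays non-empty. The only structural step is~\ref{ax:rejects:cone}: from \(\optset[1],\optset[2]\in\rejectset[\desirset]\) I pick witnesses \(\opt[0]\in\optset[1]\cap\desirset\) and \(\altopt[0]\in\optset[2]\cap\desirset\), and then~\ref{ax:desirs:cone}, applied to the single positive coefficient pair attached to \((\opt[0],\altopt[0])\), shows that \(\lambda_{\opt[0],\altopt[0]}\opt[0]+\mu_{\opt[0],\altopt[0]}\altopt[0]\in\desirset\); since this element belongs to the combined option set prescribed by~\ref{ax:rejects:cone}, that set meets \(\desirset\) and so lies in \(\rejectset[\desirset]\).

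For the ``if'' half I would assume \(\rejectset[\desirset]\) coherent and recover~\ref{ax:desirs:nozero}--\ref{ax:desirs:pos} by specialising the K-axioms to singletons, which is legitimate because \(\set{\opt}\in\rejectset[\desirset]\) is by definition the same as \(\opt\in\desirset\). Then~\ref{ax:desirs:nozero} is~\ref{ax:rejects:nozero},~\ref{ax:desirs:pos} is~\ref{ax:rejects:pos}, and~\ref{ax:desirs:cone} follows from~\ref{ax:rejects:cone} applied to \(\optset[1]=\set{\opt}\) and \(\optset[2]=\set{\altopt}\) with the lone pair \((\lambda,\mu)>0\): its conclusion is exactly \(\set{\lambda\opt+\mu\altopt}\in\rejectset[\desirset]\), that is, \(\lambda\opt+\mu\altopt\in\desirset\).

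I do not expect a genuine obstacle here; the argument is axiom-chasing through~\eqref{eq:desirset:to:rejectset}. The two places that repay a little care are~\ref{ax:rejects:removezero}, where~\ref{ax:desirs:nozero} is exactly what makes stripping the zero option inconsequential, and~\ref{ax:rejects:cone}, where one should notice that a desirable option set needs only a \emph{single} desirable member, so one witness pair suffices and the whole indexed family of coefficients \((\lambda_{\opt,\altopt},\mu_{\opt,\altopt})\) never has to be manipulated at once.
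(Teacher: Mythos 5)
Your proof is correct: the reduction of the first biconditional to the second via Proposition~\ref{prop:binary:characterisation} (using \(\rejectset=\rejectset[{\desirset[\rejectset]}]\) for binary \(\rejectset\)) is sound, and the axiom-by-axiom verification--including the two delicate points, where~\ref{ax:desirs:nozero} makes stripping \(0\) harmless for~\ref{ax:rejects:removezero} and a single witness pair suffices for~\ref{ax:rejects:cone}--goes through exactly as you describe. The paper itself gives no proof (it imports the result from the cited references), and your argument is essentially the same axiom-chasing one found there, so there is nothing to flag beyond the harmless swap of the labels ``if''/``only if'' relative to the statement's phrasing.
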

\noindent
The map \(\rejectset[\bolleke]\colon\desirsets\to\rejectsets\colon\desirset\mapsto\rejectset[\desirset]\) is \emph{order preserving} in that \(\desirset[1]\subseteq\desirset[2]\) implies that \(\rejectset[{\desirset[1]}]\subseteq\rejectset[{\desirset[2]}]\).
In fact, Proposition~\ref{prop:coherence:for:binary} guarantees that it is an order isomorphism between the partially ordered sets~\(\structure{\cohdesirsets,\subseteq}\) and \(\structure{\cset{\rejectset[\desirset]}{\desirset\in\cohdesirsets},\subseteq}\).
It is therefore an \emph{order embedding} of \(\structure{\cohdesirsets,\subseteq}\) into \(\structure{\cohrejectsets,\subseteq}\), but, and this is very important, \emph{it fails to preserve meets}---intersections---between these partially ordered sets.
This failure is a happy one, because it leaves enough room for the representation result in Theorem~\ref{theo:coherent:representation:twosided} below.

We find that the binary coherent sets of desirable option sets are given by~\(\cset{\rejectset[\desirset]}{\desirset\in\cohdesirsets}\), allowing us to call any coherent set of desirable option sets in~\(\cohrejectsets\setminus\cset{\rejectset[\desirset]}{\desirset\in\cohdesirsets}\) \emph{non-binary}.
If we replace such a non-binary coherent set of desirable option sets~\(\rejectset\) by its corresponding set of desirable options~\(\desirset[\rejectset]\), we lose information, because then necessarily \(\rejectset[{\desirset[\rejectset]}]\subset\rejectset\).
\emph{Sets of desirable option sets are therefore, generally speaking, more expressive than sets of desirable options.}
But our coherence axioms lead to a representation result that allows us to still use sets of desirable options, or rather, sets of them, to completely characterise \emph{any} coherent choice model.

\begin{theorem}[Representation {\protect\cite{debock2019:choice:arxiv,debock2019:choice:isipta}}]\label{theo:coherent:representation:twosided}
A set of desirable option sets~\(\rejectset\in\rejectsets\) is coherent if and only if there is some non-empty set~\(\setofdesirsets\subseteq\cohdesirsets\) of coherent sets of desirable options such that~\(\rejectset=\bigcap\cset{\rejectset[\desirset]}{\desirset\in\setofdesirsets}\).
The largest such set~\(\setofdesirsets\) is then\/~\(\cohdesirsets\group{\rejectset}\coloneqq\cset{\desirset\in\cohdesirsets}{\rejectset\subseteq\rejectset[\desirset]}\).
\end{theorem}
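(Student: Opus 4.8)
The plan is to prove the two implications separately, taking as the candidate representing family the set $\cohdesirsets\group{\rejectset}=\cset{\desirset\in\cohdesirsets}{\rejectset\subseteq\rejectset[\desirset]}$, and to settle the ``largest such $\setofdesirsets$'' claim at the very end. \textbf{Sufficiency} is the routine half. Suppose $\rejectset=\bigcap\cset{\rejectset[\desirset]}{\desirset\in\setofdesirsets}$ for some non-empty $\setofdesirsets\subseteq\cohdesirsets$. By Proposition~\ref{prop:coherence:for:binary} each $\rejectset[\desirset]$ with $\desirset\in\cohdesirsets$ is coherent, so it is enough to observe that $\cohrejectsets$ is closed under arbitrary \emph{non-empty} intersections. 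This is a direct check of the axioms: \ref{ax:rejects:removezero}, \ref{ax:rejects:cone}, \ref{ax:rejects:mono} and \ref{ax:rejects:pos} each assert membership of some option sets given membership of others, and such conditions pass verbatim to an intersection; \ref{ax:rejects:nozero} survives because $\set{0}$ is excluded from every member of the family, and it is exactly here that non-emptiness of $\setofdesirsets$ is needed, since the empty intersection is all of $\optsets$ and contains $\set{0}$. Hence $\rejectset$ is coherent.

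\textbf{Necessity.} Let $\rejectset\in\cohrejectsets$. Since every $\desirset\in\cohdesirsets\group{\rejectset}$ satisfies $\rejectset\subseteq\rejectset[\desirset]$ by definition, the inclusion $\rejectset\subseteq\bigcap\cset{\rejectset[\desirset]}{\desirset\in\cohdesirsets\group{\rejectset}}$ is immediate, and the whole content of the theorem is the reverse inclusion. I would reformulate it contrapositively as a \emph{separation lemma}: for every $\optset\in\optsets$ with $\optset\notin\rejectset$ there is some $\desirset\in\cohdesirsets$ with $\rejectset\subseteq\rejectset[\desirset]$ and $\desirset\cap\optset=\emptyset$. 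Such a $\desirset$ lies in $\cohdesirsets\group{\rejectset}$ and has $\optset\notin\rejectset[\desirset]$, so $\optset$ is excluded from the intersection, giving the missing inclusion. Note also that $\emptyset\notin\rejectset$, since $\emptyset\in\rejectset$ would force $\set{0}\in\rejectset$ through \ref{ax:rejects:mono}, contradicting \ref{ax:rejects:nozero}; applying the lemma to $\optset=\emptyset$ therefore produces, en passant, a witness showing $\cohdesirsets\group{\rejectset}\neq\emptyset$, so that the representing family is non-empty as required.

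\textbf{Proof of the separation lemma --- the main obstacle.} Fix $\optset\notin\rejectset$; by \ref{ax:rejects:mono} we may assume $0\notin\optset$. First, $\posopts\cap\optset=\emptyset$, for any $\opt\in\posopts\cap\optset$ would give $\set{\opt}\in\rejectset$ by \ref{ax:rejects:pos} and hence $\optset\in\rejectset$ by \ref{ax:rejects:mono}, a contradiction. Thus $\cset{\desirset\in\cohdesirsets}{\desirset\cap\optset=\emptyset}$ contains $\posopts$ and is closed under unions of chains (a union of a chain of coherent cones avoiding $0$ and $\optset$ is again one such), so by Zorn's Lemma it has a maximal element $\desirset$. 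I would then show $\rejectset\subseteq\rejectset[\desirset]$, i.e.\ that $\desirset$ meets every $\altoptset\in\rejectset$. Suppose some $\altoptset\in\rejectset$ has $\altoptset\cap\desirset=\emptyset$. For each $\altopt\in\altoptset$, maximality forces the coherent closure $\natexdesirset\group{\desirset\cup\set{\altopt}}=\posi\group{\desirset\cup\set{\altopt}}$ to meet $\optset\cup\set{0}$, yielding a positive-combination witness that expresses some element of $\optset\cup\set{0}$ through $\altopt$ and elements of $\desirset$. The hard part is to feed this family of witnesses, one per $\altopt\in\altoptset$, into the combination axiom \ref{ax:rejects:cone} applied to $\altoptset$ and to the members of $\rejectset$ out of which the relevant part of $\desirset$ was generated, and then --- after discarding zero components via \ref{ax:rejects:removezero} and passing to a superset via \ref{ax:rejects:mono} --- to derive an option set contained in $\optset$, contradicting $\optset\notin\rejectset$. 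The delicacy is twofold: one must keep the elements of $\desirset$ occurring in the witnesses tied back to actual members of $\rejectset$ (which suggests carrying out the Zorn construction over cones generated by \emph{selecting} one option from each already-treated member of $\rejectset$, rather than over an opaque maximal cone), and one must arrange the per-pair coefficients in \ref{ax:rejects:cone} so that the combined option set lands inside $\optset$ rather than spilling into extraneous options. This bookkeeping is the genuinely non-trivial step.

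\textbf{Maximality of the representing family.} Finally, suppose $\rejectset=\bigcap\cset{\rejectset[\desirset]}{\desirset\in\setofdesirsets}$ for any non-empty $\setofdesirsets\subseteq\cohdesirsets$. Then for each fixed $\desirset'\in\setofdesirsets$ we have $\rejectset\subseteq\rejectset[\desirset']$, that is $\desirset'\in\cohdesirsets\group{\rejectset}$, whence $\setofdesirsets\subseteq\cohdesirsets\group{\rejectset}$. Since $\cohdesirsets\group{\rejectset}$ is itself a valid representing family by the necessity argument, it is the largest one, as claimed.
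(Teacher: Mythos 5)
Your sufficiency argument and your closing maximality argument are correct, and your reduction of necessity to a separation lemma (for every \(\optset\notin\rejectset\), find \(\desirset\in\cohdesirsets\) with \(\rejectset\subseteq\rejectset[\desirset]\) and \(\desirset\cap\optset=\emptyset\)) is exactly the right reformulation; it is also how the proof in the cited sources is organised (the paper itself imports Theorem~\ref{theo:coherent:representation:twosided} from \cite{debock2019:choice:arxiv,debock2019:choice:isipta} without proof). But the proof of that lemma \emph{is} the theorem, and you leave it undone: what you call ``the genuinely non-trivial step'' is precisely the content that had to be supplied. Worse, the Zorn construction you set up cannot be completed at all, because the poset you apply Zorn to---all coherent cones avoiding \(\optset\), with no reference to \(\rejectset\)---retains no memory of \(\rejectset\), and a maximal element of it need not satisfy \(\rejectset\subseteq\rejectset[\desirset]\). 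Here is a concrete counterexample inside the paper's running example. Work with background ordering \(\gblgt\), identify options with gambles \((f(\heads),f(\tails))\) on \(\coinstates\) as in the figures, and take \(\rejectset\coloneqq\rejectset[\heads]\cap\rejectset[\tails]\), the coherent model \(\rejectset[{\heads\,\text{or}\,\tails}]\) of the running example, whose representing cones are the two lexicographic cones \(\desirset[\heads]\) and \(\desirset[\tails]\). Take \(\optset\coloneqq\set{(1,-2),(2,-1)}\); neither element lies in \(\desirset[\tails]\), so \(\optset\notin\rejectset[\tails]\supseteq\rejectset\). Now
\[
\desirset''\coloneqq\cset{f}{f(\heads)+2f(\tails)>0}\cup\cset{\lambda(-2,1)}{\lambda\in\posreals}
\]
is a \emph{maximal} (indeed total) coherent cone that contains \(\posopts\) and avoids \(\optset\), so it is a perfectly admissible output of your Zorn argument. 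Yet \(\altoptset\coloneqq\set{(2,-1),(-5,1)}\) belongs to \(\rejectset\) (it meets \(\desirset[\heads]\) through \((2,-1)\) and \(\desirset[\tails]\) through \((-5,1)\)), while \(\altoptset\cap\desirset''=\emptyset\): the option \((2,-1)\) lies on the excluded boundary ray of \(\desirset''\), and \((-5,1)\) has \(-5+2<0\). Hence \(\rejectset\not\subseteq\rejectset[{\desirset''}]\), so the claim you ``would then show'' is false for this maximal cone.

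The repair is the one you gesture at parenthetically but do not carry out: the Zorn argument must be run at the level of sets of desirable option \emph{sets}, not of cones, so that the maximal object remembers \(\rejectset\). In the proof in \cite{debock2019:choice:arxiv,debock2019:choice:isipta}, one considers the poset of all coherent \(\rejectset'\in\cohrejectsets\) with \(\rejectset\subseteq\rejectset'\) and \(\optset\notin\rejectset'\) (non-empty, since it contains \(\rejectset\), and closed under unions of chains), takes a maximal element \(\maxrejectset\), and then proves the key lemma: \emph{such a maximal \(\maxrejectset\) is binary}, that is, \(\maxrejectset=\rejectset[{\desirset[\maxrejectset]}]\) with \(\desirset[\maxrejectset]=\cset{\opt\in\opts}{\set{\opt}\in\maxrejectset}\). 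That lemma is where Axiom~\ref{ax:rejects:cone} does its real work: if some \(\altoptset\in\maxrejectset\) had no singleton in \(\maxrejectset\), then for each \(\altopt\in\altoptset\) maximality forces \(\optset\) into the coherent closure of \(\maxrejectset\cup\set{\set{\altopt}}\), and these finitely many certificates can be combined through \ref{ax:rejects:cone}---applied to \(\altoptset\) and to option sets that genuinely belong to \(\maxrejectset\), which is exactly what fails for your opaque cone---and then cleaned up with \ref{ax:rejects:removezero} and \ref{ax:rejects:mono} to force \(\optset\in\maxrejectset\), a contradiction. The separating cone is then \(\desirset[\maxrejectset]\). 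Until you prove this binarity-of-maximal-elements lemma (or an equivalent), your proposal establishes only the easy direction of the theorem.
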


It's also easy to see that \(\cohrejectsets\) is an intersection structure: if we consider any non-empty family of coherent sets of desirable option sets~\(\rejectset[i]\), \(i\in I\), then their intersection~\(\bigcap_{i\in I}\rejectset[i]\) is still coherent.
This implies that we can introduce a \emph{coherent closure} operator~\(\natexrejectset\colon\rejectsets\to\cohrejectsets\cup\set{\optsets}\) by letting
\begin{equation*}
\natexrejectset(\assessment)
\coloneqq\bigcap\cset{\rejectset\in\cohrejectsets}{\assessment\subseteq\rejectset}
\text{ for all~\(\assessment\subseteq\optsets\)}
\end{equation*}
be the smallest---if any---coherent set of desirable option sets that includes~\(\assessment\).
Such an~\(\assessment\) typically represents a so-called \emph{assessment}: a not necessarily exhaustive collection of option sets that a subject states to be desirable.

We call such an assessment~\(\assessment\subseteq\optsets\) \emph{consistent} if \(\natexrejectset(\assessment)\neq\optsets\), or equivalently, if \(\assessment\) is included in some coherent set of desirable option sets.
The closure operator~\(\natexrejectset\) implements \emph{conservative inference} with respect to the coherence axioms, in that it extends a consistent assessment~\(\assessment\) to the most conservative---smallest possible---coherent set of desirable option sets~\(\natexrejectset(\assessment)\).

It is clear that the so-called \emph{vacuous set of desirable option sets}~\(\smash{\rejectset[\posopts]}\) is the smallest element of \(\cohrejectsets\) with respect to set inclusion, and that therefore \(\smash{\natexrejectset(\emptyset)=\rejectset[\posopts]}\): it corresponds to making no assessment at all, whence, of course, its name.

If we combine these ideas with Theorem~\ref{theo:coherent:representation:twosided}, we are led to the following important result.

\begin{theorem}[Closure {\protect\cite{debock2018:choice:arxiv,debock2018:choice:smps,debock2019:choice:arxiv,debock2019:choice:isipta}}]\label{theo:essentially:archimedean:representation:rejectsets:noconstants}
Consider any~\(\rejectset\in\rejectsets\), then \(\natexrejectset(\rejectset)=\bigcap\cset{\rejectset[\desirset]}{\desirset\in\cohdesirsets(\rejectset)}\).
Hence, \(\rejectset\) is consistent if and only if~\(\cohdesirsets(\rejectset)\neq\emptyset\).
And a consistent~\(\rejectset\) is coherent if and only if \(\rejectset=\bigcap\cset{\rejectset[\desirset]}{\desirset\in\cohdesirsets(\rejectset)}\).
\end{theorem}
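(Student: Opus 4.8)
The plan is to derive this closure theorem directly from the representation result in Theorem~\ref{theo:coherent:representation:twosided} together with the intersection-structure property of~\(\cohrejectsets\), rather than reasoning about the closure operator from scratch. The key observation is that \(\cohdesirsets(\rejectset)=\cset{\desirset\in\cohdesirsets}{\rejectset\subseteq\rejectset[\desirset]}\) is precisely the set of binary coherent models that \emph{dominate}~\(\rejectset\), and that \(\bigcap\cset{\rejectset[\desirset]}{\desirset\in\cohdesirsets(\rejectset)}\) is therefore a natural candidate for the smallest coherent set of desirable option sets including~\(\rejectset\).

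First I would establish the two easy inclusions for consistency. Assume \(\rejectset\) is consistent, so \(\natexrejectset(\rejectset)\neq\optsets\); then \(\rejectset\) is included in some coherent \(\rejectset'\in\cohrejectsets\), and by Theorem~\ref{theo:coherent:representation:twosided} this \(\rejectset'\) is an intersection of models~\(\rejectset[\desirset]\) with \(\desirset\in\cohdesirsets\), each of which includes~\(\rejectset\) and hence contributes a \(\desirset\in\cohdesirsets(\rejectset)\). This shows \(\cohdesirsets(\rejectset)\neq\emptyset\). Conversely, if \(\cohdesirsets(\rejectset)\neq\emptyset\), pick any \(\desirset\) in it; then \(\rejectset\subseteq\rejectset[\desirset]\) with \(\rejectset[\desirset]\) coherent (by Proposition~\ref{prop:coherence:for:binary}), so \(\rejectset\) is consistent. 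This settles the second sentence of the statement.

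Next I would prove the main closure formula. Write \(\rejectset^{\ast}\coloneqq\bigcap\cset{\rejectset[\desirset]}{\desirset\in\cohdesirsets(\rejectset)}\). Since each \(\rejectset[\desirset]\) in this family is coherent and includes~\(\rejectset\), their intersection \(\rejectset^{\ast}\) is coherent (intersection structure) and still includes~\(\rejectset\); hence \(\natexrejectset(\rejectset)\subseteq\rejectset^{\ast}\) by minimality of the closure. For the reverse inclusion, I would use that \(\natexrejectset(\rejectset)\) is itself coherent and includes~\(\rejectset\), so by Theorem~\ref{theo:coherent:representation:twosided} it equals \(\bigcap\cset{\rejectset[\desirset]}{\desirset\in\cohdesirsets(\natexrejectset(\rejectset))}\); because \(\rejectset\subseteq\natexrejectset(\rejectset)\) forces \(\cohdesirsets(\natexrejectset(\rejectset))\subseteq\cohdesirsets(\rejectset)\), intersecting over the larger family \(\cohdesirsets(\rejectset)\) can only shrink the set, giving \(\rejectset^{\ast}\subseteq\natexrejectset(\rejectset)\). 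Combining the two inclusions yields \(\natexrejectset(\rejectset)=\rejectset^{\ast}\). The final equivalence for coherence is then immediate: a consistent \(\rejectset\) is coherent exactly when it equals its own closure \(\natexrejectset(\rejectset)\), which by the formula just proved is \(\bigcap\cset{\rejectset[\desirset]}{\desirset\in\cohdesirsets(\rejectset)}\).

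The step I expect to require the most care is the reverse inclusion \(\rejectset^{\ast}\subseteq\natexrejectset(\rejectset)\), specifically the monotonicity argument \(\cohdesirsets(\natexrejectset(\rejectset))\subseteq\cohdesirsets(\rejectset)\): one must check that enlarging the set of desirable option sets from \(\rejectset\) to \(\natexrejectset(\rejectset)\) genuinely restricts the family of dominating binary models, and that a \emph{larger} indexing family produces a \emph{smaller} intersection. This is where the directions of all the inclusions must be tracked carefully, since the order-embedding \(\rejectset[\bolleke]\) does not preserve meets. Everything else is a routine consequence of the intersection-structure property and the already-established representation theorem.
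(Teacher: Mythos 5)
Your proposal is correct in substance and takes exactly the route the paper intends: the theorem itself is imported from earlier work without an in-paper proof, but the sentence introducing it (``If we combine these ideas with Theorem~\ref{theo:coherent:representation:twosided}, we are led to the following important result'') and the paper's own proof of the analogous Archimedean closure result, Theorem~\ref{theo:archimedean:representation:rejectsets:noconstants}, show that the intended argument is precisely yours: first the consistency equivalence, then the inclusion \(\natexrejectset(\rejectset)\subseteq\bigcap\cset{\rejectset[\desirset]}{\desirset\in\cohdesirsets(\rejectset)}\) because each \(\rejectset[\desirset]\) with \(\desirset\in\cohdesirsets(\rejectset)\) is a coherent superset of \(\rejectset\), and finally the reverse inclusion via the ``largest such set'' clause of Theorem~\ref{theo:coherent:representation:twosided} applied to \(\natexrejectset(\rejectset)\), combined with the antitonicity \(\rejectset\subseteq\natexrejectset(\rejectset)\then\cohdesirsets(\natexrejectset(\rejectset))\subseteq\cohdesirsets(\rejectset)\) and the fact that a larger index family yields a smaller intersection. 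Your flagged worry about \(\rejectset[\bolleke]\) failing to preserve meets is harmless here, since the argument never needs meet-preservation, only this antitone behaviour of \(\cohdesirsets(\bolleke)\).

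The one point you should patch is the inconsistent case, since the statement quantifies over arbitrary \(\rejectset\in\rejectsets\). Two of your steps silently assume consistency: the claim that the candidate intersection is coherent ``by the intersection structure'' (the paper asserts this only for \emph{non-empty} families of coherent sets), and the application of Theorem~\ref{theo:coherent:representation:twosided} to \(\natexrejectset(\rejectset)\) (which requires \(\natexrejectset(\rejectset)\) to be coherent, i.e.\ \(\rejectset\) consistent). The patch is one line, and it is exactly the move the paper makes at the start of its proof of Theorem~\ref{theo:archimedean:representation:rejectsets:noconstants}: if \(\rejectset\) is inconsistent, then \(\cohdesirsets(\rejectset)=\emptyset\)---this follows from the consistency equivalence you establish first, which nowhere uses the closure formula, so there is no circularity---and hence both \(\natexrejectset(\rejectset)\) and \(\bigcap\cset{\rejectset[\desirset]}{\desirset\in\cohdesirsets(\rejectset)}\) equal \(\optsets\) as empty intersections, making the closure formula trivially true. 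With that remark added, your proof is complete.
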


We can also lift the mixingness property from binary to general choice models, as Seidenfeld et al.~have done~\cite{seidenfeld2010}.
When we convert it into our language, this condition becomes~\cite{2017vancamp:phdthesis,2018vancamp:lexicographic}:
\begin{enumerate}[label=\(\mathrm{K}_{\mathrm{M}}\).,ref=\(\mathrm{K}_{\mathrm{M}}\),leftmargin=*]
\item\label{ax:rejects:removepositivecombinations} if \(\altoptset\in\rejectset\) and \(\optset\subseteq\altoptset\subseteq\posi\group{\optset}\), then also \(\optset\in\rejectset\), for all~\(\optset,\altoptset\in\optsets\).
\end{enumerate}
We call a set of desirable option sets~\(\rejectset\in\rejectsets\) \emph{mixing} if it's coherent and satisfies \ref{ax:rejects:removepositivecombinations}.
The set of all mixing sets of desirable option sets is denoted by~\(\mixrejectsets\).

The \emph{binary} elements of~\(\mixrejectsets\) are precisely the ones based on a mixing set of desirable options.

\begin{proposition}[Binary embedding {\protect\cite{debock2019:choice:arxiv,debock2019:choice:isipta}}]\label{prop:mixingbinaryiffD}
For any set of desirable options~\(\desirset\in\desirsets\), \(\rejectset[\desirset]\) is mixing if and only if~\(\desirset\) is, so \(\rejectset[\desirset]\in\mixrejectsets\ifandonlyif\desirset\in\mixdesirsets\).
\end{proposition}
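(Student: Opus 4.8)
The plan is to reduce to the coherent case and then check the two mixingness axioms directly against each other through the definition of $\rejectset[\desirset]$. Since both notions of being mixing bundle coherence together with an extra axiom, I would first invoke Proposition~\ref{prop:coherence:for:binary}, which tells us that $\rejectset[\desirset]$ is coherent exactly when $\desirset$ is. If $\desirset$ fails to be coherent, then so does $\rejectset[\desirset]$, and then neither can be mixing, so the equivalence holds vacuously. I would therefore assume from now on that $\desirset$ is coherent, and hence that $\rejectset[\desirset]\in\cohrejectsets$; this leaves only the task of showing that $\desirset$ satisfies the binary mixingness axiom~\ref{ax:desirs:mixing} if and only if $\rejectset[\desirset]$ satisfies~\ref{ax:rejects:removepositivecombinations}.

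For the forward implication, I would assume $\desirset$ satisfies~\ref{ax:desirs:mixing} and verify~\ref{ax:rejects:removepositivecombinations} for $\rejectset[\desirset]$. So suppose $\altoptset\in\rejectset[\desirset]$ with $\optset\subseteq\altoptset\subseteq\posi(\optset)$. By the definition~\eqref{eq:desirset:to:rejectset} of $\rejectset[\desirset]$ there is some $\opt\in\altoptset\cap\desirset$; since $\altoptset\subseteq\posi(\optset)$, this very $\opt$ witnesses $\posi(\optset)\cap\desirset\neq\emptyset$. Axiom~\ref{ax:desirs:mixing} then yields $\optset\cap\desirset\neq\emptyset$, that is, $\optset\in\rejectset[\desirset]$, as required.

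For the converse, I would assume~\ref{ax:rejects:removepositivecombinations} for $\rejectset[\desirset]$ and deduce~\ref{ax:desirs:mixing} for $\desirset$. Let $\optset\in\optsets$ be a finite option set with $\posi(\optset)\cap\desirset\neq\emptyset$, and pick a witness $\opt\in\posi(\optset)\cap\desirset$ (so in particular $\optset\neq\emptyset$). The trick is to feed the axiom a suitable over-set: put $\altoptset\coloneqq\optset\cup\set{\opt}$, which is again finite. Then $\opt\in\altoptset\cap\desirset$ gives $\altoptset\in\rejectset[\desirset]$, and clearly $\optset\subseteq\altoptset$; the only point needing a remark is $\altoptset\subseteq\posi(\optset)$, which holds because $\opt\in\posi(\optset)$ by choice and $\optset\subseteq\posi(\optset)$, each $\altopt\in\optset$ being the single-term positive combination $1\cdot\altopt$. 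Applying~\ref{ax:rejects:removepositivecombinations} to $\altoptset$ and $\optset$ gives $\optset\in\rejectset[\desirset]$, i.e.\ $\optset\cap\desirset\neq\emptyset$, which is exactly the conclusion of~\ref{ax:desirs:mixing}.

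I expect no serious obstacle: the whole content lies in correctly unwinding the two axioms through~\eqref{eq:desirset:to:rejectset}. The one place that rewards a little care is the converse, where one must manufacture the auxiliary set $\altoptset$ and confirm the sandwich $\optset\subseteq\altoptset\subseteq\posi(\optset)$; the inclusion $\optset\subseteq\posi(\optset)$ is precisely what makes the construction legitimate, and it is worth separating out the trivial case $\optset=\emptyset$ (where $\posi(\optset)=\emptyset$ makes the hypothesis void). Should one prefer, the equivalent characterisation $\posi(\co{\desirset})=\co{\desirset}$ from Proposition~\ref{prop:mixingequalslexicographic} could be used in place of~\ref{ax:desirs:mixing}, but working straight from the axioms seems cleanest here.
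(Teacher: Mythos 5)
Your proof is correct. The paper states this proposition without proof---it is imported from the cited references---so there is nothing in the text to compare against; your argument (reduction to the coherent case via Proposition~\ref{prop:coherence:for:binary}, then translating~\ref{ax:desirs:mixing} and~\ref{ax:rejects:removepositivecombinations} into one another through Equation~\eqref{eq:desirset:to:rejectset}, with the auxiliary set \(\altoptset=\optset\cup\set{\opt}\) and the sandwich \(\optset\subseteq\altoptset\subseteq\posi(\optset)\) in the converse) is exactly the natural one and is sound.
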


Interestingly, and in contrast with its binary counterpart~\(\mixdesirsets\), the set of all mixing sets of desirable option sets~\(\mixrejectsets\) also constitutes an intersection structure, like the sets~\(\cohdesirsets\) and~\(\cohrejectsets\).
It therefore comes with its own mixing closure operator, associated conservative inference system, and notion of consistency.
I leave the details to be further explored by interested readers.
\noindent
For general mixing sets of desirable option sets that are not necessarily binary, we still have a representation theorem analogous to Theorem~\ref{theo:coherent:representation:twosided}.

\begin{theorem}[Representation {\protect\cite{debock2019:choice:arxiv,debock2019:choice:isipta}}]\label{theo:mixingrepresentation:twosided}
A set of desirable option sets~\(\rejectset\in\rejectsets\) is mixing if and only if there is some non-empty set~\(\setofdesirsets\subseteq\mixdesirsets\) of mixing sets of desirable options such that \(\rejectset=\bigcap\cset{\rejectset[\desirset]}{\desirset\in\setofdesirsets}\).
The largest such set~\(\setofdesirsets\) is then\/ \(\mixdesirsets\group{\rejectset}\coloneqq\cset{\desirset\in\mixdesirsets}{\rejectset\subseteq\rejectset[\desirset]}\).
\end{theorem}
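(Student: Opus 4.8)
The plan is to treat the two implications separately, relying on the binary embedding Proposition~\ref{prop:mixingbinaryiffD}, on the fact (noted just above) that \(\mixrejectsets\) is an intersection structure, and on the characterisation of mixing sets of desirable options as those whose complement is a convex cone (Proposition~\ref{prop:mixingequalslexicographic}); the statement about the largest representing set then follows almost formally. The \emph{sufficiency} direction is the easy one. Suppose \(\rejectset=\bigcap\cset{\rejectset[\desirset]}{\desirset\in\setofdesirsets}\) for some non-empty \(\setofdesirsets\subseteq\mixdesirsets\). Each \(\desirset\in\setofdesirsets\) is then a mixing set of desirable options, so Proposition~\ref{prop:mixingbinaryiffD} turns every \(\rejectset[\desirset]\) into a mixing set of desirable option sets; since \(\mixrejectsets\) is an intersection structure and \(\setofdesirsets\) is non-empty, the intersection \(\rejectset\) is mixing, and nothing more is needed here.

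For \emph{necessity}, assume \(\rejectset\) is mixing and put \(\mixdesirsets\group{\rejectset}\coloneqq\cset{\desirset\in\mixdesirsets}{\rejectset\subseteq\rejectset[\desirset]}\). Every member of this family satisfies \(\rejectset\subseteq\rejectset[\desirset]\), so \(\rejectset\subseteq\bigcap\cset{\rejectset[\desirset]}{\desirset\in\mixdesirsets\group{\rejectset}}\) is immediate, and the whole content lies in the converse inclusion, equivalently in the separation claim: for every \(\optset\notin\rejectset\) there is a mixing \(\desirset\in\mixdesirsets\group{\rejectset}\) with \(\optset\cap\desirset=\emptyset\). First I would use Axiom~\ref{ax:rejects:removezero} to assume \(0\notin\optset\). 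The key consequence of mixingness that I would extract is that, because \(\optset\notin\rejectset\), Axiom~\ref{ax:rejects:removepositivecombinations} (together with monotonicity~\ref{ax:rejects:mono} and positivity~\ref{ax:rejects:pos}) prevents any \(\altoptset\in\rejectset\) from being contained in \(\posi(\optset)\), and in particular forces \(\posi(\optset)\cap\posopts=\emptyset\): were some positive combination of the elements of \(\optset\) a positive option, adjoining it to \(\optset\) and applying~\ref{ax:rejects:removepositivecombinations} would wrongly place \(\optset\) in \(\rejectset\).

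I would then build the separating set by a maximality argument. Call a convex cone \(R\subseteq\opts\) \emph{admissible} if \(0\in R\), \(R\cap\posopts=\emptyset\), and no member of \(\rejectset\) is contained in \(R\). The previous paragraph shows that \(R_0\coloneqq\posi(\optset)\cup\set{0}\) is admissible, and a union of a chain of admissible cones is again admissible (using that each \(\altoptset\in\rejectset\) is finite), so Zorn's lemma yields a maximal admissible convex cone \(R\supseteq R_0\). Setting \(\desirset\coloneqq\opts\setminus R\), admissibility gives \(0\notin\desirset\), \(\posopts\subseteq\desirset\), and \(\altoptset\cap\desirset\neq\emptyset\) for every \(\altoptset\in\rejectset\) (so \(\rejectset\subseteq\rejectset[\desirset]\)), while \(\optset\subseteq\posi(\optset)\subseteq R\) gives \(\optset\cap\desirset=\emptyset\). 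The main obstacle is to show that this maximal \(R\) has a complement \(\desirset\) that is itself a convex cone: this both secures coherence Axiom~\ref{ax:rejects:cone} for \(\desirset\) and, by Proposition~\ref{prop:mixingequalslexicographic}, makes \(\desirset\) mixing, whence \(\desirset\in\mixdesirsets\group{\rejectset}\). I expect this to be the delicate point, proved by contradiction: if \(x,y\notin R\) but some positive combination of them lies in \(R\), then maximality of \(R\) forces the cone generated by \(R\) and one of \(x,y\) to meet \(\posopts\) or to swallow some element of \(\rejectset\), and one has to convert this into a violation of admissibility of \(R\) itself. It is exactly here that one cannot shortcut the argument by extending to a \emph{total} (maximal) set of desirable options: when \(\optset\) contains a pair \(\opt,-\opt\), a total set is forced to retain one of the two, whereas a genuinely mixing set can---and here must---exclude both, which is why the structure theory of mixing (lexicographic) sets from~\cite{2017vancamp:phdthesis,2018vancamp:lexicographic,debock2019:choice:arxiv,debock2019:choice:isipta} is the natural tool.

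Finally, for the claim that \(\mixdesirsets\group{\rejectset}\) is the largest representing set: if \(\rejectset=\bigcap\cset{\rejectset[\desirset]}{\desirset\in\setofdesirsets}\) with \(\setofdesirsets\subseteq\mixdesirsets\), then each \(\desirset\in\setofdesirsets\) is mixing and satisfies \(\rejectset\subseteq\rejectset[\desirset]\), so \(\setofdesirsets\subseteq\mixdesirsets\group{\rejectset}\); combined with the necessity direction, this shows \(\mixdesirsets\group{\rejectset}\) itself represents \(\rejectset\) and dominates every other representing family.
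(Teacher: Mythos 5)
The paper never proves Theorem~\ref{theo:mixingrepresentation:twosided}: it imports it from the cited work of De Bock and De Cooman, so your proposal has to stand on its own. Its outer architecture does stand: sufficiency via Proposition~\ref{prop:mixingbinaryiffD} and the intersection-structure property of \(\mixrejectsets\); the reduction of necessity to a separation claim; the observation that \(\posi(\optset)\) can contain neither a positive option nor a member of \(\rejectset\) (a correct use of Axioms~\ref{ax:rejects:removepositivecombinations}, \ref{ax:rejects:mono} and~\ref{ax:rejects:pos}); the Zorn argument producing a maximal admissible cone \(R\); and the formal argument for the largest representing set. The genuine gap is the point you yourself flag and then leave as an expectation: that the complement \(\desirset\coloneqq\opts\setminus R\) of a \emph{maximal} admissible cone is closed under positive combinations. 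All the content specific to coherence and mixingness of \(\rejectset\) is concentrated in exactly this step, and your sketch of it (``maximality forces the cone generated by \(R\) and one of \(x,y\) to meet \(\posopts\) or to swallow some element of \(\rejectset\), and one has to convert this into a violation of admissibility'') describes the target, not the mechanism: in particular it never indicates where Axiom~\ref{ax:rejects:cone} enters, and without it the conversion cannot be carried out. As submitted, the proof is therefore incomplete at its decisive point.

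For the record, the step does close along the lines you indicate. Suppose \(x,y\notin R\) while \(z\coloneqq\lambda x+\mu y\in R\); since \(R\) is closed under positive scaling, both \(\lambda\) and \(\mu\) must be positive. Every element of \(\posi(R\cup\set{x})\) has the form \(r+\alpha x\) with \(r\in R\), \(\alpha\geq0\), and similarly for \(y\). By maximality each of these two cones fails admissibility, so each meets \(\posopts\) or includes a member of \(\rejectset\); and any element of \(\posi(R\cup\set{x})\cap\posopts\) must have \(\alpha>0\), because \(R\cap\posopts=\emptyset\). If \(w_x=r_x+\alpha x\) and \(w_y=r_y+\beta y\) both lie in \(\posopts\), then \(\beta\lambda w_x+\alpha\mu w_y=\beta\lambda r_x+\alpha\mu r_y+\alpha\beta z\) lies in \(R\cap\posopts\): contradiction. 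If \(w_x=r_x+\alpha x\in\posopts\) while \(\altoptset\in\rejectset\) satisfies \(\altoptset\subseteq\posi(R\cup\set{y})\), then \(\set{w_x}\in\rejectset\) by Axiom~\ref{ax:rejects:pos}; now apply Axiom~\ref{ax:rejects:cone} to \(\altoptset\) and \(\set{w_x}\), choosing for each \(v=r_v+\beta_v y\in\altoptset\) the coefficients \((\alpha\mu,\beta_v\lambda)\) if \(\beta_v>0\)---so that the \(x\)- and \(y\)-parts recombine into \(\alpha\beta_v z\in R\)---and \((1,0)\) if \(\beta_v=0\); this produces a member of \(\rejectset\) included in \(R\): contradiction. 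The case of two option sets \(\altoptset[x]\subseteq\posi(R\cup\set{x})\) and \(\altoptset[y]\subseteq\posi(R\cup\set{y})\) in \(\rejectset\) is handled identically, with coefficients \((\lambda\beta_v,\mu\alpha_u)\) for the pairs with \(\alpha_u,\beta_v>0\), and \((1,0)\) or \((0,1)\) when one of them vanishes. Hence \(\desirset\) is coherent, and since its complement \(R\) satisfies \(\posi(R)=R\), Proposition~\ref{prop:mixingequalslexicographic} makes it mixing, which completes your separation claim. So the route you chose is viable, but the proof you wrote stops just short of the one idea that makes it work.
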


How can we connect this choice of model, sets of desirable option sets, to the rejection and choice functions that I mentioned in the Introduction, and which are much more prevalent in the literature?
Their interpretation provides the clue.
Consider any option set~\(\optset\), and any option~\(\opt\in\optset\).
Then, with \(\optset\ominus\opt\coloneqq(\optset\setminus\set{\opt})-\opt=\cset{\altopt-\opt}{\altopt\in\optset,\altopt\neq\opt}\), we get
\begin{align*}
\opt\in\rejectfun(\optset)
&\ifandonlyif0\in\rejectfun(\optset-\opt)\\
&\ifandonlyif\text{ there is some~\(\altopt\in\optset\ominus\opt\) that is strictly preferred to~\(0\)}\\
&\ifandonlyif\optset\ominus\opt\in\rejectset.
\end{align*}
In these equivalences, the first one follows from compatibility of the rejection function~\(\rejectfun\) with vector addition, the second one follows from the particular interpretation we've given to the rejection function, and the third one follows from the definition of the set of desirable option sets~\(\rejectset\).
\emph{This tells us that, given this particular interpretation, choice and rejection functions are in a one-to-one relation with sets of desirable option sets.}

\begin{runningexample}\label{ex:identical:sides:nonbinary:inference}
In our coin example, we are now in a position to find out what are the consequences of making the assessment~\(\assessment_{\heads\,\text{or}\,\tails}\), or in other words, how to use the set of desirable option sets model in order to represent a subject's beliefs that the coin has two identical sides.
The conclusions we'll reach are valid on any choice of the background cone~\(\difs_{\gblgt0}\) or~\(\difs_{\gblpgt0}\).

Let
\begin{equation*}
\left\{
\begin{aligned}
\rejectset[\heads]
&\coloneqq\rejectset[{\desirset[\heads]}]
=\cset{\optset\in\optsets}{\optset\cap\desirset[\heads]\neq\emptyset}\\
\rejectset[\tails]
&\coloneqq\rejectset[{\desirset[\tails]}]
=\cset{\optset\in\optsets}{\optset\cap\desirset[\tails]\neq\emptyset}
\end{aligned}
\right.
\end{equation*}
be the coherent (and mixing) sets of desirable option sets that correspond to the respective desirability assessments~\(\assessment_\heads\) and~\(\assessment_\tails\), so to knowing that the coin has identical sides which are heads, or tails, respectively.
It follows after elementary considerations that the smallest coherent set of desirable option sets that includes the assessment~\(\assessment_{\heads\,\text{or}\,\tails}\) is given by
\begin{equation*}
\rejectset[{\heads\,\text{or}\,\tails}]
\coloneqq\natexrejectset(\assessment_{\heads\,\text{or}\,\tails})
=\cset{\optset\in\optsets}
{\group{\exists\opt\in\desirset[\heads]}\group{\exists\altopt\in\desirset[\tails]}\set{\opt,\altopt}\subseteq\optset}
=\rejectset[\heads]\cap\rejectset[\tails],
\end{equation*}
so this is the model we are after.
This coherent set of desirable option sets~\(\rejectset[{\heads\,\text{or}\,\tails}]\) is moreover mixing, because the coherent sets of desirable option sets~\(\rejectset[\heads]\) and~\(\rejectset[\tails]\) are.
\stopit
\end{runningexample}

\section{Linear and superlinear functionals}\label{sec:technical-aspects}
Because the notions of {\essarchimty} and {\archimty} that I intend to introduce further on rely on an idea of openness---and therefore closeness---I'll assume from now on that the option space~\(\opts\) constitutes a \emph{Banach space} with a norm~\(\optnorm{\bolleke}\) and a corresponding topological closure operator~\(\topcls\) and interior operator~\(\topint\).
In this section, I've gathered a few useful definitions and basic results for linear and superlinear bounded real functionals on the Banach space~\(\opts\).
I'll use these functionals to generalise to our more general context the linear previsions, and more generally, the coherent lower previsions defined by Peter Walley~\cite{walley1991} on spaces of gambles; see also~\cite{troffaes2013:lp} for more details on such coherent lower previsions.
The linear and superlinear bounded real functionals also generalise the exact functionals introduced and studied by Maa{\ss} \cite{maass2002,cooman2005e}. 

We will call a real functional~\(\ldualopt\colon\opts\to\reals\) \emph{superlinear} if it's superadditive and non-negatively homogeneous:
\begin{enumerate}[label=\({\mathrm{SL}}_{\arabic*}\).,ref=\({\mathrm{SL}}_{\arabic*}\),leftmargin=*]
\item\label{ax:superlinear:superadditive} \(\ldualopt(\opt+\altopt)\geq\ldualopt(\opt)+\ldualopt(\altopt)\) for all~\(\opt,\altopt\in\opts\);\hfill\upshape[superadditivity]
\item\label{ax:superlinear:homogeneous} \(\ldualopt(\lambda\opt)=\lambda\ldualopt(\opt)\) for all~\(\opt\in\opts\) and all real~\(\lambda\geq0\).\hfill\upshape[non-negative homogeneity]
\end{enumerate}
With any real functional~\(\ldualopt\colon\opts\to\reals\) we can associate its \emph{conjugate} (functional)~\(\udualopt\colon\opts\to\reals\) defined by
\begin{equation*}
\udualopt(\opt)\coloneqq-\ldualopt(-\opt)
\text{ for all~\(\opt\in\opts\)}.
\end{equation*}
A real functional~\(\dualopt\colon\opts\to\reals\) is called \emph{linear} if it's both superlinear and \emph{self-conjugate}, i.e.~equal to its conjugate.
This amounts to requiring that
\begin{enumerate}[label=\({\mathrm{L}}\).,ref=\({\mathrm{L}}\),leftmargin=*]
\item\label{ax:linear} \(\dualopt(\lambda\opt+\mu\altopt)=\lambda\dualopt(\opt)+\mu\dualopt(\altopt)\) for all~\(\opt,\altopt\in\opts\) and~\(\lambda,\mu\in\reals\).\hfill\upshape[linearity]
\end{enumerate}

Let's list a few useful properties of superlinear real functionals and their conjugates, where we use the shorthand notation \(\maxludualopt{\opt}\coloneqq\max\set{\abs{\ldualopt(\opt)},\abs{\udualopt(\opt)}}\) for all~\(\opt\in\opts\).

\begin{proposition}\label{prop:ludualopt:properties}
Consider any superlinear real functional~\(\ldualopt\colon\opts\to\reals\) and its conjugate~\(\udualopt\), then
\begin{enumerate}[label=\upshape(\roman*),leftmargin=*]
\item\label{it:ludualopt:properties:lsmalleru} \(\ldualopt(\opt)\leq\udualopt(\opt)\) for all~\(\opt\in\opts\);
\item\label{it:ludualopt:properties:mixed:additivity} \(\ldualopt(\opt)+\ldualopt(\altopt)\leq\ldualopt(\opt+\altopt)\leq\ldualopt(\opt)+\udualopt(\altopt)\leq\udualopt(\opt+\altopt)\leq\udualopt(\opt)+\udualopt(\altopt)\) for all~\(\opt,\altopt\in\opts\);
\item\label{it:ludualopt:properties:bounds} \(\max\set{\abs{\ldualopt(\opt)-\ldualopt(\altopt)},\abs{\udualopt(\opt)-\udualopt(\altopt)}}\leq\maxludualopt{\opt-\altopt}\) for all~\(\opt,\altopt\in\opts\);
\item\label{it:ludualopt:properties:first:bound:for:norm} \(\maxludualopt{\lambda\opt}=\abs{\lambda}\maxludualopt{\opt}\) for all~\(\opt\in\opts\) and~\(\lambda\in\reals\);
\item\label{it:ludualopt:properties:second:bound:for:norm} \(\maxludualopt{\lambda\opt+\mu\altopt}\leq\abs{\lambda}\maxludualopt{\opt}+\abs{\mu}\maxludualopt{\altopt}\) for all~\(\opt,\altopt\in\opts\) and~\(\lambda,\mu\in\reals\).
\end{enumerate}
\end{proposition}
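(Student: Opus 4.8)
The plan is to derive all five items from the two defining properties~\ref{ax:superlinear:superadditive} and~\ref{ax:superlinear:homogeneous} together with the definition of the conjugate, the key intermediate result being the full chain in~\ref{it:ludualopt:properties:mixed:additivity}, from which the remaining items follow with little extra work. First I would record a few elementary facts: non-negative homogeneity~\ref{ax:superlinear:homogeneous} with \(\lambda=0\) gives \(\ldualopt(0)=0\), hence \(\udualopt(0)=0\); the conjugate \(\udualopt\) is \emph{sublinear}, i.e.\ subadditive and non-negatively homogeneous, since \(\udualopt(\opt+\altopt)=-\ldualopt(-\opt-\altopt)\leq-\ldualopt(-\opt)-\ldualopt(-\altopt)=\udualopt(\opt)+\udualopt(\altopt)\) by~\ref{ax:superlinear:superadditive}, and \(\udualopt(\lambda\opt)=\lambda\udualopt(\opt)\) for \(\lambda\geq0\) similarly; and finally the reflection identities \(\ldualopt(-\opt)=-\udualopt(\opt)\), \(\udualopt(-\opt)=-\ldualopt(\opt)\), which immediately give \(\maxludualopt{-\opt}=\maxludualopt{\opt}\).

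For~\ref{it:ludualopt:properties:lsmalleru}, applying~\ref{ax:superlinear:superadditive} to \(\opt\) and \(-\opt\) gives \(\ldualopt(\opt)+\ldualopt(-\opt)\leq\ldualopt(0)=0\), i.e.\ \(\ldualopt(\opt)\leq-\ldualopt(-\opt)=\udualopt(\opt)\). For~\ref{it:ludualopt:properties:mixed:additivity}, the outer two inequalities are precisely~\ref{ax:superlinear:superadditive} for \(\ldualopt\) and its sublinear analogue for \(\udualopt\). The second inequality \(\ldualopt(\opt+\altopt)\leq\ldualopt(\opt)+\udualopt(\altopt)\) follows by applying~\ref{ax:superlinear:superadditive} to \(\opt+\altopt\) and \(-\altopt\), which gives \(\ldualopt(\opt+\altopt)+\ldualopt(-\altopt)\leq\ldualopt(\opt)\), and then moving \(\ldualopt(-\altopt)=-\udualopt(\altopt)\) to the right. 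The third inequality is the conjugate counterpart of the second: substituting \((-\altopt,-\opt)\) for \((\opt,\altopt)\) in the second inequality, negating, and using the reflection identities turns it into \(\udualopt(\opt+\altopt)\geq\ldualopt(\opt)+\udualopt(\altopt)\). This is the part I expect to require the most care, since it is where the interplay between \(\ldualopt\) and \(\udualopt\) must be tracked closely; once the chain is in place, the rest is essentially bookkeeping.

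For~\ref{it:ludualopt:properties:bounds}, I would sandwich the two differences. Superadditivity applied to \(\altopt\) and \(\opt-\altopt\) gives \(\ldualopt(\opt)-\ldualopt(\altopt)\geq\ldualopt(\opt-\altopt)\), while the second inequality of~\ref{it:ludualopt:properties:mixed:additivity} applied to \(\altopt\) and \(\opt-\altopt\) gives \(\ldualopt(\opt)-\ldualopt(\altopt)\leq\udualopt(\opt-\altopt)\); an analogous pair, using subadditivity of \(\udualopt\) and the third inequality of~\ref{it:ludualopt:properties:mixed:additivity}, sandwiches \(\udualopt(\opt)-\udualopt(\altopt)\) between the same two bounds \(\ldualopt(\opt-\altopt)\) and \(\udualopt(\opt-\altopt)\). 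Since any real number lying between \(\ldualopt(\opt-\altopt)\) and \(\udualopt(\opt-\altopt)\) has absolute value at most \(\maxludualopt{\opt-\altopt}\), taking the maximum over the two differences yields~\ref{it:ludualopt:properties:bounds}.

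Finally,~\ref{it:ludualopt:properties:first:bound:for:norm} follows by a sign split: for \(\lambda\geq0\), homogeneity of both \(\ldualopt\) and \(\udualopt\) pulls \(\lambda\) out of each absolute value, and for \(\lambda<0\) the reflection identity reduces the claim to the non-negative case, giving \(\maxludualopt{\lambda\opt}=\abs{\lambda}\maxludualopt{\opt}\). For~\ref{it:ludualopt:properties:second:bound:for:norm}, I would first show that \(\opt\mapsto\maxludualopt{\opt}\) is subadditive: by~\ref{it:ludualopt:properties:mixed:additivity} both \(\ldualopt(\opt+\altopt)\) and \(\udualopt(\opt+\altopt)\) lie in the interval \([\ldualopt(\opt)+\ldualopt(\altopt),\udualopt(\opt)+\udualopt(\altopt)]\), whose endpoints have absolute value at most \(\maxludualopt{\opt}+\maxludualopt{\altopt}\), so \(\maxludualopt{\opt+\altopt}\leq\maxludualopt{\opt}+\maxludualopt{\altopt}\). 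Combining this subadditivity with~\ref{it:ludualopt:properties:first:bound:for:norm} applied to \(\lambda\opt\) and \(\mu\altopt\) then gives \(\maxludualopt{\lambda\opt+\mu\altopt}\leq\maxludualopt{\lambda\opt}+\maxludualopt{\mu\altopt}=\abs{\lambda}\maxludualopt{\opt}+\abs{\mu}\maxludualopt{\altopt}\), which is~\ref{it:ludualopt:properties:second:bound:for:norm}.
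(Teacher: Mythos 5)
Your proof is correct, and while items \ref{it:ludualopt:properties:lsmalleru} and \ref{it:ludualopt:properties:mixed:additivity} coincide with the paper's argument (the same applications of Axiom~\ref{ax:superlinear:superadditive} to the pairs \((\opt,-\opt)\) and \((\opt+\altopt,-\altopt)\), plus conjugacy for the third inequality), your route through \ref{it:ludualopt:properties:bounds}--\ref{it:ludualopt:properties:second:bound:for:norm} is genuinely different. For \ref{it:ludualopt:properties:bounds} the paper argues by a case split on the sign of \(\udualopt(\opt)-\udualopt(\altopt)\) together with an exchange-of-roles argument, and then handles the \(\ldualopt\)-difference by the substitution \(\opt\to-\opt\), \(\altopt\to-\altopt\); you instead sandwich \emph{both} differences inside the interval \([\ldualopt(\opt-\altopt),\udualopt(\opt-\altopt)]\) and invoke the elementary fact that any point of an interval has absolute value at most the larger of the endpoints' absolute values, which avoids all case distinctions. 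For \ref{it:ludualopt:properties:first:bound:for:norm} the paper first proves the inequality \(\maxludualopt{\lambda\opt}\leq\abs{\lambda}\maxludualopt{\opt}\) and then upgrades it to an equality via the inversion trick (applying the inequality again with \(\lambda^{-1}\)); you obtain the equality directly, since for \(\lambda\geq0\) non-negative homogeneity scales \(\abs{\ldualopt}\) and \(\abs{\udualopt}\) simultaneously, and \(\lambda<0\) reduces to that case through the reflection identity \(\maxludualopt{-\opt}=\maxludualopt{\opt}\). For \ref{it:ludualopt:properties:second:bound:for:norm} the paper splits on the sign of \(\ldualopt(\lambda\opt+\mu\altopt)\) and computes each case from \ref{it:ludualopt:properties:mixed:additivity} and \ref{it:ludualopt:properties:first:bound:for:norm}; you instead prove outright that \(\maxludualopt{\cdot}\) is subadditive (again by the interval observation applied to the chain in \ref{it:ludualopt:properties:mixed:additivity}) and then combine with \ref{it:ludualopt:properties:first:bound:for:norm}. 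The net comparison: both proofs draw on the same toolbox (superadditivity, conjugacy, the chain \ref{it:ludualopt:properties:mixed:additivity}), but yours is case-free and makes explicit the structural point that \(\maxludualopt{\cdot}\) is a seminorm---absolutely homogeneous and subadditive---which is exactly the property the paper exploits later (e.g.\ when bounding operator norms of superlinear functionals in the running example), whereas the paper's sign-by-sign computations are longer but keep every step entirely local and self-contained.
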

\noindent Observe that such superlinear real functionals~\(\ldualopt\) are not necessarily monotone with respect to the background ordering~\(\optgt\), in the sense that~\(\altopt\optgt\opt\) implies that \(\ldualopt(\altopt)\geq\ldualopt(\opt)\).
We'll come back to this monotonicity issue in Section~\ref{sec:essential:archimedeanity}, where we focus on particular so-called \emph{positive} superlinear real functionals.

\begin{proof}[Proof of Proposition~\ref{prop:ludualopt:properties}]
For~\ref{it:ludualopt:properties:lsmalleru}, observe that~\(\opt+(-\opt)=0\) and therefore
\begin{equation*}
0=\ldualopt(0)\geq\ldualopt(\opt)+\ldualopt(-\opt)=\ldualopt(\opt)-\udualopt(\opt),
\end{equation*}
where the first equality follows from Axiom~\ref{ax:superlinear:homogeneous}, and the inequality from Axiom~\ref{ax:superlinear:superadditive}.

For~\ref{it:ludualopt:properties:mixed:additivity}, the first and last inequalities follow from Axiom~\ref{ax:superlinear:superadditive} (and conjugacy).
We prove the second inequality; the third then follows again by considering the consequences of conjugacy.
Since \(\opt=(\opt+\altopt)-\altopt\), we infer from Axiom~\ref{ax:superlinear:superadditive} and conjugacy that, indeed,
\begin{equation*}
\ldualopt(\opt)
\geq\ldualopt(\opt+\altopt)+\ldualopt(-\altopt)
=\ldualopt(\opt+\altopt)-\udualopt(\altopt).
\end{equation*}

For~\ref{it:ludualopt:properties:bounds}, it suffices to prove that \(\abs{\udualopt(\opt)-\udualopt(\altopt)}\leq\max\set{\abs{\udualopt(\opt-\altopt)},\abs{\udualopt(\altopt-\opt)}}\), because replacing~\(\opt\) with~\(-\opt\) and~\(\altopt\) with~\(-\altopt\) then completes the proof.
Observe that it follows from Axiom~\ref{ax:superlinear:superadditive} and conjugacy that, since \(\opt=(\opt-\altopt)+\altopt\), \(\udualopt(\opt)\leq\udualopt(\opt-\altopt)+\udualopt(\altopt)\), and therefore
\begin{equation}\label{eq:ludualopt:properties:bounds}
\udualopt(\opt-\altopt)\geq\udualopt(\opt)-\udualopt(\altopt)
\text{ for all~\(\opt,\altopt\in\opts\)}.
\end{equation}
There are now two possibilities.
The first is that \(\udualopt(\opt)\geq\udualopt(\altopt)\), and then Equation~\eqref{eq:ludualopt:properties:bounds} guarantees that
\begin{equation*}
\abs{\udualopt(\opt)-\udualopt(\altopt)}
\leq\abs{\udualopt(\opt-\altopt)}
\leq\max\set{\abs{\udualopt(\opt-\altopt)},\abs{\udualopt(\altopt-\opt)}}.
\end{equation*}
The second case is that \(\udualopt(\opt)\leq\udualopt(\altopt)\), and then simply exchanging the roles of~\(\opt\) and~\(\altopt\) in the argument above leads to the (same) inequality:
\begin{equation*}
\abs{\udualopt(\altopt)-\udualopt(\opt)}
\leq\max\set{\abs{\udualopt(\altopt-\opt)},\abs{\udualopt(\opt-\altopt)}}.
\end{equation*}

For~\ref{it:ludualopt:properties:first:bound:for:norm}, we first prove the inequality~\(\maxludualopt{\lambda\opt}\leq\abs{\lambda}\maxludualopt{\opt}\).
To do this, we will prove that \(\abs{\ldualopt(\lambda\opt)}\leq\abs{\lambda}\max\set{\abs{\ldualopt(\opt)},\abs{\udualopt(\opt)}}\).
The rest of the proof for the inequality then follows from conjugacy and replacing~\(\lambda\) with~\(-\lambda\).
Let's first consider the case that~\(\lambda\geq0\).
It then follows from Axiom~\ref{ax:superlinear:homogeneous} that, indeed,
\begin{equation*}
\abs{\ldualopt(\lambda\opt)}
=\abs{\lambda\ldualopt(\opt)}
=\abs{\lambda}\abs{\ldualopt(\opt)}
\leq\abs{\lambda}\max\set{\abs{\ldualopt(\opt)},\abs{\udualopt(\opt)}}.
\end{equation*}
If, on the other hand, \(\lambda\leq0\), then it follows from Axiom~\ref{ax:superlinear:homogeneous} and conjugacy that, indeed,
\begin{equation*}
\abs{\ldualopt(\lambda\opt)}
=\abs{\lambda\udualopt(\opt)}
=\abs{\lambda}\abs{\udualopt(\opt)}
\leq\abs{\lambda}\max\set{\abs{\ldualopt(\opt)},\abs{\udualopt(\opt)}}.
\end{equation*}
Next, we prove that, actually, \(\maxludualopt{\lambda\opt}=\abs{\lambda}\maxludualopt{\opt}\).
It's obvious that the equality holds for \(\lambda=0\) [use Axiom~\ref{ax:superlinear:homogeneous}], so we'll assume that \(\lambda\neq0\). 
If we also invoke the inequality already proved with \(\lambda\) replaced by \(\lambda^{-1}\), we get that
\begin{equation*}
\maxludualopt{\opt}
=\maxludualopt{\lambda^{-1}(\lambda\opt)}
\leq\abs{\lambda^{-1}}\maxludualopt{\lambda\opt}
\leq\abs{\lambda^{-1}}\abs{\lambda}\maxludualopt{\opt}
=\maxludualopt{\opt},
\end{equation*}
which also proves the equality for~\(\lambda\neq0\).

Finally, for~\ref{it:ludualopt:properties:second:bound:for:norm}, we'll prove that
\begin{equation*}
\abs{\ldualopt(\lambda\opt+\mu\altopt)}
\leq\abs{\lambda}\max\set{\abs{\ldualopt(\opt)},\abs{\udualopt(\opt)}}
+\abs{\mu}\max\set{\abs{\ldualopt(\altopt)},\abs{\udualopt(\altopt)}}.
\end{equation*}
The rest of the proof for this statement then follows from conjugacy and replacing~\(\lambda\) with~\(-\lambda\) and~\(\mu\) with~\(-\mu\).
We first consider the case that~\(\ldualopt(\lambda\opt+\mu\altopt)\geq0\).
Then
\begin{align*}
\abs{\ldualopt(\lambda\opt+\mu\altopt)}
=\ldualopt(\lambda\opt+\mu\altopt)
&\leq\udualopt(\lambda\opt)+\udualopt(\mu\altopt)
\leq\abs{\udualopt(\lambda\opt)}+\abs{\udualopt(\mu\altopt)}\\
&\leq\abs{\lambda}\max\set{\abs{\ldualopt(\opt)},\abs{\udualopt(\opt)}}
+\abs{\mu}\max\set{\abs{\ldualopt(\altopt)},\abs{\udualopt(\altopt)}},
\end{align*}
where the first inequality follows from~\ref{it:ludualopt:properties:mixed:additivity} and the third one from~\ref{it:ludualopt:properties:first:bound:for:norm}.
For the case that~\(\ldualopt(\lambda\opt+\mu\altopt)\leq0\), we get
\begin{align*}
\abs{\ldualopt(\lambda\opt+\mu\altopt)}
=-\ldualopt(\lambda\opt+\mu\altopt)
&\leq-\ldualopt(\lambda\opt)-\ldualopt(\mu\altopt)
=\udualopt(-\lambda\opt)+\udualopt(-\mu\altopt)\\
&\leq\abs{\udualopt(-\lambda\opt)}+\abs{\udualopt(-\mu\altopt)}\\
&\leq\abs{\lambda}\max\set{\abs{\ldualopt(\opt)},\abs{\udualopt(\opt)}}
+\abs{\mu}\max\set{\abs{\ldualopt(\altopt)},\abs{\udualopt(\altopt)}},
\end{align*}
where the first inequality follows from~\ref{it:ludualopt:properties:mixed:additivity} and the third one from~\ref{it:ludualopt:properties:first:bound:for:norm}; the second equality follows from conjugacy.
\end{proof}

A real functional~\(\ddualopt\colon\opts\to\reals\) on~\(\opts\) is called \emph{bounded} if its \emph{operator norm}~\(\ddualoptnorm{\ddualopt}<+\infty\), where we let
\begin{equation*}
\ddualoptnorm{\ddualopt}
\coloneqq\sup_{\opt\in\opts\setminus\set{0}}\frac{\abs{\ddualopt(\opt)}}{\optnorm{\opt}}.
\end{equation*}
We'll denote by~\(\ddualopts\) the linear space of all such bounded real functionals on~\(\opts\).
For any bounded real functional~\(\ldualopt\in\ddualopts\), its conjugate functional~\(\udualopt\) is also bounded---so \(\udualopt\in\ddualopts\)---because obviously
\begin{equation*}
\ddualoptnorm{\udualopt}
=\sup_{\opt\in\opts\setminus\set{0}}
\frac{\abs{\udualopt(\opt)}}{\optnorm{\opt}}
=\sup_{-\altopt\in\opts\setminus\set{0}}
\frac{\abs{\udualopt(-\altopt)}}{\optnorm{-\altopt}}
=\sup_{\altopt\in\opts\setminus\set{0}}
\frac{\abs{\ldualopt(\altopt)}}{\optnorm{\altopt}}
=\ddualoptnorm{\ldualopt}.
\end{equation*}

The linear space~\(\ddualopts\) can be topologised by the operator norm~\(\ddualoptnorm{\bolleke}\), which leads to the so-called \emph{original topology} on~\(\ddualopts\).
If we associate with any~\(\opt\in\opts\) the so-called \emph{evaluation functional}~\(\opt^\circ\colon\ddualopts\to\reals\), defined by
\begin{equation*}
\opt^\circ(\ddualopt)\coloneqq\ddualopt(\opt)
\text{ for all~\(\ddualopt\in\ddualopts\)},
\end{equation*}
then \(\opt^\circ\) is clearly a real linear functional on the normed linear space~\(\ddualopts\), whose operator norm
\begin{equation*}
\sup_{\ddualopt\in\ddualopts\setminus\set{0}}
\frac{\abs{\opt^\circ(\ddualopt)}}{\ddualoptnorm{\ddualopt}}
\leq\sup_{\ddualopt\in\ddualopts\setminus\set{0}}
\abs{\opt^\circ(\ddualopt)}\frac{\optnorm{\opt}}{\abs{\ddualopt(\opt)}}
=\optnorm{\opt}<+\infty
\end{equation*}
is finite, which implies that \(\opt^\circ\) is a continuous real linear functional on~\(\ddualopts\) with respect to the original topology on~\(\ddualopts\)~\cite[Section~23.1]{schechter1997}.

We'll also retopologise~\(\ddualopts\) with the topology of pointwise convergence on~\(\ddualopts\), which is the weakest topology that makes all evaluation functionals~\(\opt^\circ\), \(\opt\in\opts\) continuous.
It is therefore weaker than the original topology induced by the norm~\(\ddualoptnorm{\bolleke}\).
We'll call this topology the \emph{{\weakcircle} topology} on~\(\ddualopts\).

An interesting subspace of~\(\ddualopts\) is the linear space~\(\dualopts\) of all \emph{linear} bounded---and therefore continuous \cite[Section~23.1]{schechter1997}---real functionals on~\(\opts\).
We'll also consider the set~\(\ldualopts\) of all \emph{superlinear} bounded real functionals~\(\ldualopt\) on~\(\opts\).
Obviously, \(\ldualopts\) is a convex cone in~\(\ddualopts\), and \(\dualopts\subseteq\ldualopts\subseteq\ddualopts\).
The relativisation to~\(\dualopts\) of the {\weakcircle} topology on~\(\ddualopts\) is of course the commonly considered so-called \emph{{\weakstar} topology} on~\(\dualopts\).

Using Proposition~\ref{prop:ludualopt:properties}, it now takes but a small (and fairly standard) effort to prove the Lipschitz continuity---and hence the (uniform) continuity---of all superlinear bounded real functionals.

\begin{proposition}\label{prop:ludualopt:continuous}
Any superlinear real functional on~\(\opts\), as well as its conjugate~\(\udualopt\), is Lipschitz continuous if and only if it's bounded.
\end{proposition}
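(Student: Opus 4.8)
The plan is to prove the two implications separately, and to exploit the fact—already recorded just before the statement—that a superlinear functional $\ldualopt$ and its conjugate $\udualopt$ share the same operator norm, so that boundedness of the one is equivalent to boundedness of the other. Throughout I would use that $\ldualopt(0)=0$, which is immediate from the non-negative homogeneity Axiom~\ref{ax:superlinear:homogeneous}, and likewise $\udualopt(0)=-\ldualopt(0)=0$.

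For the sufficiency direction (bounded $\then$ Lipschitz), I would start from Proposition~\ref{prop:ludualopt:properties}\ref{it:ludualopt:properties:bounds}, which already supplies the difference estimate
\[
\max\set{\abs{\ldualopt(\opt)-\ldualopt(\altopt)},\abs{\udualopt(\opt)-\udualopt(\altopt)}}\leq\maxludualopt{\opt-\altopt}
\]
for all $\opt,\altopt\in\opts$. Since $\maxludualopt{\opt-\altopt}=\max\set{\abs{\ldualopt(\opt-\altopt)},\abs{\udualopt(\opt-\altopt)}}$ and both $\ldualopt$ and $\udualopt$ have operator norm $\ddualoptnorm{\ldualopt}$, each of $\abs{\ldualopt(\opt-\altopt)}$ and $\abs{\udualopt(\opt-\altopt)}$ is bounded above by $\ddualoptnorm{\ldualopt}\optnorm{\opt-\altopt}$. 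Hence both $\ldualopt$ and $\udualopt$ satisfy the Lipschitz inequality with the single constant $\ddualoptnorm{\ldualopt}$, and Lipschitz continuity—and a fortiori uniform continuity—follows at once.

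For the necessity direction (Lipschitz $\then$ bounded), I would suppose $\ldualopt$ is Lipschitz with some constant $L$. Setting $\altopt=0$ and using $\ldualopt(0)=0$ yields $\abs{\ldualopt(\opt)}\leq L\optnorm{\opt}$ for every $\opt\in\opts$, whence $\ddualoptnorm{\ldualopt}\leq L<+\infty$, so $\ldualopt$ is bounded. The very same computation applied to $\udualopt$ (which also vanishes at $0$) shows that Lipschitz continuity of $\udualopt$ forces its boundedness; and since $\ddualoptnorm{\udualopt}=\ddualoptnorm{\ldualopt}$, the boundedness of $\ldualopt$ and of $\udualopt$ are one and the same statement.

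\emph{Main obstacle:} I do not expect a genuine difficulty here, because the quantitative heart of the argument—the estimate controlling $\abs{\ldualopt(\opt)-\ldualopt(\altopt)}$ by $\maxludualopt{\opt-\altopt}$—is already done in Proposition~\ref{prop:ludualopt:properties}. The only point needing care is the bookkeeping of the conjugate: one must verify that a single Lipschitz constant serves simultaneously for $\ldualopt$ and $\udualopt$, and invoke the equality of their operator norms so that the one scalar $\ddualoptnorm{\ldualopt}$ suffices for both. Beyond that, the proof is a direct translation of the stated estimates.
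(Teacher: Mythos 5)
Your proposal is correct and follows essentially the same route as the paper's proof: both directions rest on the difference estimate of Proposition~\ref{prop:ludualopt:properties}\ref{it:ludualopt:properties:bounds} combined with the equality \(\ddualoptnorm{\udualopt}=\ddualoptnorm{\ldualopt}\) to get the Lipschitz constant \(\ddualoptnorm{\ldualopt}\), and on \(\ldualopt(0)=0\) to extract boundedness from the Lipschitz inequality. The only cosmetic difference is that the paper writes out the argument for \(\ldualopt\) and remarks that the case of \(\udualopt\) is "essentially the same", whereas you track both functionals explicitly throughout.
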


\begin{proof}
We provide the proof for~\(\ldualopt\).
The argument for~\(\udualopt\) is essentially the same.
First, assume that \(\ldualopt\) is bounded, so \(\ldualopt\in\ldualopts\).
We then have to prove that there is some real~\(K\geq0\) such that
\begin{equation}\label{eq:ludualopt:continuous}
(\forall\opt,\altopt\in\opts)
\abs{\ldualopt(\opt)-\ldualopt(\altopt)}\leq K\optnorm{\opt-\altopt}.
\end{equation}
It follows from the boundedness of~\(\ldualopt\) that \(\ddualoptnorm{\ldualopt}\in\nonnegreals\) and that
\begin{equation*}
(\forall\altopttoo\in\opts)
\abs{\udualopt(\altopttoo)}\leq\ddualoptnorm{\ldualopt}\optnorm{\altopttoo}.
\end{equation*}
This allows us to infer from Proposition~\ref{prop:ludualopt:properties}\ref{it:ludualopt:properties:bounds} that
\begin{equation*}
\abs{\ldualopt(\opt)-\ldualopt(\altopt)}
\leq\max\set{\abs{\udualopt(\opt-\altopt)},\abs{\udualopt(\altopt-\opt)}}
\leq \ddualoptnorm{\ldualopt}\optnorm{\opt-\altopt},
\end{equation*}
which indeed guarantees that the Lipschitz property~\eqref{eq:ludualopt:continuous} holds, if we let \(K\coloneqq\ddualoptnorm{\ldualopt}\).

Next, let us assume that \(\ldualopt\) is Lipschitz continuous, so there is some real~\(K\geq0\) such that the Lipschitz property~\eqref{eq:ludualopt:continuous} holds.
Then we have to prove that \(\ldualopt\) is bounded.
Observe that it follows from Axiom~\ref{ax:superlinear:homogeneous} that \(\ldualopt(0)=0\), so it follows from the Lipschitz continuity of~\(\ldualopt\) that in particular \(\group{\forall\opt\in\opts}\abs{\ldualopt(\opt)}\leq K\optnorm{\opt}\), and therefore also
\begin{equation*}
\ddualoptnorm{\ldualopt}
=\sup_{\opt\in\opts\setminus\set{0}}\frac{\abs{\ldualopt(\opt)}}{\optnorm{\opt}}
\leq K<+\infty,
\end{equation*}
so \(\ldualopt\) is indeed bounded.
\end{proof}

If we consider, for any~\(\ldualopt\in\ldualopts\), its \emph{set of dominating continuous linear functionals}
\begin{align*}
\dualopts(\ldualopt)
\coloneqq&\cset{\dualopt\in\dualopts}
{(\forall\opt\in\opts)\ldualopt(\opt)\leq\dualopt(\opt)}\\
=&\cset{\dualopt\in\dualopts}
{(\forall\opt\in\opts)\ldualopt(\opt)\leq\dualopt(\opt)\leq\udualopt(\opt)},
\end{align*}
then a well-known version of the Hahn--Banach theorem \cite[Section~28.4, HB17]{schechter1997} can be formulated as follows.

\begin{theorem}[Hahn--Banach theorem]\label{theo:hahn:banach}
Consider any~\(\ldualopt\in\ldualopts\).
Then for all~\(\opt\in\opts\) there is some~\(\dualopt\in\dualopts(\ldualopt)\) such that \(\ldualopt(\opt)=\dualopt(\opt)\).
\end{theorem}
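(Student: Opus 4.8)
The plan is to deduce the statement from the classical dominated-extension form of the Hahn--Banach theorem---the very result \cite[Section~28.4, HB17]{schechter1997} that the statement paraphrases. The guiding observation is that the conjugate~\(\udualopt\) of~\(\ldualopt\) is \emph{sublinear}: it is subadditive and non-negatively homogeneous, both facts following immediately from conjugacy together with Axioms~\ref{ax:superlinear:superadditive} and~\ref{ax:superlinear:homogeneous} for~\(\ldualopt\). Moreover, for a linear functional~\(\dualopt\), the condition \(\ldualopt(\altopt)\leq\dualopt(\altopt)\) for all \(\altopt\in\opts\) is, by linearity of~\(\dualopt\) and the definition of the conjugate, equivalent to \(\dualopt(\altopt)\leq\udualopt(\altopt)\) for all \(\altopt\in\opts\). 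Hence membership of~\(\dualopt\) in~\(\dualopts(\ldualopt)\) amounts to~\(\dualopt\) being dominated from above by the sublinear functional~\(\udualopt\), which is exactly the setting in which the classical theorem produces linear functionals.

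Next I would fix the option~\(\opt\in\opts\) and seed a linear functional on the one-dimensional subspace \(M\coloneqq\linspan\set{\opt}\) by letting \(f\group{\lambda\opt}\coloneqq\lambda\ldualopt(\opt)\) for all \(\lambda\in\reals\); this is well defined and linear on~\(M\). The key verification is that~\(f\) is dominated by~\(\udualopt\) on~\(M\), i.e.\ \(f\group{\lambda\opt}\leq\udualopt\group{\lambda\opt}\) for every \(\lambda\in\reals\). For \(\lambda\geq0\) this follows from non-negative homogeneity and Proposition~\ref{prop:ludualopt:properties}\ref{it:ludualopt:properties:lsmalleru}, since \(f\group{\lambda\opt}=\lambda\ldualopt(\opt)\leq\lambda\udualopt(\opt)=\udualopt\group{\lambda\opt}\); for \(\lambda<0\) conjugacy turns the inequality into an equality, because \(\udualopt\group{\lambda\opt}=-\ldualopt\group{-\lambda\opt}=\lambda\ldualopt(\opt)=f\group{\lambda\opt}\). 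In either case \(f\leq\udualopt\) on~\(M\), and by construction \(f(\opt)=\ldualopt(\opt)\).

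I would then invoke the classical Hahn--Banach theorem to extend~\(f\) from~\(M\) to a linear functional~\(\dualopt\) on all of~\(\opts\) satisfying \(\dualopt\leq\udualopt\) pointwise. By the equivalence noted above this gives \(\ldualopt\leq\dualopt\leq\udualopt\), so \(\dualopt\in\dualopts(\ldualopt)\) once its boundedness is established, and \(\dualopt(\opt)=f(\opt)=\ldualopt(\opt)\) because~\(\dualopt\) extends~\(f\), which is the desired touching condition. Boundedness follows by sandwiching: for every \(\altopt\in\opts\) we have \(\dualopt(\altopt)\leq\udualopt(\altopt)\leq\ddualoptnorm{\udualopt}\optnorm{\altopt}\) and likewise \(-\dualopt(\altopt)=\dualopt(-\altopt)\leq\udualopt(-\altopt)\leq\ddualoptnorm{\udualopt}\optnorm{\altopt}\), so that \(\abs{\dualopt(\altopt)}\leq\ddualoptnorm{\ldualopt}\optnorm{\altopt}\) after recalling \(\ddualoptnorm{\udualopt}=\ddualoptnorm{\ldualopt}\); hence~\(\dualopt\) is bounded, therefore continuous, and lies in~\(\dualopts\).

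The argument is essentially routine, so I expect no serious obstacle; the only points requiring care are the bookkeeping that converts the `domination from above' condition defining~\(\dualopts(\ldualopt)\) into the sublinear-majorant hypothesis of the classical theorem via conjugacy, and the verification that the extension inherits boundedness. Both are handled by the elementary computations sketched above, the first relying on Proposition~\ref{prop:ludualopt:properties}\ref{it:ludualopt:properties:lsmalleru} and conjugacy, the second on the norm identity \(\ddualoptnorm{\udualopt}=\ddualoptnorm{\ldualopt}\) established just before the statement.
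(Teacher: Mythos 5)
Your proof is correct, but there is nothing in the paper to compare it against: the paper offers no proof of Theorem~\ref{theo:hahn:banach}, which it simply quotes from Schechter's handbook (Section~28.4, HB17) as a known version of the Hahn--Banach theorem. What you have done is reconstruct the standard derivation of this ``touching'' form from the classical dominated-extension theorem, and every step checks out: the conjugate \(\udualopt\) is indeed sublinear; domination \(\ldualopt\leq\dualopt\) of a linear functional from below is, via conjugacy, equivalent to domination \(\dualopt\leq\udualopt\) from above, which matches the two ways the paper itself writes the definition of \(\dualopts(\ldualopt)\); the seed \(f(\lambda\opt)\coloneqq\lambda\ldualopt(\opt)\) on \(\linspan\set{\opt}\) is dominated by \(\udualopt\) (with equality for \(\lambda<0\), which is exactly what forces the extension to touch \(\ldualopt\) at \(\opt\)); and the extension's boundedness follows from the sandwich \(\ldualopt\leq\dualopt\leq\udualopt\) together with \(\ddualoptnorm{\udualopt}=\ddualoptnorm{\ldualopt}\). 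The comparison, then, is one of economy versus transparency: the paper's citation keeps the text short and signals that the result is standard, while your argument is self-contained and makes visible exactly where each hypothesis enters---the algebraic extension step needs no topology at all, and boundedness of \(\ldualopt\) is used only at the very end to push the norm bound \(\abs{\dualopt(\altopt)}\leq\ddualoptnorm{\ldualopt}\optnorm{\altopt}\) through the sandwich. That last estimate is, incidentally, the same computation the paper performs later inside the proof of Corollary~\ref{cor:lower:envelope:for:cslfs} to show that \(\dualopts(\ldualopt)\) is norm-bounded, so your proof also makes clear that this uniform bound is already available at the level of Theorem~\ref{theo:hahn:banach} itself.
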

\noindent An important condition for using this version is that~\(\ldualopt\) should be both superlinear and continuous.
It should be clear that \(\dualopts(\ldualopt)\) is a convex and {\weakstar}-closed subset of~\(\dualopts\), and also a convex and {\weakcircle}-closed subset of~\(\ddualopts\).
We can use this version to give a straightforward proof for the following representation result.

\begin{corollary}[Lower envelope theorem]\label{cor:lower:envelope:for:cslfs}
Consider any real functional~\(\ldualopt\) on~\(\opts\).
Then \(\ldualopt\) is bounded and superlinear---so \(\ldualopt\in\ldualopts\)---if and only if there is some non-empty norm-bounded subset~\(\setofdualopts\) of~\(\dualopts\) such that \(\ldualopt\) is the \emph{lower envelope} of~\(\setofdualopts\), meaning that \(\ldualopt(\opt)=\inf\cset{\dualopt(\opt)}{\dualopt\in\setofdualopts}\) for all~\(\opt\in\opts\).
In that case the largest such set~\(\setofdualopts\) is the convex and {\weakstar}-compact subset~\(\dualopts(\ldualopt)\) of \(\dualopts\), and \(\ldualopt(\opt)=\min\cset{\dualopt(\opt)}{\dualopt\in\dualopts(\ldualopt)}\) for all~\(\opt\in\opts\).
\end{corollary}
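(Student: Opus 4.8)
The plan is to establish the two implications separately and then pin down the largest witnessing set, deferring the only topological point to the end. For the \emph{sufficiency} direction I would assume that $\ldualopt$ is the lower envelope of a non-empty norm-bounded set $\setofdualopts\subseteq\dualopts$ and read off the two defining properties of $\ldualopts$ together with boundedness straight from elementary facts about infima. Superadditivity~\ref{ax:superlinear:superadditive} holds because, using the linearity of each $\dualopt\in\setofdualopts$, $\ldualopt(\opt+\altopt)=\inf_{\dualopt\in\setofdualopts}\sqgroup{\dualopt(\opt)+\dualopt(\altopt)}\geq\inf_{\dualopt\in\setofdualopts}\dualopt(\opt)+\inf_{\dualopt\in\setofdualopts}\dualopt(\altopt)=\ldualopt(\opt)+\ldualopt(\altopt)$, and non-negative homogeneity~\ref{ax:superlinear:homogeneous} holds because a factor $\lambda\geq0$ pulls out of an infimum (the case $\lambda=0$ being trivial). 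For boundedness I would set $M\coloneqq\sup_{\dualopt\in\setofdualopts}\ddualoptnorm{\dualopt}<+\infty$; then $\abs{\dualopt(\opt)}\leq M\optnorm{\opt}$ for every $\dualopt$, so taking the infimum yields $\abs{\ldualopt(\opt)}\leq M\optnorm{\opt}$ and hence $\ddualoptnorm{\ldualopt}\leq M$.

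For the \emph{necessity} direction I would take $\ldualopt\in\ldualopts$ and propose $\dualopts(\ldualopt)$ as the witnessing set. The inequality $\ldualopt(\opt)\leq\dualopt(\opt)$ for every $\dualopt\in\dualopts(\ldualopt)$ is immediate from the definition, giving $\ldualopt(\opt)\leq\inf\cset{\dualopt(\opt)}{\dualopt\in\dualopts(\ldualopt)}$; the matching reverse inequality---in fact attainment, turning the infimum into a minimum---is exactly the Hahn--Banach Theorem~\ref{theo:hahn:banach}, which for each $\opt$ supplies a dominating $\dualopt$ with $\ldualopt(\opt)=\dualopt(\opt)$ and which simultaneously shows $\dualopts(\ldualopt)\neq\emptyset$. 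Norm-boundedness follows since, for $\dualopt\in\dualopts(\ldualopt)$, the sandwich $\ldualopt(\opt)\leq\dualopt(\opt)\leq\udualopt(\opt)$ gives $\abs{\dualopt(\opt)}\leq\maxludualopt{\opt}\leq\ddualoptnorm{\ldualopt}\optnorm{\opt}$, where the last step uses $\ddualoptnorm{\udualopt}=\ddualoptnorm{\ldualopt}$; hence $\ddualoptnorm{\dualopt}\leq\ddualoptnorm{\ldualopt}$ uniformly.

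It remains to identify $\dualopts(\ldualopt)$ as the \emph{largest} witnessing set and to settle its topological nature. If $\ldualopt$ is the lower envelope of any $\setofdualopts$, then each $\dualopt\in\setofdualopts$ dominates $\ldualopt$ pointwise, so $\setofdualopts\subseteq\dualopts(\ldualopt)$; since $\dualopts(\ldualopt)$ is itself a witnessing set by the previous paragraph, it is the largest. Convexity is routine, as a convex combination of dominating continuous linear functionals is again one. The step I expect to be the main obstacle is the {\weakstar}-compactness: here I would invoke the Banach--Alaoglu theorem to make the closed norm-ball of radius $\ddualoptnorm{\ldualopt}$ in $\dualopts$ {\weakstar}-compact, and then combine this with the already-noted {\weakstar}-closedness of $\dualopts(\ldualopt)$, so that $\dualopts(\ldualopt)$ is a {\weakstar}-closed subset of a {\weakstar}-compact set and therefore {\weakstar}-compact.
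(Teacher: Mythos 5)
Your proof is correct and follows essentially the same route as the paper's: elementary infimum estimates for sufficiency, the Hahn--Banach theorem with \(\dualopts(\ldualopt)\) as the witnessing set for necessity, and norm-boundedness combined with {\weakstar}-closedness (via Banach--Alaoglu) for compactness. The only differences are cosmetic---your two-sided squeeze for boundedness avoids the paper's sign-based case analysis, and you spell out the maximality of \(\dualopts(\ldualopt)\) and the appeal to Banach--Alaoglu, both of which the paper leaves implicit.
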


\begin{proof}
For sufficiency, assume that \(\ldualopt\) is the lower envelope of a norm-bounded~\(\setofdualopts\subseteq\dualopts\), then it's easy to see that \(\ldualopt\) is superlinear, so it's enough to show that it's bounded.
That~\(\setofdualopts\) is norm-bounded means that there is some real \(K\in\posreals\) such that \(\ddualoptnorm{\dualopt}\leq K\) for all~\(\dualopt\in\setofdualopts\), and therefore also
\begin{equation}\label{eq:normbounded}
\abs{\dualopt(\opt)}\leq K\optnorm{\opt}\text{ for all~\(\dualopt\in\setofdualopts\) and all~\(\opt\in\opts\)}.
\end{equation}
Now consider any~\(\altopt\in\opts\setminus\set{0}\).
There are two possibilities.
The first is that \(\ldualopt(\altopt)\geq0\) and then it follows from~\eqref{eq:normbounded} that
\begin{equation}\label{eq:towards:bounded:first:case}
\abs{\ldualopt(\altopt)}
=\ldualopt(\altopt)
\leq\dualopt(\altopt)
=\abs{\dualopt(\altopt)}
\leq K\optnorm{\altopt}
\text{ for all~\(\dualopt\in\setofdualopts\).}
\end{equation}
The other possibility is that \(\ldualopt(\altopt)<0\) and then~\(\udualopt(-\altopt)=-\ldualopt(\altopt)=\abs{\ldualopt(\altopt)}>0\).
Then, for any~\(\epsilon\in\posreals\) such that \(0<\epsilon<\udualopt(-\altopt)\), there is, by assumption, some~\(\dualopt[\epsilon]\in\setofdualopts\) such that \(0<\udualopt(-\altopt)-\epsilon<\dualopt[\epsilon](-\altopt)\leq\udualopt(-\altopt)\), so it follows from~\eqref{eq:normbounded} that for all~\(\epsilon\) small enough
\begin{equation*}
\abs{\ldualopt(\altopt)}
=\udualopt(-\altopt)
<\dualopt[\epsilon](-\altopt)+\epsilon
=\abs{\dualopt[\epsilon](-\altopt)}+\epsilon
\leq K\optnorm{\altopt}+\epsilon,
\end{equation*}
and therefore also
\begin{equation}\label{eq:towards:bounded:second:case}
\abs{\ldualopt(\altopt)}
\leq K\optnorm{\altopt}.
\end{equation}
Combining~\eqref{eq:towards:bounded:first:case} and~\eqref{eq:towards:bounded:second:case} then guarantees that \(\ddualoptnorm{\ldualopt}\leq K\), so \(\ldualopt\) is indeed bounded.

For necessity, assume that \(\ldualopt\) is bounded, and observe that, by Theorem~\ref{theo:hahn:banach}, \(\ldualopt\) is then the lower envelope of the set~\(\dualopts(\ldualopt)\subseteq\dualopts\), and that this set is trivially convex and {\weakstar}-closed.
So we still need to prove that \(\dualopts(\ldualopt)\) is norm-bounded, which will then also guarantee that it's {\weakstar}-compact.
To do so, consider any~\(\dualopt\in\dualopts(\ldualopt)\) and any~\(\opt\in\opts\setminus\set{0}\), then there are two possibilities.
The first is that \(\dualopt(\opt)\geq0\), and then
\begin{equation*}
0
\leq\abs{\dualopt(\opt)}
=\dualopt(\opt)
\leq\udualopt(\opt)
=\abs{\udualopt(\opt)}
\leq\ddualoptnorm{\ldualopt}\optnorm{\opt}.
\end{equation*}
The other possibility is that \(\dualopt(\opt)\leq0\), and then
\begin{equation*}
0
\leq\abs{\dualopt(\opt)}
=-\dualopt(\opt)
=\dualopt(-\opt)
\leq\udualopt(-\opt)
=\abs{\udualopt(-\opt)}
\leq\ddualoptnorm{\ldualopt}\optnorm{\opt}.
\end{equation*}
This guarantees that \(\ddualoptnorm{\dualopt}\leq\ddualoptnorm{\ldualopt}\), and since this holds for all~\(\dualopt\in\dualopts(\ldualopt)\), this tells us that \(\dualopts(\ldualopt)\) is indeed norm-bounded.
\end{proof}

\begin{runningexample}\label{ex:functionals}
Going back to our coin example, it's easy to see that the linear space~\(\difs\) is two-dimensional, because for any \(\dif\in\difs\) it clearly holds that
\begin{equation*}
\dif
=\dif(\bolleke,\bestreward)\unitdif
=\sqgroup[\big]{\dif(\heads,\bestreward)\indset{\heads}+\dif(\tails,\bestreward)\indset{\tails}}\unitdif
=\dif(\heads,\bestreward)\indset{\heads}\unitdif+\dif(\tails,\bestreward)\indset{\tails}\unitdif,
\end{equation*}
so the two maps~\(\indset{\heads}\unitdif\) and~\(\indset{\tails}\unitdif\) constitute a basis for the linear space~\(\difs\).
We can provide the linear space~\(\difs\) with the norm~\(\difnorm{\bolleke}\) defined by
\begin{equation*}
\difnorm{\dif}
\coloneqq\max_{(x,r)\in\coinstatesandrewards}\abs{\dif(x,r)}
=\max_{x\in\coinstates}\abs{\dif(x,\bestreward)}
=\max\set{\abs{\dif(\heads,\bestreward)},\abs{\dif(\tails,\bestreward)}},
\end{equation*}
which, essentially, corresponds to the supremum norm on the linear space~\(\gblson{\coinstates}\).
It is then a standard result that this (supremum) norm turns~\(\difs\) into a Banach space.

Any linear real functional \(\dualopt\) on~\(\difs\) then satisfies
\begin{equation}\label{eq:coins:linear:functionals:basis}
\dualopt(\dif)
=\dif(\heads,\bestreward)\dualopt\group[\big]{\indset{\heads}\unitdif}
+\dif(\tails,\bestreward)\dualopt\group[\big]{\indset{\tails}\unitdif},
\end{equation}
and therefore
\begin{align*}
\ddualdifnorm{\dualopt}
&=\sup_{\dif\in\difs\setminus\set{0}}\frac{\abs{\dualopt(\dif)}}{\difnorm{\dif}}
=\sup_{\dif\in\difs\setminus\set{0}}
\frac{\abs[\big]{\dif(\heads,\bestreward)\dualopt\group[\big]{\indset{\heads}\unitdif}
+\dif(\tails,\bestreward)\dualopt\group[\big]{\indset{\tails}\unitdif}}}
{\max\set{\abs{\dif(\heads,\bestreward)},\abs{\dif(\tails,\bestreward)}}}\\
&=\abs[\big]{\dualopt\group[\big]{\indset{\heads}\unitdif}}
+\abs[\big]{\dualopt\group[\big]{\indset{\tails}\unitdif}},
\end{align*}
so all linear real functionals on~\(\difs\) are bounded (and therefore Lipschitz continuous), and the linear space~\(\dualdifs\) of all these linear (bounded) real functionals is two-dimensional too.

For superlinear real functionals~\(\ldualopt\) on~\(\difs\), we infer from Proposition~\ref{prop:ludualopt:properties}\ref{it:ludualopt:properties:second:bound:for:norm} that
\begin{align*}
\ddualdifnorm{\ldualopt}
&=\sup_{\dif\in\difs\setminus\set{0}}\frac{\abs{\ldualopt(\dif)}}{\difnorm{\dif}}
=\sup_{\dif\in\difs\setminus\set{0}}
\frac{\abs[\big]{\ldualopt\group[\big]{\dif(\heads,\bestreward)\indset{\heads}\unitdif+\dif(\tails,\bestreward)\indset{\tails}\unitdif}}}
{\max\set{\abs{\dif(\heads,\bestreward)},\abs{\dif(\tails,\bestreward)}}}\\
&\leq\sup_{\dif\in\difs\setminus\set{0}}
\frac{\abs{\dif(\heads,\bestreward)}\maxludualopt[\big]{\indset{\heads}\unitdif}
+\abs{\dif(\tails,\bestreward)}\maxludualopt[\big]{\indset{\tails}\unitdif}}
{\max\set{\abs{\dif(\heads,\bestreward)},\abs{\dif(\tails,\bestreward)}}}\\
&=\maxludualopt[\big]{\indset{\heads}\unitdif}+\maxludualopt[\big]{\indset{\tails}\unitdif}\\
&=\max\set[\Big]{\abs[\big]{\ldualopt\group[\big]{\indset{\heads}\unitdif}},\abs[\big]{\udualopt\group[\big]{\indset{\heads}\unitdif}}}
+\max\set[\Big]{\abs[\big]{\ldualopt\group[\big]{\indset{\tails}\unitdif}},\abs[\big]{\udualopt\group[\big]{\indset{\tails}\unitdif}}},
\end{align*}
so all superlinear functionals on~\(\difs\) are bounded (and therefore Lipschitz continuous) too.

It follows from Equation~\eqref{eq:coins:linear:functionals:basis} and Corollary~\ref{cor:lower:envelope:for:cslfs} that there is a one-to-one correspondence between the superlinear functionals~\(\ldualopt\) on~\(\difs\) and the convex compact (closed and bounded) subsets~\(C\) of~\(\reals^2\), given by
\begin{equation*}
C_{\ldualopt}
\coloneqq\cset[\Big]{(\lambda,\mu)\in\reals^2}
{(\forall(\alpha,\beta)\in\reals^2)
\ldualopt\group[\big]{\alpha\indset{\heads}\unitdif+\beta\indset{\tails}\unitdif}\leq\alpha\lambda+\beta\mu}
\end{equation*}
and
\begin{equation*}
\ldualopt[C]\group[\big]{\alpha\indset{\heads}\unitdif+\beta\indset{\tails}\unitdif}
\coloneqq\min\cset{\alpha\lambda+\beta\mu}{(\lambda,\mu)\in C}.
\end{equation*}
It extends the one-to-one correspondence between the linear functionals~\(\dualopt\) on~\(\difs\) and the elements of~\(\reals^2\) that is expressed by Equation~\eqref{eq:coins:linear:functionals:basis}: the points in~\(C_{\ldualopt}\) are the representations in~\(\reals^2\) of the elements~\(\dualopt\in\dualopts(\ldualopt)\).

As a special case, let us define the real functionals~\(\dualopt[p]\) and~\(\ldualopt[I]\) on~\(\difs\) by
\begin{equation}\label{eq:define:the:functionals}
\dualopt[p](\dif)\coloneqq\ex[p](\dif(\bolleke,\bestreward))
\text{ and }
\ldualopt[I](\dif)\coloneqq\lex[I](\dif(\bolleke,\bestreward))
\text{ for all~\(\dif\in\difs\)},
\end{equation}
where the functionals~\(\ex[p]\) and~\(\lex[I]\) were defined in Equations~\eqref{eq:coin:precise:expectation} and~\eqref{eq:coin:imprecise:expectation} of instalment~\ref{ex:assessment:lower:upper:betting:rates} of the coin example, for \(p\in\unit\) and and \(I=\lup\) with \(0\leq\lp\leq\up\leq1\).
Then it follows readily that \(\dualopt[p]\) is a linear bounded real functional on~\(\difs\), so~\(\dualopt[p]\in\dualdifs\), and that \(\ldualopt[I]\) is a superlinear bounded real functional on~\(\difs\), so~\(\dualopt[I]\in\ldualdifs\).
After a few elementary algebraic manipulations, we find that \(C_{\dualopt[p]}=\set{(p,1-p)}\) and that \(C_{\ldualopt[I]}=\cset{(\lambda,1-\lambda)}{\lp\leq\lambda\leq\up}\); see also Figure~\ref{fig:essential:archimedeanity} further on.
\stopit
\end{runningexample}

\section{{\Essarchimty} for sets of desirable options}\label{sec:essential:archimedeanity}
We are now ready to consider our first basic notion of {\archimty} for binary choice models.
To set the stage, we remark that the background ordering~\(\optgt\) on~\(\opts\) introduced in Section~\ref{sec:binary:choice} allows us to define convex cones of \emph{positive} (super)linear bounded real functionals:
\begin{align}
\posdualopts
&\coloneqq\cset{\dualopt\in\dualopts}
{(\forall\opt\in\posopts)\dualopt(\opt)>0}
\label{eq:posdualopts}\\
\posldualopts
&\coloneqq
\cset{\ldualopt\in\ldualopts}
{(\forall\opt\in\posopts)\ldualopt(\opt)>0}.
\label{eq:posldualopts}
\end{align}
Observe that \(\posdualopts\subseteq\posldualopts\).
Any positive (super)linear bounded real functional~\(\ldualopt\in\posdualopts\) satisfies the following \emph{strict monotonicity} property:
\begin{equation}\label{eq:strict:monotonicity}
\group{\forall\opt,\altopt\in\opts}
\group[\big]{\altopt\optgt\opt\then\ldualopt(\altopt)>\ldualopt(\opt)}.
\end{equation}

\begin{proof}[Proof of Equation~\eqref{eq:strict:monotonicity}]
Assume that \(\altopt\optgt\opt\), then also~\(\altopt-\opt\optgt0\), and it therefore follows from the positivity of~\(\ldualopt\) that \(\ldualopt(\altopt-\opt)>0\).
Proposition~\ref{prop:ludualopt:properties}\ref{it:ludualopt:properties:mixed:additivity} then guarantees that, indeed, \(\ldualopt(\altopt)=\ldualopt(\altopt-\opt+\opt)\geq\ldualopt(\altopt-\opt)+\ldualopt(\opt)>\ldualopt(\opt)\).
\end{proof}

\emph{We'll assume from now on that the strict vector ordering \(\optgt\) is such that \(\posdualopts\neq\emptyset\), which then of course also implies that \(\posldualopts\neq\emptyset\).
We'll also require for the remainder of this paper that\/ \(\topint(\posopts)\neq\emptyset\): the background cone of positive options has a non-empty interior.}

With any~\(\ldualopt\in\ldualopts\), we can associate a set of desirable options as follows:
\begin{equation}\label{eq:desirset:from:cslf}
\desirset[\ldualopt]
\coloneqq\ldualopt[>0]
=\cset{\opt\in\opts}{\ldualopt(\opt)>0}.
\end{equation}
It is an immediate consequence of Corollary~\ref{cor:lower:envelope:for:cslfs} that \(\desirset[\ldualopt]=\bigcap\cset{\desirset[\dualopt]}{\dualopt\in\dualopts(\ldualopt)}\).

Also, with a set of desirable options~\(\desirset\in\desirsets\), we can associate the following sets of superlinear and of linear bounded real functionals:
\begin{align}
\dualopts(\desirset)
&\coloneqq
\cset{\dualopt\in\dualopts}
{(\forall\opt\in\desirset)\dualopt(\opt)>0}
=\cset{\dualopt\in\dualopts}{\desirset\subseteq\desirset[\dualopt]}
\label{eq:clfs:from:desirset}\\
\ldualopts(\desirset)
&\coloneqq
\cset{\ldualopt\in\ldualopts}
{(\forall\opt\in\desirset)\ldualopt(\opt)>0}
=\cset{\ldualopt\in\ldualopts}{\desirset\subseteq\desirset[\ldualopt]}
\label{eq:cslfs:from:desirset}
\end{align}
and, similarly, the following sets of positive superlinear and of positive linear bounded real functionals:
\begin{align}
\posdualopts(\desirset)
&\coloneqq
\cset{\dualopt\in\posdualopts}
{(\forall\opt\in\desirset)\dualopt(\opt)>0}
=\cset{\dualopt\in\posdualopts}{\desirset\subseteq\desirset[\dualopt]}
\label{eq:posclfs:from:desirset}\\
\posldualopts(\desirset)
&\coloneqq
\cset{\ldualopt\in\posldualopts}
{(\forall\opt\in\desirset)\ldualopt(\opt)>0}
=\cset{\ldualopt\in\posldualopts}{\desirset\subseteq\desirset[\ldualopt]},
\label{eq:poscslfs:from:desirset}
\end{align}
where we used Equation~\eqref{eq:desirset:from:cslf} for the second equalities.
Clearly, \(\dualopts(\desirset)\subseteq\ldualopts(\desirset)\) and \(\posdualopts(\desirset)\subseteq\posldualopts(\desirset)\).
These sets are convex subcones of the convex cones~\(\ldualopts\) and~\(\posldualopts\), respectively, and \(\dualopts(\desirset)\) and \(\posdualopts(\desirset)\) are also convex cones in the dual linear space~\(\dualopts\) of continuous real linear functionals on~\(\opts\).
Observe that, in particular for \(\desirset=\posopts\), taking into account Equations~\eqref{eq:posdualopts} and~\eqref{eq:posldualopts},
\begin{align*}
\dualopts(\posopts)
&\coloneqq
\cset{\dualopt\in\dualopts}
{(\forall\opt\in\posopts)\dualopt(\opt)>0}{}
=\posdualopts
\\
\ldualopts(\posopts)
&\coloneqq
\cset{\ldualopt\in\ldualopts}
{(\forall\opt\in\posopts)\ldualopt(\opt)>0}
=\posldualopts.
\end{align*}

The following result, which is an immediate consequence of Corollary~\ref{cor:lower:envelope:for:cslfs}, identifies a number of interesting relationships between these sets of functionals.
It tells us that, in some specific sense, the sets of linear functionals~\(\dualopts(\desirset)\) and~\(\posdualopts(\desirset)\) play a more fundamental role than the corresponding sets of superlinear functionals.

\begin{proposition}\label{prop:relationships:between:sets:of:functionals}
Consider a set of desirable options~\(\desirset\in\desirsets\), and a superlinear bounded real functional \(\ldualopt\in\ldualopts\).
Then the following statements hold:
\begin{enumerate}[label=\upshape(\roman*),ref=\upshape(\roman*),leftmargin=*]
\item \(\posdualopts(\desirset)=\dualopts(\desirset\cup\posopts)=\dualopts(\desirset)\cap\posdualopts\);
\item \(\posldualopts(\desirset)=\ldualopts(\desirset\cup\posopts)=\ldualopts(\desirset)\cap\posldualopts\);
\item \(\ldualopt\in\ldualopts(\desirset)\ifandonlyif\dualopts(\ldualopt)\subseteq\dualopts(\desirset)\);
\item \(\ldualopt\in\posldualopts(\desirset)\ifandonlyif\dualopts(\ldualopt)\subseteq\posdualopts(\desirset)\).
\end{enumerate}{}
\end{proposition}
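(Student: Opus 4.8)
The plan is to treat the four statements as a short dependency chain. Items~(i) and~(ii) are purely set-theoretic reformulations of the defining conditions, item~(iii) is the only place where genuine analytic content—the lower-envelope representation—enters, and item~(iv) then follows formally by feeding item~(iii) the enlarged option set $\desirset\cup\posopts$ and simplifying with items~(i) and~(ii). I would therefore prove them in the order (i), (ii), (iii), (iv).

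For~(i) I would unfold the definition of $\dualopts(\desirset\cup\posopts)$: a functional $\dualopt\in\dualopts$ belongs to it precisely when $\dualopt(\opt)>0$ both for every $\opt\in\desirset$ and for every $\opt\in\posopts$. The first conjunct is the membership $\dualopt\in\dualopts(\desirset)$ and the second is the membership $\dualopt\in\posdualopts$; reading the same conjunction in its two natural groupings yields both $\dualopts(\desirset\cup\posopts)=\dualopts(\desirset)\cap\posdualopts$ and, after recognising that the $\posopts$-condition is exactly what promotes $\dualopts(\desirset)$ to $\posdualopts(\desirset)$, the equality with $\posdualopts(\desirset)$. Item~(ii) is the verbatim copy of this argument with $\dualopts$ and $\posdualopts$ replaced throughout by $\ldualopts$ and $\posldualopts$, so I would simply note that nothing changes.

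The crux is~(iii), and here I would lean on the identity $\desirset[\ldualopt]=\bigcap\cset{\desirset[\dualopt]}{\dualopt\in\dualopts(\ldualopt)}$ recorded right after Equation~\eqref{eq:desirset:from:cslf} as a consequence of Corollary~\ref{cor:lower:envelope:for:cslfs}. Combining it with the characterisations $\ldualopt\in\ldualopts(\desirset)\ifandonlyif\desirset\subseteq\desirset[\ldualopt]$ and $\dualopt\in\dualopts(\desirset)\ifandonlyif\desirset\subseteq\desirset[\dualopt]$ from Equations~\eqref{eq:clfs:from:desirset} and~\eqref{eq:cslfs:from:desirset}, I would run the single equivalence chain
\begin{align*}
\ldualopt\in\ldualopts(\desirset)
&\ifandonlyif\desirset\subseteq\desirset[\ldualopt]
=\bigcap\cset{\desirset[\dualopt]}{\dualopt\in\dualopts(\ldualopt)}\\
&\ifandonlyif\group{\forall\dualopt\in\dualopts(\ldualopt)}\desirset\subseteq\desirset[\dualopt]
\ifandonlyif\dualopts(\ldualopt)\subseteq\dualopts(\desirset),
\end{align*}
in which the middle step is nothing but the definition of intersection and the last step rewrites $\desirset\subseteq\desirset[\dualopt]$ as $\dualopt\in\dualopts(\desirset)$.

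Finally, for~(iv) I would chain the earlier items rather than argue afresh. By~(ii), $\posldualopts(\desirset)=\ldualopts(\desirset\cup\posopts)$; applying~(iii) with $\desirset\cup\posopts$ in the role of $\desirset$ gives $\ldualopt\in\ldualopts(\desirset\cup\posopts)\ifandonlyif\dualopts(\ldualopt)\subseteq\dualopts(\desirset\cup\posopts)$; and by~(i), $\dualopts(\desirset\cup\posopts)=\posdualopts(\desirset)$. Substituting these two equalities turns the displayed equivalence into exactly $\ldualopt\in\posldualopts(\desirset)\ifandonlyif\dualopts(\ldualopt)\subseteq\posdualopts(\desirset)$. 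The only non-formal ingredient anywhere is the lower-envelope identity invoked in~(iii); since everything else is definition-chasing, I expect the sole real hazard to be keeping the quantifier over $\dualopt\in\dualopts(\ldualopt)$ correctly aligned when translating between ``$\desirset$ is contained in each $\desirset[\dualopt]$'' and the inclusion $\dualopts(\ldualopt)\subseteq\dualopts(\desirset)$.
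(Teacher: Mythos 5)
Your proposal is correct and follows essentially the same route as the paper: items~(i) and~(ii) by definition-unfolding, item~(iii) as the sole substantive step resting on Corollary~\ref{cor:lower:envelope:for:cslfs} (your intersection identity \(\desirset[\ldualopt]=\bigcap\cset{\desirset[\dualopt]}{\dualopt\in\dualopts(\ldualopt)}\) is just the set-level packaging of the paper's quantifier-exchange chain, both hinging on the Hahn--Banach attainment of the lower envelope), and item~(iv) as a formal consequence of the first three. Your explicit derivation of~(iv) by applying~(iii) to \(\desirset\cup\posopts\) and simplifying with~(i) and~(ii) is precisely the "immediate consequence" the paper leaves to the reader.
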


\begin{proof}
We only prove the third statement; the proof of the first two statements is trivial, and the proof of the fourth statement is an immediate consequence of the first three.
Now simply observe the following chain of equivalences:
\begin{align*}
\ldualopt\in\ldualopts(\desirset)
&\ifandonlyif\group{\forall\opt\in\desirset}\ldualopt(\opt)>0
\ifandonlyif\group{\forall\opt\in\desirset}\group{\forall\dualopt\in\dualopts(\ldualopt)}\dualopt(\opt)>0\\
&\ifandonlyif\group{\forall\dualopt\in\dualopts(\ldualopt)}\group{\forall\opt\in\desirset}\dualopt(\opt)>0
\ifandonlyif\group{\forall\dualopt\in\dualopts(\ldualopt)}\dualopt\in\dualopts(\desirset).
\qedhere
\end{align*}
\end{proof}

\begin{runningexample}\label{ex:sets:of:functionals}
Let's go back to our coin example, in order to illustrate a few of the notions introduced in this section.
Taking into account Equation~\eqref{eq:define:the:background:orderings}, we find that, with obvious notations,
\begin{align}
\dualopt\in\dualdifs_{\gblgt0}
&\ifandonlyif\group{\forall(\alpha,\beta)\in\reals^2}
\group[\big]{(\alpha,\beta)\gblgt0\then\alpha\dualopt(\indset{\heads}\unitdif)+\beta\dualopt(\indset{\tails}\unitdif)>0}
\notag\\
&\ifandonlyif\group[\big]{\dualopt(\indset{\heads}\unitdif),\dualopt(\indset{\tails}\unitdif)}\gblpgt0,
\label{eq:positive:dualopts:first}
\end{align}
because we can identify options in~\(\difs\) as well as linear functionals in~\(\dualdifs\) with elements of~\(\reals^2\).
Similarly,
\begin{align}
\dualopt\in\dualdifs_{\gblpgt0}
&\ifandonlyif\group{\forall(\alpha,\beta)\in\reals^2}
\group[\big]{(\alpha,\beta)\gblpgt0\then\alpha\dualopt(\indset{\heads}\unitdif)+\beta\dualopt(\indset{\tails}\unitdif)>0}
\notag\\
&\ifandonlyif\group[\big]{\dualopt(\indset{\heads}\unitdif),\dualopt(\indset{\tails}\unitdif)}\gblgt0.
\label{eq:positive:dualopts:second}
\end{align}
This guarantees that \(\dualdifs_{\gblgt0}\neq\emptyset\) and \(\dualdifs_{\gblpgt0}\neq\emptyset\).
Also, \(\topint(\difs_{\gblgt0})=\topint(\difs_{\gblpgt0})=\difs_{\gblpgt0}\neq\emptyset\); see also Figure~\ref{fig:some:heads:and:tails}.
All of this is in accordance with the assumptions we've made in the beginning of this section.

More generally, we also find that for any set of desirable options~\(\desirset\), after identifying both options in~\(\difs\) as well as linear functionals in~\(\dualdifs\) with elements of~\(\reals^2\)
\begin{align}
\dualopt\in\dualdifs(\desirset)
&\ifandonlyif\group{\forall(\alpha,\beta)\in\desirset}
\alpha\dualopt(\indset{\heads}\unitdif)+\beta\dualopt(\indset{\tails}\unitdif)>0
\notag\\
&\ifandonlyif\group{\forall(\alpha,\beta)\in\desirset}
\group[\big]{\dualopt(\indset{\heads}\unitdif),\dualopt(\indset{\tails}\unitdif)}\cdot(\alpha,\beta)\gblgt0,
\label{eq:positive:dualopts:third}
\end{align}
where `\(\cdot\)' represents the dot product in \(\reals^2\).
These expressions suggest simple geometrical interpretations for the sets~\(\dualdifs(\difs_{\gblgt0})\), \(\dualdifs(\difs_{\gblpgt0})\) and~\(\dualdifs(\desirset)\) as subsets of \(\reals^2\).
We give a few examples in Figure~\ref{fig:dual:cones:for:coins}.
Proposition~\ref{prop:relationships:between:sets:of:functionals} and Corollary~\ref{cor:lower:envelope:for:cslfs} then allow us to see~\(\ldualdifs(\difs_{\gblgt0})\), \(\ldualdifs(\difs_{\gblpgt0})\) and~\(\ldualdifs(\desirset)\) as the collections of convex and compact subsets of these respective subsets~\(\dualdifs(\difs_{\gblgt0})\), \(\dualdifs(\difs_{\gblpgt0})\) and~\(\dualdifs(\desirset)\) of \(\reals^2\).
\stopit
\end{runningexample}

\begin{figure}[h]
\begin{tikzpicture}[scale=.4]\footnotesize
\fill[red!50] (0,4) -- (4,4) -- (4,0) -- (0,0) -- cycle;
\draw[gray,->] (-2.2,0) -- (4.2,0) node[below,red] {\(\dualopt(\indset{\heads}\unitdif)\)};
\draw[gray,->] (0,-2.2) -- (0,4.2) node[above,red] {\(\dualopt(\indset{\tails}\unitdif)\)};
\draw[red,semithick] (0,0) -- (0,4);
\draw[red,semithick] (0,0) -- (4,0);
\node[draw=red,fill=white,circle,inner sep=1pt] at (0,0) {};
\node[white] at (2,2) {\(\dualdifs_{\gblpgt0}\)};
\end{tikzpicture}
\quad
\begin{tikzpicture}[scale=.4]\footnotesize
\fill[red!50] (0,4) -- (4,4) -- (4,0) -- (0,0) -- cycle;
\draw[gray,->] (-2.2,0) -- (4.2,0) node[below,red] {\(\dualopt(\indset{\heads}\unitdif)\)};
\draw[gray,->] (0,-2.2) -- (0,4.2) node[above,red] {\(\dualopt(\indset{\tails}\unitdif)\)};
\draw[red,densely dotted,thick] (0,0) -- (4,0);
\draw[red,densely dotted,thick] (0,0) -- (0,4);
\node[draw=red,fill=white,circle,inner sep=1pt] at (0,0) {};
\node[white] at (2,2) {\(\dualdifs_{\gblgt0}\)};
\end{tikzpicture}
\\
\begin{tikzpicture}[scale=.35]\footnotesize
\fill[blue!50] (4,4) -- (4,-1) -- (0,0) -- (-2,4) -- cycle;
\draw[gray,->] (-2.2,0) -- (4.2,0) node[below right,red] {\(\dualopt(\indset{\heads}\unitdif)\)};
\draw[gray,->] (0,-2.2) -- (0,4.2) node[above,red] {\(\dualopt(\indset{\tails}\unitdif)\)};
\draw[blue,semithick] (0,0) -- (-2,4);
\draw[blue,semithick] (0,0) -- (4,-1);
\node[white] at (3,.5) {\(\desirset\)};
\fill[red!50] (0,0) -- (4,2) -- (4,4) -- (1,4) -- cycle;
\draw[red,densely dotted,thick] (0,0) -- (1,4);
\draw[red,densely dotted,thick] (0,0) -- (4,2);
\node[white] at (2,2) {\(\dualdifs(\desirset)\)};
\node[draw=red,fill=white,circle,inner sep=1pt] at (0,0) {};
\end{tikzpicture}
\quad
\begin{tikzpicture}[scale=.35]\footnotesize
\fill[blue!50] (4,4) -- (4,-2) -- (0,0) -- (-1,4) -- cycle;
\draw[gray,->] (-2.2,0) -- (4.2,0) node[below right,red] {\(\dualopt(\indset{\heads}\unitdif)\)};
\draw[gray,->] (0,-2.2) -- (0,4.2) node[above,red] {\(\dualopt(\indset{\tails}\unitdif)\)};
\draw[blue,densely dotted,thick] (0,0) -- (-1,4);
\draw[blue,densely dotted,thick] (0,0) -- (4,-2);
\node[white] at (2.5,-.5) {\(\desirset\)};
\fill[red!50] (0,0) -- (4,1) -- (4,4) -- (2,4) -- cycle;
\draw[red,semithick] (0,0) -- (2,4);
\draw[red,semithick] (0,0) -- (4,1);
\node[white] at (2.5,2) {\(\dualdifs(\desirset)\)};
\node[draw=red,fill=white,circle,inner sep=1pt] at (0,0) {};
\end{tikzpicture}
\quad
\begin{tikzpicture}[scale=.35]\footnotesize
\fill[blue!50] (4,4) -- (4,-2) -- (1,-2) -- (0,0) -- (-2,4) -- cycle;
\draw[gray,->] (-2.2,0) -- (4.2,0) node[below right,red] {\(\dualopt(\indset{\heads}\unitdif)\)};
\draw[gray,->] (0,-2.2) -- (0,4.2) node[above,red] {\(\dualopt(\indset{\tails}\unitdif)\)};
\draw[blue,densely dotted,thick] (0,0) -- (-2,4);
\draw[blue,densely dotted,thick] (0,0) -- (1,-2);
\node[white] at (3,-1) {\(\desirset\)};
\draw[red,semithick] (0,0) -- (4,2);
\node[red] at (1.8,2) {\(\dualdifs(\desirset)\)};
\node[draw=red,fill=white,circle,inner sep=1pt] at (0,0) {};
\end{tikzpicture}
\caption{Representations (in red) of the sets of linear real functionals~\(\dualdifs(\desirset)\), for various choices of coherent sets of desirable options~\(\desirset\) (in blue). Each set of desirable options~\(\desirset\) is graphically represented by the values~\(\dif(\bolleke,\bestreward)\) that its elements~\(\dif\in\desirset\) assume in the reward~\(\bestreward\). Each set of linear real functionals~\(\dualdifs(\desirset)\) is graphically represented by the values~\((\dualopt(\indset{\heads}\unitdif),\dualopt(\indset{\tails}\unitdif))\) that its elements~\(\dualopt\in\dualdifs(\desirset)\) assume in the options~\(\indset{\heads}\unitdif\) and~\(\indset{\tails}\unitdif\). Full blue or red lines indicate `borders' that are included in the sets, dotted lines represent `borders' that aren't.}
\label{fig:dual:cones:for:coins}
\end{figure}

Inspired by Walley's~\cite{walley1991} discussion of `strict desirability', I'll call a set of desirable options~\(\desirset\in\desirsets\) \emph{\essarchim} if it is coherent and open.

It turns out that there is a close connection between {\essarchim} sets of desirable options and positive superlinear bounded real functionals.
Before we can lay it bare in Propositions~\ref{prop:ldualopt:from:desirset}--\ref{prop:isomorphisms:arch:noconstants}, we need to find a way to associate a superlinear bounded real functional with a set of desirable options~\(\desirset\in\desirsets\).

There are a number of different ways to achieve this, but I've found the following approach to be especially productive.
Since we assumed from the outset that \(\topint(\posopts)\neq\emptyset\), we can fix any~\(\opt[o]\in\topint(\posopts)\).
We use this special option~\(\opt[o]\) to associate with a set of desirable options~\(\desirset\) a specific (possibly extended) real functional~\(\ldualopto\colon\opts\to\reals\cup\set{-\infty,+\infty}\) by letting
\begin{equation}\label{eq:ldualopt:from:desirset}
\ldualopto(\opt)\coloneqq\sup\cset{\alpha\in\reals}{\opt-\alpha\opt[o]\in\desirset}
\text{ for all~\(\opt\in\opts\)},
\end{equation}
and, for its conjugate functional,
\begin{equation*}
\udualopto(\opt)\coloneqq-\ldualopto(-\opt)=\inf\cset{\beta\in\reals}{\beta\opt[o]-\opt\in\desirset}
\text{ for all~\(\opt\in\opts\)}.
\end{equation*}

\begin{proposition}\label{prop:ldualopt:from:desirset}
If the set of desirable options \(\desirset\) is coherent, then \(\ldualopto\in\ldualopts\).
Moreover, \(\ldualopto(\opt)\geq0\) for all~\(\opt\in\desirset\) and \(\ldualopto(\altopt)\leq0\) for all~\(\altopt\in\co{\desirset}\).
\end{proposition}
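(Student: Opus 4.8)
The plan is to read off the two sign statements directly from coherence, and to devote the real work to showing \(\ldualopto\in\ldualopts\), i.e.\ that \(\ldualopto\) is finite, superlinear and bounded. For the ``moreover'' part, if \(\opt\in\desirset\) then \(\alpha=0\) already witnesses \(\opt-\alpha\opt[o]=\opt\in\desirset\), so \(\ldualopto(\opt)\geq0\); and if \(\altopt\in\co{\desirset}\) while some \(\alpha>0\) had \(\altopt-\alpha\opt[o]\in\desirset\), then since \(\opt[o]\optgt0\) gives \(\alpha\opt[o]\in\posopts\subseteq\desirset\) by~\ref{ax:desirs:pos}, axiom~\ref{ax:desirs:cone} would force \(\altopt=(\altopt-\alpha\opt[o])+\alpha\opt[o]\in\desirset\), a contradiction; hence \(\ldualopto(\altopt)\leq0\).

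For finiteness and boundedness I would exploit the standing assumption \(\topint(\posopts)\neq\emptyset\) together with \(\opt[o]\in\topint(\posopts)\): fix \(\delta\in\posreals\) such that every option within norm-distance \(\delta\) of \(\opt[o]\) lies in \(\posopts\). Then I would extract both bounds from this single ball. On the one hand, for \(\alpha<-\optnorm{\opt}/\delta\) the option \(\opt-\alpha\opt[o]=\abs{\alpha}(\opt[o]+\opt/\abs{\alpha})\) is a positive multiple of a point within distance \(\optnorm{\opt}/\abs{\alpha}<\delta\) of \(\opt[o]\), hence lies in \(\posopts\subseteq\desirset\); this makes the defining set nonempty and gives \(\ldualopto(\opt)\geq-\optnorm{\opt}/\delta\). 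On the other hand, for \(\alpha>\optnorm{\opt}/\delta\) the same reasoning shows \(\alpha\opt[o]-\opt\in\posopts\subseteq\desirset\), so \(\opt-\alpha\opt[o]\in\desirset\) would yield \(0\in\desirset\) via~\ref{ax:desirs:cone}, contradicting~\ref{ax:desirs:nozero}; hence \(\ldualopto(\opt)\leq\optnorm{\opt}/\delta\). Together these give \(\abs{\ldualopto(\opt)}\leq\optnorm{\opt}/\delta\) for all \(\opt\), so \(\ldualopto\) is real-valued and \(\ddualoptnorm{\ldualopto}\leq1/\delta<+\infty\).

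The superlinearity axioms would then follow from the cone structure of \(\desirset\). The key observation is that the set \(\cset{\alpha\in\reals}{\opt-\alpha\opt[o]\in\desirset}\) is downward closed — a half-line — since if \(\opt-\alpha\opt[o]\in\desirset\) and \(\alpha'<\alpha\), then \(\opt-\alpha'\opt[o]=(\opt-\alpha\opt[o])+(\alpha-\alpha')\opt[o]\) is a sum of two elements of \(\desirset\) (using \(\opt[o]\in\posopts\subseteq\desirset\)) and hence lies in \(\desirset\) by~\ref{ax:desirs:cone}. For non-negative homogeneity~\ref{ax:superlinear:homogeneous}, the case \(\lambda>0\) follows by rescaling \(\alpha\) inside the supremum, using that \(\desirset\) is invariant under multiplication by positive scalars, and the case \(\lambda=0\) reduces to checking \(\ldualopto(0)=0\), which the half-line description gives directly. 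For superadditivity~\ref{ax:superlinear:superadditive}, I would take any \(\alpha<\ldualopto(\opt)\) and \(\beta<\ldualopto(\altopt)\); downward closure puts both \(\opt-\alpha\opt[o]\) and \(\altopt-\beta\opt[o]\) in \(\desirset\), so their sum \((\opt+\altopt)-(\alpha+\beta)\opt[o]\in\desirset\) by~\ref{ax:desirs:cone}, whence \(\alpha+\beta\leq\ldualopto(\opt+\altopt)\); letting \(\alpha\uparrow\ldualopto(\opt)\) and \(\beta\uparrow\ldualopto(\altopt)\) finishes it.

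I expect the main obstacle to be the boundedness/finiteness step, because it is the only place where the topological hypothesis \(\opt[o]\in\topint(\posopts)\) is genuinely used and where a purely algebraic argument would not suffice: one must convert the qualitative interior condition into the quantitative estimate \(\abs{\ldualopto(\opt)}\leq\optnorm{\opt}/\delta\). Everything else is bookkeeping with the coherence axioms~\ref{ax:desirs:nozero}--\ref{ax:desirs:pos}.
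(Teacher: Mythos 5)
Your proof is correct, and it differs from the paper's in the one place that matters. The sign statements, the downward-closedness of \(\cset{\alpha\in\reals}{\opt-\alpha\opt[o]\in\desirset}\), non-negative homogeneity and superadditivity are handled essentially as in the paper (the paper uses the half-line property implicitly where you state it explicitly). The genuine divergence is the finiteness/boundedness step. The paper argues by contradiction: assuming \(\ddualoptnorm{\udualopto}=+\infty\), it extracts a normalised sequence \(\opt[n]\) with \(\udualopto(\opt[n])>n\), rewrites this as \(\opt[o]-\opt[n]/n\in\co{\desirset}\), and lets \(n\to\infty\) to conclude \(\opt[o]\in\topcls(\co{\desirset})=\co{\topint(\desirset)}\), contradicting \(\opt[o]\in\topint(\posopts)\subseteq\topint(\desirset)\); this route passes through the conjugate \(\udualopto\), the inequality \(\ldualopto\leq\udualopto\), and some extended-real bookkeeping, since a priori the supremum could be \(\pm\infty\). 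You instead convert \(\opt[o]\in\topint(\posopts)\) into a radius \(\delta\) with \(B(\opt[o],\delta)\subseteq\posopts\) and read off the two-sided estimate \(\abs{\ldualopto(\opt)}\leq\optnorm{\opt}/\delta\) directly: the lower bound shows the defining set is non-empty (ruling out \(-\infty\)), the upper bound rules out \(+\infty\), and you get the explicit operator-norm bound \(\ddualoptnorm{\ldualopto}\leq1/\delta\) in one stroke, with no need for \(\udualopto\) or a limiting argument. Your version is more elementary and quantitative; the paper's is qualitative but reuses the topological identity \(\topcls(\co{\desirset})=\co{\topint(\desirset)}\) that it needs again in Proposition~\ref{prop:ldualopto:and:openness}. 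One cosmetic point: for \(\lambda=0\) you attribute \(\ldualopto(0)=0\) to the half-line description, but downward closedness alone does not pin down the endpoint; the cleanest line in your own setup is to apply your bound \(\abs{\ldualopto(0)}\leq\optnorm{0}/\delta=0\), which settles it immediately.
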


\begin{proof}
We begin by proving that \(\ldualopto\leq\udualopto\).
Indeed, assume {\itshape ex absurdo} that there is some~\(\opt\in\opts\) for which \(\ldualopto(\opt)>\udualopto(\opt)\), so there are real~\(\alpha>\beta\) such that \(\opt-\alpha\opt[o]\in\desirset\) and \(\beta\opt[o]-\opt\in\desirset\).
We then infer from coherence [Axiom~\ref{ax:desirs:cone}] that \((\beta-\alpha)\opt[o]=(\opt-\alpha\opt[o])+(\beta\opt[o]-\opt)\in\desirset\), which is impossible because \(\beta<\alpha\) and \(\opt[o]\in\desirset\) [indeed, Axiom~\ref{ax:desirs:cone} would then imply that \(0=(\beta-\alpha)\opt[o]+(\alpha-\beta)\opt[o]\in\desirset\), contradicting Axiom~\ref{ax:desirs:nozero}].

Next, we show that \(\ldualopto\) is positively homogeneous.
Consider any~\(\opt\in\opts\) and any real~\(\lambda>0\).
By the coherence of~\(\desirset\) [Axiom~\ref{ax:desirs:cone}], \(\lambda\opt-\alpha\opt[o]\in\desirset\) if and only if \(\opt-\frac{\alpha}{\lambda}\opt[o]\in\desirset\).
This guarantees that, indeed, \(\ldualopto(\lambda\opt)=\lambda\ldualopto(\opt)\).
It follows immediately that \(\udualopto\) is positively homogeneous too.

Next, we show that \(\udualopto\)---and therefore also \(\ldualopto\)---is bounded, and therefore real-valued.
Assume {\itshape ex absurdo} that it isn't, so \(\ddualoptnorm{\udualopto}=+\infty\), implying that for any~\(n\in\naturals\), there is some~\(\opt[n]\in\opts\setminus\set{0}\) such that \(\abs{\udualopto(\opt[n])}/\optnorm{\opt[n]}>n\).
Due to the positive homogeneity of~\(\udualopto\) we've just proved, we may assume without loss of generality that \(\optnorm{\opt[n]}=1\), so we find that \(\udualopto(\opt[n])>n\) or \(\udualopto(\opt[n])<-n\).
The latter alternative is equivalent to \(\ldualopto(-\opt[n])>n\), and therefore implies that also \(\udualopto(-\opt[n])>n\), because we've just proved that \(\udualopto\geq\ldualopto\).
We may therefore assume without loss of generality that \(\udualopto(\opt[n])>n\), and therefore that \(n\opt[o]-\opt[n]\in\co{\desirset}\), or equivalently [by coherence, Axiom~\ref{ax:desirs:cone}] that \(\opt[o]-\frac{\opt[n]}{n}\in\co{\desirset}\) for all~\(n\in\naturals\).
But \(\frac{\opt[n]}{n}\to0\), so we are led to the conclusion that \(\opt[o]\in\topcls(\co{\desirset})=\co{\topint(\desirset)}\).
This contradicts that \(\opt[o]\in\topint(\posopts)\subseteq\topint(\desirset)\) [use Axiom~\ref{ax:desirs:pos}].

To show that \(\ldualopto\) is non-negatively homogeneous, it now simply suffices to prove that \(\ldualopto(0)=0\).
Since \(0-\mu\opt[o]\notin\desirset\) for~\(\mu\geq0\) and at the same time \(0-\mu\opt[o]\in\desirset\) for~\(\mu<0\), by coherence [use Axioms~\ref{ax:desirs:cone} and~\ref{ax:desirs:nozero}], this is immediate from Equation~\eqref{eq:ldualopt:from:desirset}.

For the superadditivity of~\(\ldualopto\), consider any~\(\opt,\altopt\in\opts\).
Consider any~\(\alpha,\beta\in\reals\) such that both \(\alpha<\ldualopto(\opt)\) and \(\beta<\ldualopto(\altopt)\), so \(\opt-\alpha\opt[o]\in\desirset\) and \(\altopt-\beta\opt[o]\in\desirset\).
But then \(\opt+\altopt-(\alpha+\beta)\opt[o]\in\desirset\) by the coherence of~\(\desirset\) [Axiom~\ref{ax:desirs:cone}], implying that \(\alpha+\beta\leq\ldualopto(\opt+\altopt)\).
Hence, indeed, \(\ldualopto(\opt+\altopt)\geq\ldualopto(\opt)+\ldualopto(\altopt)\).

Finally, for any~\(\altopt\notin\desirset\), we have that \(\altopt-\alpha\opt[o]\notin\desirset\) for all~\(\alpha\geq0\), by coherence [Axioms~\ref{ax:desirs:cone} and~\ref{ax:desirs:nozero}], and therefore indeed \(\ldualopto(\altopt)\leq0\).
And for any~\(\opt\in\desirset\), we get that \(\opt-\alpha\opt[o]\in\desirset\) for all~\(\alpha\leq0\), by coherence [Axiom~\ref{ax:desirs:cone}], and therefore indeed \(\ldualopto(\opt)\geq0\).
\end{proof}

\begin{proposition}\label{prop:ldualopto:and:openness}
If the set of desirable options~\(\desirset\) is coherent, then \(\desirset[\ldualopto]=\topint(\desirset)\).
\end{proposition}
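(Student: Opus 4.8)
The plan is to prove the set equality $\desirset[\ldualopto]=\topint(\desirset)$ by two inclusions, the whole argument resting on a single additive lemma about interiors together with the coherence of $\desirset$. Note first that by the preceding Proposition~\ref{prop:ldualopt:from:desirset} we have $\ldualopto\in\ldualopts$, so $\ldualopto$ is a genuine real functional and the set $\desirset[\ldualopto]=\cset{\opt\in\opts}{\ldualopto(\opt)>0}$ is well defined in the sense of~\eqref{eq:desirset:from:cslf}. The lemma I would record is that $\topint(\posopts)+\desirset\subseteq\topint(\desirset)$. Its proof is short: by Axiom~\ref{ax:desirs:pos} we have $\posopts\subseteq\desirset$, and by Axiom~\ref{ax:desirs:cone} (with both coefficients equal to one) $\desirset$ is closed under addition, so $\desirset+\desirset\subseteq\desirset$. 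Hence, given any $w\in\topint(\posopts)$ with open neighbourhood $N\subseteq\posopts$ and any $\altopt\in\desirset$, the translate $N+\altopt$ is an open neighbourhood of $w+\altopt$ contained in $\posopts+\desirset\subseteq\desirset$, so $w+\altopt\in\topint(\desirset)$. I would also note that, since $\posopts$ is a convex cone and scalar multiplication by $\alpha\in\posreals$ is a homeomorphism fixing $\posopts$ setwise, $\alpha\opt[o]\in\topint(\posopts)$ for every $\alpha\in\posreals$; and that $\opt[o]\neq0$, because $0\notin\posopts$ by the irreflexivity of $\optgt$, so $\optnorm{\opt[o]}>0$.

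For the inclusion $\desirset[\ldualopto]\subseteq\topint(\desirset)$, I would take $\opt$ with $\ldualopto(\opt)>0$. Since the supremum in the definition~\eqref{eq:ldualopt:from:desirset} of $\ldualopto(\opt)$ is strictly positive, there is a genuine witness $\alpha\in\posreals$ with $\opt-\alpha\opt[o]\in\desirset$. Writing $\opt=\alpha\opt[o]+(\opt-\alpha\opt[o])$ and applying the lemma with $w=\alpha\opt[o]\in\topint(\posopts)$ and $\altopt=\opt-\alpha\opt[o]\in\desirset$ then yields $\opt\in\topint(\desirset)$.

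For the converse inclusion $\topint(\desirset)\subseteq\desirset[\ldualopto]$, I would take $\opt\in\topint(\desirset)$ and pick $\epsilon\in\posreals$ with $\cset{\altopt\in\opts}{\optnorm{\altopt-\opt}<\epsilon}\subseteq\desirset$. Setting $\alpha\coloneqq\epsilon/(2\optnorm{\opt[o]})\in\posreals$, the estimate $\optnorm{(\opt-\alpha\opt[o])-\opt}=\alpha\optnorm{\opt[o]}=\epsilon/2<\epsilon$ places $\opt-\alpha\opt[o]$ inside $\desirset$, so $\alpha$ belongs to the set over which the supremum in~\eqref{eq:ldualopt:from:desirset} is taken, whence $\ldualopto(\opt)\geq\alpha>0$, i.e.\ $\opt\in\desirset[\ldualopto]$. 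The argument is essentially routine once the additive lemma is in hand; the only subtlety worth flagging is the passage from $\ldualopto(\opt)>0$ to an \emph{actual} witness $\alpha>0$ with $\opt-\alpha\opt[o]\in\desirset$ (rather than merely a sequence of approximating witnesses), which is immediate from the definition of the supremum but is the step on which the first inclusion hinges.
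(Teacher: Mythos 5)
Your proposal is correct, and for the harder inclusion it takes a genuinely different route from the paper. The inclusion \(\topint(\desirset)\subseteq\desirset[\ldualopto]\) is proved the same way in both (pick a ball \(B(\opt,\delta)\subseteq\desirset\), note that \(\opt-\alpha\opt[o]\) stays in that ball for \(0<\alpha<\delta/\optnorm{\opt[o]}\), and read off positivity of the supremum in Equation~\eqref{eq:ldualopt:from:desirset}). But for \(\desirset[\ldualopto]\subseteq\topint(\desirset)\) the paper argues contrapositively and analytically: if \(\opt\notin\topint(\desirset)\), then \(\opt\in\topcls(\co{\desirset})\), so a sequence \(\opt[n]\in\co{\desirset}\) converges to \(\opt\); since \(\ldualopto(\opt[n])\leq0\) by Proposition~\ref{prop:ldualopt:from:desirset}, the Lipschitz continuity of \(\ldualopto\) (Propositions~\ref{prop:ldualopt:from:desirset} and~\ref{prop:ludualopt:continuous}) gives \(\ldualopto(\opt)\leq0\). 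You instead argue directly and algebraically: a positive supremum yields an actual witness \(\alpha\in\posreals\) with \(\opt-\alpha\opt[o]\in\desirset\), and your lemma \(\topint(\posopts)+\desirset\subseteq\topint(\desirset)\)---which follows from Axioms~\ref{ax:desirs:cone} and~\ref{ax:desirs:pos} plus the fact that translations and positive scalings are homeomorphisms preserving \(\posopts\)---places \(\opt=\alpha\opt[o]+(\opt-\alpha\opt[o])\) in \(\topint(\desirset)\). What each approach buys: the paper's proof is short because it reuses the analytic machinery already established (boundedness, hence Lipschitz continuity, of \(\ldualopto\)); your proof is more elementary and self-contained, needing neither the boundedness nor the continuity of \(\ldualopto\), only the coherence axioms and translation invariance of the norm topology, so it would survive even in a setting where the boundedness result were not yet available. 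Both arguments are complete and correct, including your attention to the two small points (\(\alpha\opt[o]\in\topint(\posopts)\) and \(\optnorm{\opt[o]}>0\)) that make the lemma applicable.
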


\begin{proof}
The coherence of~\(\desirset\) and Proposition~\ref{prop:ldualopt:from:desirset} guarantee that \(\ldualopto\in\ldualopts\).
It therefore suffices to show that for any~\(\opt\in\opts\), \(\opt\in\topint(\desirset)\ifandonlyif\ldualopto(\opt)>0\).

First, assume that \(\opt\notin\topint(\desirset)\), then since \(\co{\topint(\desirset)}=\topcls(\co{\desirset})\), there is some sequence~\(\opt[n]\in\co{\desirset}\) such that \(\opt[n]\to\opt\), and therefore also \(\ldualopto(\opt[n])\to\ldualopto(\opt)\), by the continuity of~\(\ldualopto\) [use \(\ldualopto\in\ldualopts\) and Proposition~\ref{prop:ludualopt:continuous}].
Since \(\ldualopto(\opt[n])\leq0\) by Proposition~\ref{prop:ldualopt:from:desirset}, we see that, indeed, \(\ldualopto(\opt)\leq0\).

Next, assume that \(\opt\in\topint(\desirset)\), so \(B(\opt,\delta)\coloneqq\cset{\altopt\in\opts}{\optnorm{\altopt-\opt}<\delta}\subseteq\desirset\) for some real~\(\delta>0\).
Let \(\opt[\epsilon]\coloneqq\opt-\epsilon\opt[o]\) for all real~\(\epsilon>0\), then \(\optnorm{\opt[\epsilon]-\opt}=\optnorm{-\opt[o]\epsilon}=\epsilon\optnorm{\opt[o]}\), so if we pick any~\(0<\epsilon<\delta/\optnorm{\opt[o]}\), we find that \(\opt[\epsilon]\in B(\opt,\delta)\) and therefore \(\opt[\epsilon]\in\desirset\), whence
\begin{equation*}
0
\leq\ldualopto(\opt[\epsilon])
=\ldualopto(\opt-\epsilon\opt[o])
=\ldualopto(\opt)-\epsilon,
\end{equation*}
where the inequality follows from Proposition~\ref{prop:ldualopt:from:desirset}, and the last equality from Equation~\eqref{eq:ldualopt:from:desirset}.
Hence, indeed, \(\ldualopto(\opt)\geq\epsilon>0\).
\end{proof}

\begin{proposition}[Representation]\label{prop:isomorphisms:arch:noconstants}
A set of desirable options~\(\desirset\in\desirsets\) is {\essarchim} if and only if there is some~\(\ldualopt\in\posldualopts\) such that \(\desirset=\desirset[\ldualopt]\).
In that case, we always have that \(\desirset=\desirset[\ldualopto]\), and therefore in particular also \(\ldualopto\in\posldualopts(\desirset)\).
\end{proposition}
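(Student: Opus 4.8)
The plan is to establish both implications, using the canonical functional \(\ldualopto\) from Equation~\eqref{eq:ldualopt:from:desirset} as the explicit witness in the necessity direction, so that the two closing "in that case" claims come essentially for free.

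For \emph{sufficiency}, suppose \(\ldualopt\in\posldualopts\) and \(\desirset=\desirset[\ldualopt]\). First I would check that \(\desirset\) is open: since \(\ldualopt\) is superlinear and bounded, Proposition~\ref{prop:ludualopt:continuous} guarantees it is (Lipschitz, hence) continuous, so \(\desirset[\ldualopt]=\cset{\opt\in\opts}{\ldualopt(\opt)>0}\) is the preimage of the open half-line \((0,+\infty)\) and is therefore open. Then I would verify the three coherence axioms directly from the defining properties of \(\ldualopt\): Axiom~\ref{ax:desirs:nozero} holds because non-negative homogeneity [Axiom~\ref{ax:superlinear:homogeneous}] forces \(\ldualopt(0)=0\); Axiom~\ref{ax:desirs:cone} holds because for \(\opt,\altopt\in\desirset[\ldualopt]\) and \((\lambda,\mu)>0\), superadditivity and non-negative homogeneity give \(\ldualopt(\lambda\opt+\mu\altopt)\geq\lambda\ldualopt(\opt)+\mu\ldualopt(\altopt)>0\); and Axiom~\ref{ax:desirs:pos} holds precisely because \(\ldualopt\in\posldualopts\) means \(\ldualopt(\opt)>0\) for every \(\opt\in\posopts\), i.e.\ \(\posopts\subseteq\desirset[\ldualopt]\). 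Hence \(\desirset\) is coherent and open, so {\essarchim}.

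For \emph{necessity}, suppose \(\desirset\) is {\essarchim}, that is, coherent and open. Proposition~\ref{prop:ldualopt:from:desirset} yields \(\ldualopto\in\ldualopts\), and Proposition~\ref{prop:ldualopto:and:openness} yields \(\desirset[\ldualopto]=\topint(\desirset)\); since \(\desirset\) is open this reads \(\desirset[\ldualopto]=\desirset\), which is exactly the required representation. It then remains to upgrade \(\ldualopto\) from superlinear to \emph{positive}, i.e.\ into \(\posldualopts\): by coherence [Axiom~\ref{ax:desirs:pos}] we have \(\posopts\subseteq\desirset=\desirset[\ldualopto]\), so \(\ldualopto(\opt)>0\) for all \(\opt\in\posopts\), which is precisely condition~\eqref{eq:posldualopts} defining \(\posldualopts\). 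This simultaneously supplies the witness \(\ldualopt=\ldualopto\) and establishes the closing claims \(\desirset=\desirset[\ldualopto]\) and \(\ldualopto\in\posldualopts(\desirset)\), the latter because \(\desirset\subseteq\desirset[\ldualopto]\) now holds as an equality.

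Since all the analytic content has already been isolated in Propositions~\ref{prop:ldualopt:from:desirset} and~\ref{prop:ldualopto:and:openness}, I expect no genuine obstacle here; the only load-bearing observations are that \emph{openness} of \(\desirset\) is exactly what converts the interior \(\topint(\desirset)\) of Proposition~\ref{prop:ldualopto:and:openness} into \(\desirset\) itself, and that the \emph{positivity} of \(\ldualopto\)—the jump from \(\ldualopts\) to \(\posldualopts\)—is extracted solely from the coherence inclusion \(\posopts\subseteq\desirset\) [Axiom~\ref{ax:desirs:pos}]. Keeping these two roles cleanly separated is the whole of the argument.
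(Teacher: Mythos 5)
Your proof is correct and follows essentially the same route as the paper's: sufficiency by verifying openness (via Proposition~\ref{prop:ludualopt:continuous}) and the three coherence axioms directly from superlinearity and positivity, and necessity by combining Propositions~\ref{prop:ldualopt:from:desirset} and~\ref{prop:ldualopto:and:openness} with Axiom~\ref{ax:desirs:pos} to get \(\desirset=\desirset[\ldualopto]\) and \(\ldualopto\in\posldualopts\). The only (harmless) deviations are cosmetic: you derive \(\ldualopt(0)=0\) from non-negative homogeneity where the paper uses superadditivity, and you obtain the closing ``in that case'' claims by routing through the already-established equivalence rather than re-running the openness argument as the paper does.
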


\begin{proof}
For sufficiency, we need to consider any~\(\ldualopt\in\posldualopts\) and prove that \(\desirset[\ldualopt]\) is {\essarchim}.
That \(\desirset[\ldualopt]\) is open, follows at once from the continuity of~\(\ldualopt\), so we only need to focus on coherence.
Since Axiom~\ref{ax:superlinear:superadditive} implies that \(\ldualopt(0)+\ldualopt(0)\leq\ldualopt(0)\), and since \(\ldualopt\) is real-valued, we know that \(\ldualopt(0)\leq0\), so Equation~\eqref{eq:desirset:from:cslf} implies that \(0\notin\desirset[\ldualopt]\).
Hence, \(\desirset[\ldualopt]\) satisfies Axiom~\ref{ax:desirs:nozero}.
For Axiom~\ref{ax:desirs:pos}, it suffices to observe that for any~\(\opt\in\posopts\), \(\ldualopt(\opt)>0\) because \(\dualopt\in\posldualopts\), so \(\opt\in\desirset[\ldualopt]\) by Equation~\eqref{eq:desirset:from:cslf}.
Let's now turn to Axiom~\ref{ax:desirs:cone}.
Consider any~\(\opt,\altopt\in\desirset[\ldualopt]\) and any~\((\lambda,\mu)>0\).
It then follows from Equation~\eqref{eq:desirset:from:cslf} that \(\ldualopt(\opt)>0\) and \(\ldualopt(\altopt)>0\), and from Axioms~\ref{ax:superlinear:homogeneous} and~\ref{ax:superlinear:superadditive} that \(\ldualopt(\lambda\opt+\mu\altopt)\geq\lambda\ldualopt(\opt)+\mu\ldualopt(\altopt)\).
Since \((\lambda,\mu)>0\), this implies that \(\ldualopt(\lambda\opt+\mu\altopt)>0\), which in turn implies that \(\lambda\opt+\mu\altopt\in\desirset[\ldualopt]\) by Equation~\eqref{eq:desirset:from:cslf}.
So \(\desirset[\ldualopt]\) satisfies Axiom~\ref{ax:desirs:cone}.

To prove necessity, we consider any {\essarchim} \(\desirset\), so \(\desirset\) is both coherent and open.
Proposition~\ref{prop:ldualopto:and:openness} then guarantees that \(\desirset=\topint(\desirset)=\desirset[\ldualopto]\).
Also recall that, by coherence [Axiom~\ref{ax:desirs:pos}], \(\posopts\subseteq\desirset\), and therefore also \(\ldualopto(\opt)>0\) for all~\(\opt\optgt0\).
Hence also \(\ldualopto\in\posldualopts\), which completes the proof of the `if and only if' statement.
For the rest of the proof, assume that there is some~\(\ldualopt\in\ldualopts\) such that \(\desirset=\desirset[\ldualopt]\).
But then \(\desirset\) is open, because \(\ldualopt\) is continuous, and therefore \(\desirset=\topint(\desirset)\).
Since Proposition~\ref{prop:ldualopto:and:openness} guarantees that also \(\desirset[\ldualopto]=\topint(\desirset)\), we are done.
\end{proof}

For sets of desirable options that are {\essarchim} and mixing, we have similar results in terms of \emph{linear} rather than superlinear bounded real functionals.

\begin{proposition}\label{prop:dualopt:from:desirset}
If the set of desirable options~\(\desirset\) is mixing, then \(\ldualopto\in\dualopts\), and we'll then denote this linear bounded real functional by~\(\dualopto\).
Moreover, \(\dualopto(\opt)\geq0\) for all~\(\opt\in\desirset\) and \(\dualopto(\altopt)\leq0\) for all~\(\altopt\in\co{\desirset}\).
\end{proposition}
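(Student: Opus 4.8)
The plan is to reduce the claim to showing that $\ldualopto$ is \emph{self-conjugate}, and then to use the mixing hypothesis in the guise of Proposition~\ref{prop:mixingequalslexicographic} to rule out any gap between $\ldualopto$ and its conjugate $\udualopto$. Since a mixing set of desirable options is in particular coherent, Proposition~\ref{prop:ldualopt:from:desirset} already tells us that $\ldualopto\in\ldualopts$ is superlinear and bounded, and that $\ldualopto(\opt)\geq0$ for all $\opt\in\desirset$ while $\ldualopto(\altopt)\leq0$ for all $\altopt\in\co{\desirset}$. Recalling that a functional is linear precisely when it is superlinear \emph{and} equal to its conjugate, it therefore suffices to prove that $\ldualopto(\opt)=\udualopto(\opt)$ for every $\opt\in\opts$; once this is established we may write $\dualopto\coloneqq\ldualopto\in\dualopts$, and the announced sign properties are inherited verbatim from Proposition~\ref{prop:ldualopt:from:desirset}.

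One inequality is free: Proposition~\ref{prop:ludualopt:properties}\ref{it:ludualopt:properties:lsmalleru} gives $\ldualopto(\opt)\leq\udualopto(\opt)$. For the reverse inequality I would argue \emph{ex absurdo}, assuming there is some $\opt\in\opts$ with $\ldualopto(\opt)<\udualopto(\opt)$. The first step is to read off from the definitions in Equation~\eqref{eq:ldualopt:from:desirset} the one-sided membership behaviour along the direction $\opt[o]$: for every real $\alpha>\ldualopto(\opt)$ we have $\opt-\alpha\opt[o]\notin\desirset$, hence $\opt-\alpha\opt[o]\in\co{\desirset}$, and for every real $\beta<\udualopto(\opt)$ we have $\beta\opt[o]-\opt\notin\desirset$, hence $\beta\opt[o]-\opt\in\co{\desirset}$.

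The key step is then to invoke the mixing hypothesis through Proposition~\ref{prop:mixingequalslexicographic}, which states that $\co{\desirset}$ is $\posi$-stable, i.e.\ $\posi(\co{\desirset})=\co{\desirset}$. Choosing any two reals with $\ldualopto(\opt)<\alpha<\beta<\udualopto(\opt)$---possible exactly because we assumed a strict gap---both $\opt-\alpha\opt[o]$ and $\beta\opt[o]-\opt$ lie in $\co{\desirset}$, so their unit-coefficient positive combination $(\opt-\alpha\opt[o])+(\beta\opt[o]-\opt)=(\beta-\alpha)\opt[o]$ lies in $\posi(\co{\desirset})=\co{\desirset}$. But $\beta-\alpha>0$ and $\opt[o]\in\topint(\posopts)\subseteq\posopts$, so $(\beta-\alpha)\opt[o]\in\posopts\subseteq\desirset$ by coherence~\ref{ax:desirs:pos}; this contradicts $(\beta-\alpha)\opt[o]\in\co{\desirset}$. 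Hence no such gap exists, $\ldualopto=\udualopto$, and $\ldualopto$ is linear.

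I expect the main obstacle to be recognising that the mixing condition is exactly the right tool: the content of the argument is that the superlinear functional $\ldualopto$ extracted from a coherent $\desirset$ can fail to be linear only when $\co{\desirset}$ is not closed under positive combinations, and Proposition~\ref{prop:mixingequalslexicographic} converts mixingness precisely into this $\posi$-stability of $\co{\desirset}$. The only care needed in the routine part is to pick the witnesses $\alpha,\beta$ \emph{strictly} between the two functional values, so that both relevant options genuinely fall in $\co{\desirset}$, and to arrange that the resulting combination is a strictly positive multiple of $\opt[o]$, which is then forced into $\desirset$ by positivity.
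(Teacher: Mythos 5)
Your proof is correct, but it follows a different decomposition from the paper's. The paper proves that \(\ldualopto\) is \emph{subadditive}: for arbitrary \(\opt,\altopt\in\opts\) and any \(\mu>\ldualopto(\opt)+\ldualopto(\altopt)\), it sets \(\epsilon\) equal to half the slack, observes that \(\opt[\epsilon]\coloneqq\opt-(\ldualopto(\opt)+\epsilon)\opt[o]\) and \(\altopt[\epsilon]\coloneqq\altopt-(\ldualopto(\altopt)+\epsilon)\opt[o]\) both lie outside \(\desirset\) by Equation~\eqref{eq:ldualopt:from:desirset}, and then applies Axiom~\ref{ax:desirs:mixing} in contraposition to \(\optset\coloneqq\set{\opt[\epsilon],\altopt[\epsilon]}\) to conclude that \(\opt[\epsilon]+\altopt[\epsilon]=\opt+\altopt-\mu\opt[o]\notin\desirset\), whence \(\ldualopto(\opt+\altopt)\leq\ldualopto(\opt)+\ldualopto(\altopt)\); no contradiction argument and no appeal to the positivity axiom are needed there. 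You instead reduce linearity to \emph{self-conjugacy}---legitimate, since the paper defines a linear functional as one that is superlinear and self-conjugate---and establish it \emph{ex absurdo}, one option at a time, by feeding the two complement elements \(\opt-\alpha\opt[o]\) and \(\beta\opt[o]-\opt\) into Proposition~\ref{prop:mixingequalslexicographic} and deriving a clash with Axiom~\ref{ax:desirs:pos}. The two arguments exploit mixingness through the very same mechanism (positive combinations of elements of \(\co{\desirset}\) stay in \(\co{\desirset}\)); indeed, given superadditivity, self-conjugacy is exactly subadditivity restricted to antipodal pairs \((\opt,-\opt)\), and your insight is that this special case already suffices. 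Your route is marginally shorter, since only the single-option case needs checking and everything else (boundedness, the sign properties) is inherited verbatim from Proposition~\ref{prop:ldualopt:from:desirset}; the paper's route is constructive rather than by contradiction, yields the subadditivity inequality for all pairs directly, and does not need the background cone \(\posopts\) at this step.
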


\begin{proof}
We only need to prove, by Proposition~\ref{prop:ldualopt:from:desirset}, that \(\ldualopto\) is linear, which amounts to showing that it's also subadditive.
So for any~\(\opt,\altopt\in\opts\), we show that \(\ldualopto(\opt+\altopt)\leq\ldualopto(\opt)+\ldualopto(\altopt)\).
Fix any~\(\mu\in\reals\) such that \(\mu>\ldualopto(\opt)+\ldualopto(\altopt)\eqqcolon\lambda\) and let \(\epsilon\coloneqq\nicefrac{1}{2}\group{\mu-\lambda}>0\).
Then \(\opt[\epsilon]\coloneqq\opt-(\ldualopto(\opt)+\epsilon)\opt[o]\notin\desirset\) and \(\altopt[\epsilon]\coloneqq\altopt-(\ldualopto(\altopt)+\epsilon)\opt[o]\notin\desirset\), by Equation~\eqref{eq:ldualopt:from:desirset}.
So if we let \(\optset\coloneqq\{\opt[\epsilon],\altopt[\epsilon]\}\), then \(\optset\cap\desirset=\emptyset\).
Since \(\{\opt[\epsilon],\altopt[\epsilon],\opt[\epsilon]+\altopt[\epsilon]\}\subseteq\posi(\optset)\), it follows from the mixingness of~\(\desirset\) [Axiom~\ref{ax:desirs:mixing} and contraposition] that also \(\{\opt[\epsilon],\altopt[\epsilon],\opt[\epsilon]+\altopt[\epsilon]\}\cap\desirset=\emptyset\).
Hence, we find that
\begin{equation*}
\opt+\altopt-\mu\opt[o]
=\opt+\altopt-(\lambda+2\epsilon)\opt[o]
=\opt[\epsilon]+\altopt[\epsilon]\notin\desirset.
\end{equation*}
Since this holds for all real~\(\mu>\lambda\), it follows from Equation~\eqref{eq:ldualopt:from:desirset} that \(\ldualopto(\opt+\altopt)\leq\lambda\), as desired.
\end{proof}

\begin{proposition}[Representation]\label{prop:isomorphisms:arch:mixing:noconstants}
A set of desirable options~\(\desirset\in\desirsets\) is {\essarchim} and mixing if and only if there is some~\(\dualopt\in\posdualopts\) such that \(\desirset=\desirset[\dualopt]\).
In that case, we always have that \(\desirset=\desirset[\dualopto]\), and therefore in particular also \(\dualopto\in\posdualopts(\desirset)\).
\end{proposition}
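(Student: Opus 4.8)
The plan is to establish this as the \emph{linear} refinement of Proposition~\ref{prop:isomorphisms:arch:noconstants}, reusing that result together with Proposition~\ref{prop:dualopt:from:desirset} (which upgrades \(\ldualopto\) to a linear functional under mixingness) and Proposition~\ref{prop:mixingequalslexicographic} (which characterises mixingness of a coherent set through the cone structure of its complement). The only genuinely new ingredient is the interplay between linearity and the mixing axiom; everything else is bookkeeping, resting on the identity \(\posdualopts=\dualopts\cap\posldualopts\), which is immediate from Equations~\eqref{eq:posdualopts} and~\eqref{eq:posldualopts} together with \(\dualopts\subseteq\ldualopts\).

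For sufficiency, I would take any \(\dualopt\in\posdualopts\) with \(\desirset=\desirset[\dualopt]\). Since \(\posdualopts\subseteq\posldualopts\), the functional \(\dualopt\) is in particular a positive superlinear bounded functional, so Proposition~\ref{prop:isomorphisms:arch:noconstants} immediately yields that \(\desirset=\desirset[\dualopt]\) is coherent and open, hence essentially Archimedean. To get mixingness I would invoke Proposition~\ref{prop:mixingequalslexicographic} and check that \(\co{\desirset}=\cset{\opt\in\opts}{\dualopt(\opt)\leq0}\) is a convex cone: if \(\dualopt(\opt)\leq0\) and \(\dualopt(\altopt)\leq0\), then \(\dualopt(\lambda\opt+\mu\altopt)=\lambda\dualopt(\opt)+\mu\dualopt(\altopt)\leq0\) for all \(\lambda,\mu\geq0\) by linearity, so \(\posi(\co{\desirset})\subseteq\co{\desirset}\), while the converse inclusion is trivial. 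Hence \(\posi(\co{\desirset})=\co{\desirset}\) and \(\desirset\) is mixing.

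For necessity, I would assume \(\desirset\) is essentially Archimedean and mixing. Mixingness lets me apply Proposition~\ref{prop:dualopt:from:desirset}, so \(\ldualopto\) is linear and may be written as \(\dualopto\in\dualopts\). Essential Archimedeanity lets me apply the necessity half of Proposition~\ref{prop:isomorphisms:arch:noconstants}, which gives \(\desirset=\desirset[\ldualopto]\) and \(\ldualopto\in\posldualopts\); combining the two, \(\desirset=\desirset[\dualopto]\) with \(\dualopto\) linear and strictly positive on \(\posopts\). A linear functional that is positive on \(\posopts\) lies in \(\posdualopts\) directly by Equation~\eqref{eq:posdualopts}, so \(\dualopto\in\posdualopts\), which establishes the ``only if'' direction and, simultaneously, the representation \(\desirset=\desirset[\dualopto]\). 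Finally, \(\desirset=\desirset[\dualopto]\) gives \(\desirset\subseteq\desirset[\dualopto]\), whence \(\dualopto\in\posdualopts(\desirset)\) by Equation~\eqref{eq:posclfs:from:desirset}.

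I would expect the mixingness direction of sufficiency to be the only place demanding any thought: one must recognise that linearity of \(\dualopt\) is precisely what forces the weak sublevel set \(\co{\desirset}\) to be closed under positive combinations, which is exactly the cone condition of Proposition~\ref{prop:mixingequalslexicographic}. Conversely, in the necessity direction the substantive work has already been outsourced to Proposition~\ref{prop:dualopt:from:desirset}, whose proof is where the mixing axiom does the heavy lifting of turning the superadditivity of \(\ldualopto\) into full additivity.
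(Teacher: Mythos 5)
Your proof is correct and takes essentially the same route as the paper's: sufficiency by noting \(\dualopt\in\posdualopts\subseteq\posldualopts\) and invoking Proposition~\ref{prop:isomorphisms:arch:noconstants} for {\essarchimty}, with mixingness coming from linearity of \(\dualopt\) (the paper verifies Axiom~\ref{ax:desirs:mixing} directly, which is just the contrapositive of your cone argument via Proposition~\ref{prop:mixingequalslexicographic}); and necessity by combining Propositions~\ref{prop:isomorphisms:arch:noconstants} and~\ref{prop:dualopt:from:desirset}, a step the paper declares immediate and you spell out correctly, including the final observation that \(\desirset=\desirset[\dualopto]\) yields \(\dualopto\in\posdualopts(\desirset)\).
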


\begin{proof}
For sufficiency, we have to consider any~\(\dualopt\in\posdualopts\), and prove that \(\desirset[\dualopt]\) is {\essarchim} and mixing.
Since also \(\dualopt\in\posldualopts\), we infer from Proposition~\ref{prop:isomorphisms:arch:noconstants} that \(\desirset[\dualopt]\) is {\essarchim}, so it only remains to show that it is mixing.
So let us consider any~\(\optset\in\optsets\) and assume that \(\posi(\optset)\cap\desirset[\dualopt]\neq\emptyset\), then we must prove that \(\optset\cap\desirset[\dualopt]\neq\emptyset\).
That \(\posi(\optset)\cap\desirset[\dualopt]\neq\emptyset\) means that there are~\(n\in\naturals\), \(\opt[k]\in\optset\) and real~\(\lambda_k>0\) such that \(0<\dualopt\group{\sum_{k=1}^n\lambda_k\opt[k]}=\sum_{k=1}^n\lambda_k\dualopt(\opt[k])\).
This indeed implies that there is some~\(\opt[k]\in\optset\) for which \(\dualopt(\opt[k])>0\).

The rest of the proof is immediate if we combine Propositions~\ref{prop:isomorphisms:arch:noconstants} and~\ref{prop:dualopt:from:desirset}.
\end{proof}

\begin{runningexample}\label{ex:essential:archimedeanity}
Going back to our coin example, we now intend to show that both the coherent sets of desirable options~\(\desirset[\heads]\) and~\(\desirset[\tails]\), introduced in its instalment~\ref{ex:assessment:identical:sides}, are {\essarchim} and mixing.
A related argument will show that the set of desirable options~\(\desirset[I]\) from instalment~\ref{ex:assessment:lower:upper:betting:rates} is {\essarchim}, but only mixing if~\(\lp=\up\).

Indeed, if we recall from instalment~\ref{ex:functionals} the definition in Equation~\eqref{eq:define:the:functionals} of the linear bounded real functional~\(\dualopt[p]\in\dualopts\) and the superlinear bounded real functional~\(\ldualopt[I]\in\ldualopts\) on~\(\difs\), then it follows from the equivalence in Equation~\eqref{eq:coin:imprecise:desirset} that \(\desirset[I]=\desirset[{\ldualopt[I]}]\), and therefore also \(\desirset[p]=\desirset[{\dualopt[p]}]\) for \(0<\lp\leq p\leq\up<1\); see Figure~\ref{fig:essential:archimedeanity}.
When~\(\lp=p=0\) or~\(p=\up=1\), similar conclusions hold only when we consider~\(\gblpgt\) as a background ordering.
In all those cases, Proposition~\ref{prop:isomorphisms:arch:noconstants} guarantees that~\(\desirset[I]=\desirset[{\ldualopt[I]}]\) is {\essarchim}, and Proposition~\ref{prop:isomorphisms:arch:mixing:noconstants} guarantees that~\(\desirset[p]=\desirset[{\ldualopt[p]}]\) is {\essarchim} and mixing.
This holds in particular also for~\(\desirset[\heads]=\desirset[{\dualopt[1]}]\) (with~\(\lp=p=\up=1\)) and~\(\desirset[\tails]=\desirset[{\dualopt[0]}]\) (with~\(\lp=p=\up=0\)) when we consider~\(\gblpgt\) as a background ordering.
\stopit
\end{runningexample}

\begin{figure}[h]
\begin{tikzpicture}[scale=.5]\footnotesize
\fill[blue!50] (4,4) -- (4,-2) -- (0,0) -- (-1,4) -- cycle;
\draw[gray,->] (-2.2,0) -- (4.2,0) node[right,red] {\(\dualopt(\indset{\heads}\unitdif)\)};
\draw[gray,->] (0,-2.2) -- (0,4.2) node[above,red] {\(\dualopt(\indset{\tails}\unitdif)\)};
\draw[blue,densely dotted,thick] (0,0) -- (-1,4) node[sloped,midway,below] {\(\ldualopt[I]=0\)};
\draw[blue,densely dotted,thick] (0,0) -- (4,-2) node[sloped,midway,below] {\(\ldualopt[I]=0\)};
\node[white] at (1,3.5) {\(\desirset[I]\)};
\fill[red!50] (0,0) -- (4,1) -- (4,4) -- (2,4) -- cycle;
\draw[red,semithick] (0,0) -- (2,4);
\draw[red,semithick] (0,0) -- (4,1);
\node[white] at (3,2.5) {\(\dualdifs(\desirset[I])\)};
\node[draw=red,fill=white,circle,inner sep=1pt] at (0,0) {};
\draw[white,dashed] (3,0) -- (0,3);
\draw[white,thick] (2.4,.6) -- node[midway,below left=-3pt] {\(C_{\ldualopt[I]}\)} (1,2);
\node[fill=white,circle,inner sep=1pt] at (1,2) {};
\node[fill=white,circle,inner sep=1pt] at (2.4,.6) {};
\draw[white] (3,.1) -- (3,-.1) node[below,white] {\(1\)};
\draw[white] (.1,3) -- (-.1,3) node[left,white] {\(1\)};
\node[blue,right] at (4,-1) {\(\ldualopt[I]>0\)};
\node[blue] at (-1,-1) {\(\ldualopt[I]<0\)};
\end{tikzpicture}
\caption{Representation (in red) of the set of linear bounded real functionals~\(\dualdifs(\desirset[I])\) for the {\essarchim} sets of desirable options~\(\desirset[I]\) (in blue), represented by the values~\(\dif(\bolleke,\bestreward)\) that its elements~\(\dif\in\desirset[I]\) assume in the reward~\(\bestreward\). The set of linear real functionals~\(\dualdifs(\desirset[I])\) is represented by the values~\((\dualopt(\indset{\heads}\unitdif),\dualopt(\indset{\tails}\unitdif))\) that its elements~\(\dualopt\in\dualdifs(\desirset[I])\) assume in the options~\(\indset{\heads}\unitdif\) and~\(\indset{\tails}\unitdif\). Full blue or red lines indicate `borders' that are included in the sets, dotted lines represent `borders' that aren't. The superlinear bounded real functional \(\ldualopt[I]\) divides the option space \(\difs\) in three regions according to its sign. Also indicated is the representation \(C_{\ldualopt[I]}\) of the set of linear bounded real functionals \(\dualdifs(\ldualopt[I])\).}
\label{fig:essential:archimedeanity}
\end{figure}

\section{Normalisation}\label{sec:normalisation}
Proposition~\ref{prop:isomorphisms:arch:noconstants} has interesting implications, and it will be helpful to pay more attention to them, in order to achieve a better understanding of what we are actually doing in Propositions~\ref{prop:ldualopt:from:desirset}--\ref{prop:isomorphisms:arch:mixing:noconstants} above.
The {\essarchim} sets of desirable options~\(\desirset\) are all those, and only those, for which there is some superlinear bounded real functional~\(\ldualopt\in\posldualopts\) such that \(\desirset=\desirset[\ldualopt]\).
But Proposition~\ref{prop:isomorphisms:arch:noconstants} also guarantees that in this representation~\(\desirset[\ldualopt]\) for~\(\desirset\), the superlinear bounded real functional~\(\ldualopt\in\posldualopts\) can always be replaced by the superlinear bounded real functional~\(\ldualopto\in\posldualopts\), as we know that
\begin{equation*}
\desirset=\desirset[\ldualopt]=\desirset[\ldualopto].
\end{equation*}
The import of all this is that we can associate, with any~\(\opt[o]\in\topint(\posopts)\), the following so-called \emph{normalisation} transformation of~\(\ldualopts\):
\begin{equation*}
\nml\colon\ldualopts\to\ldualopts\colon\ldualopt\mapsto\nml\ldualopt\coloneqq\ldualopto[{\desirset[\ldualopt]}]
\end{equation*}
where, after a few elementary manipulations, we get
\begin{equation}\label{eq:normalisation:map}
\nml\ldualopt(\opt)=\sup\cset{\alpha\in\reals}{\ldualopt(\opt-\alpha\opt[o])>0}
\text{ for all~\(\opt\in\opts\)}.
\end{equation}

We list a few properties of the normalisation~\(\nml\) below.
They show, amongst other things, that \(\nml\) converts any positive bounded superlinear functional~\(\ldualopt\) to a version~\(\nml\ldualopt\) that is normalised and constant additive with respect to the option~\(\opt[o]\).
Moreover, it's the purport of Proposition~\ref{prop:isomorphisms:arch:noconstants} that if \(\ldualopt\) represents an {\essarchim}~\(\desirset\) in the sense that \(\desirset=\desirset[\ldualopt]\), then so does the version~\(\nml\ldualopt\), in the sense that also \(\desirset=\desirset[\nml\ldualopt]\).
Finally, the normalisation~\(\nml\) is also internal in the set of positive bounded linear functionals~\(\posdualopts\): if \(\dualopt\in\posdualopts\) then also \(\nml\dualopt\in\posdualopts\).

\begin{proposition}\label{prop:normalisation:map:properties}
Consider any~\(\opt[o]\in\topint(\posopts)\) and let \(\desirset\in\desirsets\) be any set of desirable options.
Then for all~\(\ldualopt\in\posldualopts\) and~\(\dualopt\in\posdualopts\), the following statements hold:
\begin{enumerate}[label=\upshape(\roman*),leftmargin=*]
\item\label{it:normalisation:map:properties:preserves:open:cone} \(\desirset[\ldualopt]=\desirset[\nml\ldualopt]\);\hfill\textup{[preserves open cone]}
\item\label{it:normalisation:map:properties:internality} if \(\ldualopt\in\posldualopts(\desirset)\) then \(\nml\ldualopt\in\posldualopts(\desirset)\);\hfill\textup{[internality]}
\item\label{it:normalisation:map:properties:normalisation} \(\nml\ldualopt(\opt[o])=-\nml\ldualopt(-\opt[o])=1\);\hfill\textup{[normalisation]}
\item\label{it:normalisation:map:properties:constant:additivity} \(\nml\ldualopt(\opt+\mu\opt[o])=\nml\ldualopt(\opt)+\mu\) for all~\(\opt\in\opts\) and~\(\mu\in\reals\);\hfill\textup{[constant additivity]}
\item\label{it:normalisation:map:properties:fixpoint} if \(\ldualopt(\opt+\mu\opt[o])=\ldualopt(\opt)+\mu\) for all~\(\opt\in\opts\) and~\(\mu\in\reals\), then \(\nml\ldualopt=\ldualopt\);\hfill\textup{[fixpoints]}
\item\label{it:normalisation:map:properties:idempotency} \(\nml(\nml\ldualopt)=\nml\ldualopt\);\hfill\textup{[idempotency]}
\item\label{it:normalisation:map:properties:fixed:point} \(\nml\ldualopto=\ldualopto\);\hfill\textup{[special fixpoint]}
\item\label{it:normalisation:map:properties:linear:internality} if \(\dualopt\in\posdualopts(\desirset)\) then \(\nml\dualopt\in\posdualopts(\desirset)\);\hfill\textup{[linear internality]}%
\item\label{it:normalisation:map:properties:linear:normalisation} \(\nml\dualopt=\dualopt/\dualopt(\opt[o])\);\hfill\textup{[linear normalisation]}%
\end{enumerate}
\end{proposition}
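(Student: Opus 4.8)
The plan is to reduce all nine items to two workhorses. The first is the representation result Proposition~\ref{prop:isomorphisms:arch:noconstants}, which disposes of the two ``internality'' facts (i) and (ii); the second is the \emph{constant additivity with respect to \(\opt[o]\)} of the functional \(\ldualopto\), which drives everything else. So before anything, I would record as a standing observation that, for every \(\desirset\) for which \(\ldualopto\) is real-valued—in particular every coherent \(\desirset\), by Proposition~\ref{prop:ldualopt:from:desirset}—one has
\begin{equation*}
\ldualopto(\opt+\mu\opt[o])=\ldualopto(\opt)+\mu
\text{ for all }\opt\in\opts\text{ and }\mu\in\reals.
\end{equation*}
This is the one-line substitution \(\beta\coloneqq\alpha-\mu\) in the defining Equation~\eqref{eq:ldualopt:from:desirset}, which turns \(\sup\cset{\alpha}{(\opt+\mu\opt[o])-\alpha\opt[o]\in\desirset}\) into \(\mu+\sup\cset{\beta}{\opt-\beta\opt[o]\in\desirset}\). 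I would also note that, straight from the definition \(\nml\ldualopt=\ldualopto[{\desirset[\ldualopt]}]\) together with Equation~\eqref{eq:desirset:from:cslf}, the formula~\eqref{eq:normalisation:map}, namely \(\nml\ldualopt(\opt)=\sup\cset{\alpha}{\ldualopt(\opt-\alpha\opt[o])>0}\), is valid for \emph{every} \(\ldualopt\in\ldualopts\).

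For (i) and (ii) I would appeal to Proposition~\ref{prop:isomorphisms:arch:noconstants} directly. Since \(\ldualopt\in\posldualopts\), the set \(\desirset[\ldualopt]\) is {\essarchim}, so applying that proposition with its ``\(\desirset\)'' taken to be \(\desirset[\ldualopt]\) yields both \(\desirset[\ldualopt]=\desirset[\nml\ldualopt]\) (because \(\nml\ldualopt=\ldualopto[{\desirset[\ldualopt]}]\) by definition), which is (i), and \(\nml\ldualopt=\ldualopto[{\desirset[\ldualopt]}]\in\posldualopts(\desirset[\ldualopt])\subseteq\posldualopts\). For (ii) I then combine this membership with \(\desirset\subseteq\desirset[\ldualopt]=\desirset[\nml\ldualopt]\), which holds by (i) whenever \(\ldualopt\in\posldualopts(\desirset)\), to conclude \(\nml\ldualopt\in\posldualopts(\desirset)\).

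The remaining items follow from the two observations. Property (iv) is immediate, since \(\nml\ldualopt=\ldualopto[{\desirset[\ldualopt]}]\) is constant additive by the standing observation applied to \(\desirset[\ldualopt]\); and (iii) then drops out by setting \(\opt=0\) and using \(\nml\ldualopt(0)=0\) (non-negative homogeneity, Axiom~\ref{ax:superlinear:homogeneous}), which gives \(\nml\ldualopt(\pm\opt[o])=\pm1\). For (v) I would feed the hypothesised constant additivity of \(\ldualopt\) into~\eqref{eq:normalisation:map}: there \(\ldualopt(\opt-\alpha\opt[o])=\ldualopt(\opt)-\alpha>0\) exactly when \(\alpha<\ldualopt(\opt)\), so the supremum equals \(\ldualopt(\opt)\) and \(\nml\ldualopt=\ldualopt\). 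Then (vi) is just (v) applied to \(\nml\ldualopt\), which is legitimate because \(\nml\ldualopt\in\posldualopts\) (shown for (i)/(ii)) and is constant additive by (iv); and (vii) is the same computation as (v) run on \(\ldualopto\) itself, which is constant additive by the standing observation (this presupposes \(\desirset\) coherent, so that \(\ldualopto\) is real-valued).

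Finally, for the two linear items I would use that \(\dualopt(\opt[o])>0\), as \(\dualopt\in\posdualopts\) and \(\opt[o]\in\posopts\): linearity turns the condition \(\ldualopt(\opt-\alpha\opt[o])>0\) in~\eqref{eq:normalisation:map} into \(\alpha<\dualopt(\opt)/\dualopt(\opt[o])\), whose supremum gives (ix), \(\nml\dualopt=\dualopt/\dualopt(\opt[o])\); and (viii) follows because this strictly positive rescaling leaves \(\desirset[\nml\dualopt]=\desirset[\dualopt]\) unchanged and keeps the functional in \(\posdualopts\), so internality is inherited from \(\dualopt\in\posdualopts(\desirset)\). I do not expect a genuine obstacle: every step is either a direct citation of Proposition~\ref{prop:isomorphisms:arch:noconstants} or a one-line manipulation of a supremum. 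The only points demanding care are bookkeeping ones—verifying that \(\nml\ldualopt\) really lands in \(\posldualopts\) before invoking (v) on it in the proof of (vi), and flagging that (vii) silently assumes \(\desirset\) to be coherent so that \(\ldualopto\) and the standing observation make sense.
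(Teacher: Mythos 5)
Your proof is correct and follows essentially the same route as the paper's: Proposition~\ref{prop:isomorphisms:arch:noconstants} to dispose of (i)--(ii), and one-line supremum manipulations of Equation~\eqref{eq:normalisation:map} (constant additivity first, then everything else) for the remaining items, including the same coherence caveat you flag for (vii). The only departure is item (viii): the paper cites Proposition~\ref{prop:isomorphisms:arch:mixing:noconstants}, whereas you derive it directly from (ix)---a strictly positive rescaling \(\dualopt/\dualopt(\opt[o])\) stays in \(\posdualopts\) and leaves \(\desirset[\dualopt]\) unchanged---which is slightly more elementary and works equally well.
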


\begin{proof}
First of all, observe that \(\opt[o]\in\topint(\posopts)\) implies that \(\ldualopt(\opt[o])>0\) for all~\(\ldualopt\in\posldualopts\) and \(\dualopt(\opt[o])>0\) for all~\(\dualopt\in\posdualopts\).
Statement~\ref{it:normalisation:map:properties:preserves:open:cone} is an immediate consequence of the definition of the normalisation~\(\nml\), which is enabled by Proposition~\ref{prop:isomorphisms:arch:noconstants}.
It leads trivially to~\ref{it:normalisation:map:properties:internality}, and the linear counterpart~\ref{it:normalisation:map:properties:linear:internality} follows directly from Proposition~\ref{prop:isomorphisms:arch:mixing:noconstants}.
For~\ref{it:normalisation:map:properties:normalisation}, consider that
\begin{equation*}
\nml\ldualopt(\opt[o])=\sup\cset{\alpha\in\reals}{\ldualopt\group{(1-\alpha)\opt[o]}>0},
\end{equation*}
and recall that \(\ldualopt\in\posldualopts\) implies that \(\ldualopt\group{(1-\alpha)\opt[o]}>0\ifandonlyif\alpha<1\).
Similarly, consider that
\begin{equation*}
\nml\ldualopt(-\opt[o])=\sup\cset{\alpha\in\reals}{\ldualopt\group{-(1+\alpha)\opt[o]}>0},
\end{equation*}
and recall that \(\ldualopt\in\posldualopts\) implies that \(\ldualopt\group{-(1+\alpha)\opt[o]}>0\ifandonlyif\alpha<-1\).
For~\ref{it:normalisation:map:properties:constant:additivity}, observe that
\begin{align*}
\nml\ldualopt(\opt+\mu\opt[o])
&=\sup\cset{\alpha\in\reals}{\ldualopt\group{\opt-(\alpha-\mu)\opt[o]}>0}\\
&=\sup\cset{\beta+\mu\in\reals}{\ldualopt\group{\opt-\beta\opt[o]}>0}
=\nml\ldualopt(\opt)+\mu.
\end{align*}
For~\ref{it:normalisation:map:properties:fixpoint}, consider that it follows from the constant additivity assumption that, indeed,
\begin{equation*}
\nml\ldualopt(\opt)
=\sup\cset{\alpha\in\reals}{\ldualopt\group{\opt-\alpha\opt[o]}>0}
=\sup\cset{\alpha\in\reals}{\ldualopt(\opt)>\alpha}
=\ldualopt(\opt)
\end{equation*}
for all~\(\opt\in\opts\).
For~\ref{it:normalisation:map:properties:idempotency} it's enough to combine~\ref{it:normalisation:map:properties:fixpoint} and~\ref{it:normalisation:map:properties:constant:additivity}.
Similarly, for~\ref{it:normalisation:map:properties:fixed:point} it suffices to observe that the definition of~\(\ldualopto\) guarantees that it is constant additive, and to apply~\ref{it:normalisation:map:properties:fixpoint}.
For~\ref{it:normalisation:map:properties:linear:normalisation}, take into account that \(\dualopt(\opt-\alpha\opt[0])>0\ifandonlyif\dualopt(\opt)>\alpha\dualopt(\opt[o])\), and that \(\dualopt(\opt[o])>0\).
\end{proof}

\begin{runningexample}\label{ex:normalisation}
In our coin example, we see that \(\unitdif=\indset{\heads}\unitdif+\indset{\tails}\unitdif\) and therefore, using the notations from  instalment~\ref{ex:functionals},
\begin{equation*}
\difnorm{\dif-\unitdif}
=\max\set{\abs{\dif(\heads,\bestreward)-1},\abs{\dif(\tails,\bestreward)-1}}
<\nicefrac12
\ifandonlyif
\nicefrac12\gblplt\dif(\bolleke,\bestreward)\gblplt\nicefrac32,
\end{equation*}
which implies that the unit ball around~\(\unitdif\) with radius~\(\nicefrac12\) lies entirely inside the positive cones~\(\difs_{\gblpgt0}\) and~\(\difs_{\gblgt0}\), and therefore \(\unitdif\in\topint(\difs_{\gblpgt0})=\topint(\difs_{\gblgt0})\).

Consider the linear bounded real functional~\(\dualopt[p]\in\dualopts\) and the superlinear bounded real functional~\(\ldualopt[I]\in\ldualopts\) on~\(\difs\), defined in Equation~\eqref{eq:define:the:functionals} of instalment~\ref{ex:functionals}, then we find that
\begin{multline*}
\ldualopt[I](\dif+\mu\unitdif)
=\lex[I](\dif(\bolleke,\bestreward)+\mu)
=\lex[I](\dif(\bullet,\bestreward))+\mu
=\ldualopt[I](\dif)+\mu\\
\text{ for all \(\dif\in\difs\) and all~\(\mu\in\reals\)}.
\end{multline*}
Proposition~\ref{prop:normalisation:map:properties}\ref{it:normalisation:map:properties:fixpoint} then guarantees that \(\nml[\unitdif]\ldualopt[I]=\ldualopt[I]\), and similarly, \(\nml[\unitdif]\dualopt[p]=\dualopt[p]\), so these functionals are already `normalised'.
\stopit
\end{runningexample}

\section{{\archimty} for sets of desirable options}\label{sec:archimedanity:binary}
One of the drawbacks of working with {\essarchim} sets of desirable options in Section~\ref{sec:essential:archimedeanity}, is that they don't constitute an intersection structure---and therefore don't come with a conservative inference method: an arbitrary intersection of {\essarchim} sets of desirable options is no longer necessarily {\essarchim}, simply because openness is not necessarily preserved under arbitrary intersections.

In order to remedy this, we now turn to arbitrary intersections of {\essarchim} models, which of course do constitute an intersection structure.
We'll see that these types of models also allow for a very elegant and general representation, and are related to the evenly continuous models described by Cozman~\cite{cozman2018:evenlyconvex} in the more restrictive context of finite-dimensional gamble spaces.

We'll call a set of desirable options~\(\desirset\in\cohdesirsets\) \emph{{\archim}} if it is coherent and if the following separation property is satisfied; see also Figure~\ref{fig:separation}:\footnote{Observe, by the way, that if \(\desirset\) is coherent, then \(\ldualopt\in\ldualopts(\desirset)\) implies that \(\posopts\subseteq\desirset\subseteq\desirset[\ldualopt]\), and therefore also \(\ldualopt\in\posldualopts\). This implies that \(\ldualopts(\desirset)=\posldualopts(\desirset)\). Similarly, \(\dualopts(\desirset)=\posdualopts(\desirset)\).}
\begin{enumerate}[label=\({\mathrm{D}}_{\mathrm{A}}\).,ref=\({\mathrm{D}}_{\mathrm{A}}\),leftmargin=*]
\item\label{ax:desirset:essentially:archimedean:cslf} \((\forall\opt\notin\desirset)(\exists\ldualopt\in\ldualopts(\desirset))\ldualopt(\opt)\leq0\),
\end{enumerate}
and we denote by~\(\archdesirsets\) the set of all {\archim} sets of desirable options.
\begin{figure}[h]
\centering
\begin{tikzpicture}[scale=.6]\small
\path[fill=blue!75] (0,0) circle [x radius=1, y radius=2, rotate=30];
\node[white] at (0,0) {\(\desirset\)};
\node[fill=blue,circle,inner sep=1pt] at (-1.5,1.5) {};
\node[left,blue] at (-1.5,1.5) {\(\opt\)};
\draw[red, thick] (-1.5,-2) -- (-1.3,3) node[above] {\(\ldualopt=0\)};
\node[red,left] at (-1.5,-1) {\(\ldualopt<0\)};
\node[red,right] at (-1.3,2.25) {\(\ldualopt>0\)};
\end{tikzpicture}
\caption{Separation property for {\archimty}: every option~\(\opt\) outside~\(\desirset\) can be separated from it by a hypersurface \(\ldualopt=0\) with \(\ldualopt>0\) on~\(\desirset\) and \(\ldualopt(\opt)\leq0\).}
\label{fig:separation}
\end{figure}
\par
It is a fairly immediate consequence the Hahn--Banach theorem version for superlinear bounded real functionals [Theorem~\ref{theo:hahn:banach}] that this separation property is equivalent to
\begin{enumerate}[label=\({\mathrm{D}}_{\mathrm{A}}^{\mathrm{p}}\).,ref=\({\mathrm{D}}_{\mathrm{A}}^{\mathrm{p}}\),leftmargin=*]
\item\label{ax:desirset:essentially:archimedean:clf} \((\forall\opt\notin\desirset)(\exists\dualopt\in\dualopts(\desirset))\dualopt(\opt)\leq0\),
\end{enumerate}
which shows that the {\archim} sets of desirable option sets are in particular also \emph{evenly convex}; see for instance~\cite[Definition~1]{daniilidis2002:even:convexity}.

\begin{proof}
That \({\mathrm{D}}_{\mathrm{A}}^{\mathrm{p}}\then{\mathrm{D}}_{\mathrm{A}}\) is immediate, because \(\dualopts\subseteq\ldualopts\) and, therefore, \(\dualopts(\desirset)\subseteq\ldualopts(\desirset)\).
For the converse implication, namely \({\mathrm{D}}_{\mathrm{A}}\then{\mathrm{D}}_{\mathrm{A}}^{\mathrm{p}}\), fix any~\(\opt\in\co{\desirset}\), then we know from the assumption~\({\mathrm{D}}_{\mathrm{A}}\) that there is some~\(\ldualopt\in\ldualopts\) such that \(\desirset\subseteq\desirset[\ldualopt]\) and \(\ldualopt(\opt)\leq0\).
When we now invoke Theorem~\ref{theo:hahn:banach}, we find that there is some \(\dualopt\in\dualopts(\ldualopt)\) such that \(\dualopt(\opt)=\ldualopt(\opt)\leq0\).
And since \(\dualopt\in\dualopts(\ldualopt)\) implies that \(\dualopt(\altopt)\geq\ldualopt(\altopt)>0\) for all~\(\altopt\in\desirset\), we find that \(\desirset\subseteq\desirset[\dualopt]\) and therefore \(\dualopt\in\dualopts(\desirset)\).
\end{proof}

Since, by Proposition~\ref{prop:isomorphisms:arch:noconstants}, all {\essarchim} sets of desirable options have the form~\(\desirset[\ldualopt]\) for \(\ldualopt\in\posldualopts\), Equation~\eqref{eq:desirset:from:cslf} points to the fact that all {\essarchim} models are also {\archim}:
\begin{equation}\label{eq:archimedean:is:essentially:archimedean:desirsets:noconstants}
(\forall\ldualopt\in\posldualopts)\desirset[\ldualopt]\in\archdesirsets
\text{ and in particular also }
(\forall\dualopt\in\posdualopts)\desirset[\dualopt]\in\archdesirsets.
\end{equation}

It is also easy to see that \(\archdesirsets\) is an intersection structure.
Indeed, consider any non-empty family of {\archim} sets of desirable options~\(\desirset[i]\), \(i\in I\) and let \(\desirset\coloneqq\bigcap_{i\in I}\desirset[i]\), then we already know that \(\desirset\) is coherent, so we only need to check that the separation condition~\ref{ax:desirset:essentially:archimedean:cslf} is satisfied.
So consider any~\(\opt\notin\desirset\), meaning that there is some~\(i\in I\) such that \(\opt\notin\desirset[i]\).
Hence there is some~\(\ldualopt\in\ldualopts(\desirset[i])\) such that \(\ldualopt(\opt)\leq0\).
Since it follows from~\(\desirset\subseteq\desirset[i]\) and Equation~\eqref{eq:cslfs:from:desirset} that also \(\ldualopt\in\ldualopts(\desirset)\), we see that, indeed, \(\desirset\) is {\archim}.

That \(\archdesirsets\) is an intersection structure also implies that we can introduce an \emph{{\archim} closure} operator~\(\archnatexdesirset\colon\desirsets\to\archdesirsets\cup\set{\opts}\) by letting
\begin{equation*}
\archnatexdesirset(\assessment)
\coloneqq\bigcap\cset{\desirset\in\archdesirsets}{\assessment\subseteq\desirset}
\text{ for all~\(\assessment\subseteq\opts\)}
\end{equation*}
be the smallest---if any---{\archim} set of desirable options that includes~\(\assessment\).
We call an assessment~\(\assessment\subseteq\opts\) \emph{{\archim} consistent} if \(\archnatexdesirset(\assessment)\neq\opts\), or equivalently, if \(\assessment\) is included in some {\archim} set of desirable options.
The closure operator~\(\archnatexdesirset\) implements \emph{conservative inference} with respect to the {\archimty} axioms, in that it extends an {\archim} consistent assessment~\(\assessment\) to the most conservative---smallest possible---{\archim} set of desirable options~\(\archnatexdesirset(\assessment)\).

\begin{theorem}[Closure]\label{theo:archimedean:representation:desirsets:noconstants}
For any set of desirable options~\(\desirset\in\desirsets\), we have that
\(
\archnatexdesirset(\desirset)
=\bigcap\cset{\desirset[\dualopt]}{\dualopt\in\posdualopts(\desirset)}
=\bigcap\cset{\desirset[\ldualopt]}{\ldualopt\in\posldualopts(\desirset)}
\).
Hence, \(\rejectset\) is {\archim} consistent if and only if \(\posdualopts(\desirset)\neq\emptyset\), or equivalently, \(\posldualopts(\desirset)\neq\emptyset\).
And an {\archim} consistent~\(\desirset\) is {\archim} if and only if \(\desirset=\bigcap\cset{\desirset[\dualopt]}{\dualopt\in\posdualopts(\desirset)}=\bigcap\cset{\desirset[\ldualopt]}{\ldualopt\in\posldualopts(\desirset)}\).
\end{theorem}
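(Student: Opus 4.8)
The plan is to prove that the three sets in the first display coincide, and then to read off the two ``Hence'' statements. Throughout, write $\archnatexdesirset(\desirset)$ for the Archimedean closure and abbreviate the two candidate representations by
$\desirset[\star]\coloneqq\bigcap\cset{\desirset[\dualopt]}{\dualopt\in\posdualopts(\desirset)}$ and
$\desirset[\circ]\coloneqq\bigcap\cset{\desirset[\ldualopt]}{\ldualopt\in\posldualopts(\desirset)}$.
First I would show that the two representations agree, $\desirset[\star]=\desirset[\circ]$. Since $\posdualopts(\desirset)\subseteq\posldualopts(\desirset)$, intersecting over the larger index set gives $\desirset[\circ]\subseteq\desirset[\star]$ at once. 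For the reverse inclusion, fix any $\ldualopt\in\posldualopts(\desirset)$; Proposition~\ref{prop:relationships:between:sets:of:functionals}(iv) gives $\dualopts(\ldualopt)\subseteq\posdualopts(\desirset)$, and combining this with the identity $\desirset[\ldualopt]=\bigcap\cset{\desirset[\dualopt]}{\dualopt\in\dualopts(\ldualopt)}$ (an immediate consequence of Corollary~\ref{cor:lower:envelope:for:cslfs}) yields $\desirset[\star]\subseteq\desirset[\ldualopt]$. Intersecting over all such $\ldualopt$ gives $\desirset[\star]\subseteq\desirset[\circ]$, hence equality.

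Next I would establish $\archnatexdesirset(\desirset)\subseteq\desirset[\star]$. If $\posdualopts(\desirset)=\emptyset$ then $\desirset[\star]=\opts$ and there is nothing to prove, so assume $\posdualopts(\desirset)\neq\emptyset$. For each $\dualopt\in\posdualopts(\desirset)$ the set $\desirset[\dualopt]$ is Archimedean by Equation~\eqref{eq:archimedean:is:essentially:archimedean:desirsets:noconstants}, and $\desirset\subseteq\desirset[\dualopt]$ by Equation~\eqref{eq:posclfs:from:desirset}. Since $\archdesirsets$ is an intersection structure, $\desirset[\star]$ is then an Archimedean set of desirable options that includes $\desirset$, so by minimality of the closure $\archnatexdesirset(\desirset)\subseteq\desirset[\star]$.

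The reverse inclusion $\desirset[\star]\subseteq\archnatexdesirset(\desirset)$ is where the separation property does the real work, and I expect this to be the main obstacle. It suffices to show $\desirset[\star]\subseteq\desirset'$ for every Archimedean $\desirset'$ with $\desirset\subseteq\desirset'$ (when no such $\desirset'$ exists, $\archnatexdesirset(\desirset)=\opts$ and the inclusion is trivial). Fix such a $\desirset'$ and take any $\opt\notin\desirset'$. By the equivalent separation property~\ref{ax:desirset:essentially:archimedean:clf} there is some $\dualopt\in\dualopts(\desirset')$ with $\dualopt(\opt)\leq0$. Coherence of $\desirset'$ gives $\posopts\subseteq\desirset'\subseteq\desirset[\dualopt]$, whence $\dualopt\in\posdualopts$; and $\desirset\subseteq\desirset'\subseteq\desirset[\dualopt]$ gives $\dualopt\in\dualopts(\desirset)$. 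Proposition~\ref{prop:relationships:between:sets:of:functionals}(i) then places $\dualopt\in\dualopts(\desirset)\cap\posdualopts=\posdualopts(\desirset)$. As $\dualopt(\opt)\leq0$ means $\opt\notin\desirset[\dualopt]$, we conclude $\opt\notin\desirset[\star]$; contraposing gives $\desirset[\star]\subseteq\desirset'$. Together with the previous paragraph this proves $\archnatexdesirset(\desirset)=\desirset[\star]=\desirset[\circ]$.

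Finally I would deduce the two ``Hence'' statements. The equivalence $\posdualopts(\desirset)\neq\emptyset\ifandonlyif\posldualopts(\desirset)\neq\emptyset$ follows from $\posdualopts(\desirset)\subseteq\posldualopts(\desirset)$ in one direction, and in the other from the fact that any $\ldualopt\in\posldualopts(\desirset)$ has $\dualopts(\ldualopt)\neq\emptyset$ by the Hahn--Banach theorem~\ref{theo:hahn:banach} while $\dualopts(\ldualopt)\subseteq\posdualopts(\desirset)$ by Proposition~\ref{prop:relationships:between:sets:of:functionals}(iv). If this common set is empty, then $\archnatexdesirset(\desirset)=\desirset[\star]=\opts$, so $\desirset$ is not Archimedean consistent; if it is non-empty, then $\desirset[\star]$ is Archimedean and hence coherent, so $0\notin\desirset[\star]$ and $\archnatexdesirset(\desirset)\neq\opts$. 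This gives the consistency characterisation. For the last claim, if $\desirset$ is Archimedean it is its own smallest Archimedean superset, so $\desirset=\archnatexdesirset(\desirset)=\desirset[\star]=\desirset[\circ]$; conversely, for an Archimedean consistent $\desirset$ the set $\desirset[\star]$ is Archimedean, so the equality $\desirset=\desirset[\star]$ forces $\desirset$ to be Archimedean.
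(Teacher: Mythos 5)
Your proof is correct and follows essentially the same route as the paper's: both hinge on the facts that every \(\desirset[\dualopt]\) with \(\dualopt\in\posdualopts(\desirset)\) (respectively \(\desirset[\ldualopt]\) with \(\ldualopt\in\posldualopts(\desirset)\)) is an {\archim} superset of \(\desirset\), and that the separation property~\ref{ax:desirset:essentially:archimedean:clf} of an {\archim} superset yields a functional in \(\posdualopts(\desirset)\) excluding any option outside it. The only differences are organisational: the paper closes one circular chain \(\archnatexdesirset(\desirset)\subseteq\bigcap\cset{\desirset[\ldualopt]}{\ldualopt\in\posldualopts(\desirset)}\subseteq\bigcap\cset{\desirset[\dualopt]}{\dualopt\in\posdualopts(\desirset)}\subseteq\archnatexdesirset(\desirset)\), applying separation to the closure itself, whereas you prove the two intersections equal directly via the lower-envelope identity and Proposition~\ref{prop:relationships:between:sets:of:functionals}, apply separation to an arbitrary {\archim} superset \(\desirset'\), and also spell out the ``Hence'' statements that the paper dismisses as trivial consequences.
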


\begin{proof}
We only give a proof for the first statement, because the second and third statements are trivial consequences of the first.
First of all, it follows from Equation~\eqref{eq:archimedean:is:essentially:archimedean:desirsets:noconstants} that if \(\desirset\) is not {\archim} consistent, the statement is trivially true, as all terms in the chain of equalities are then equal to~\(\opts\), as empty intersections.
To see this, assume that \(\desirset\) is not Archimedean consistent, so \(\desirset\not\subseteq\desirset'\) for all \(\desirset'\) in \(\archdesirsets\), and hence, by Equation~\eqref{eq:archimedean:is:essentially:archimedean:desirsets:noconstants}, in particular \(\desirset\not\subseteq\desirset[\ldualopt]\) for all \(\ldualopt\) in \(\posldualopts\).
This implies, by Equation~\eqref{eq:poscslfs:from:desirset}, that indeed \(\posldualopts(\desirset)=\emptyset\).
A similar argument shows that then also \(\posdualopts(\desirset)=\emptyset\).

So we may assume without loss of generality that \(\desirset\) is  {\archim} consistent.
But then Equation~\eqref{eq:archimedean:is:essentially:archimedean:desirsets:noconstants} and \(\posdualopts(\desirset)\subseteq\posldualopts(\desirset)\) imply that
\begin{equation*}
\desirset[\ast]
\coloneqq\archnatexdesirset(\desirset)
\subseteq\bigcap\cset{\desirset[\ldualopt]}{\ldualopt\in\posldualopts(\desirset)}
\subseteq\bigcap\cset{\desirset[\dualopt]}{\dualopt\in\posdualopts(\desirset)},
\end{equation*}
because if \(\ldualopt\in\posldualopts(\desirset)\) then \(\desirset[\ldualopt]\) is {\archim} and \(\desirset\subseteq\desirset[\ldualopt]\).
So our proof will be complete if we can show that \(\bigcap\cset{\desirset[\dualopt]}{\dualopt\in\posdualopts(\desirset)}\subseteq\desirset[\ast]\).
To do so, assume that \(\opt\notin\desirset[\ast]\), then since \(\desirset[\ast]\) is {\archim}, the separation property~\ref{ax:desirset:essentially:archimedean:clf} tells us that there is some~\(\dualopt\in\dualopts(\desirset[\ast])\) such that \(\opt\notin\desirset[\dualopt]\).
Since \(\desirset\) is in particular coherent, \(\dualopt\in\dualopts(\desirset)\) implies that \(\posopts\subseteq\desirset\subseteq\desirset[\dualopt]\), and therefore also \(\dualopt\in\posdualopts\), whence \(\dualopt\in\posdualopts(\desirset[\ast])\).
But since \(\desirset\subseteq\desirset[\ast]\), we infer from Equation~\eqref{eq:posclfs:from:desirset} that \(\posdualopts(\desirset[\ast])\subseteq\posdualopts(\desirset)\).
We conclude that there is some~\(\dualopt\in\posdualopts(\desirset)\) such that \(\opt\notin\desirset[\dualopt]\), and therefore, also \(\opt\notin\bigcap\cset{\desirset[\dualopt]}{\dualopt\in\posdualopts(\desirset)}\).
\end{proof}
\noindent
The following important representation theorem confirms that the {\essarchim} sets of desirable options can be used to represent all {\archim} sets of desirable options via intersection.

\begin{corollary}[Representation]\label{cor:archimedean:representation:desirsets:noconstants:twosided}
For any set of desirable options~\(\desirset\in\desirsets\), the following statements are equivalent:
\begin{enumerate}[label=\upshape(\roman*),leftmargin=*]
\item\label{it:archimedean:representation:desirsets:noconstants:twosided:archim} \(\desirset\) is {\archim};
\item\label{it:archimedean:representation:desirsets:noconstants:twosided:superlinear} there is some non-empty set~\(\setofldualopts\subseteq\posldualopts\) of positive superlinear bounded real functionals such that \(\desirset=\bigcap\cset{\desirset[\ldualopt]}{\ldualopt\in\setofldualopts}\);
\item\label{it:archimedean:representation:desirsets:noconstants:twosided:linear} there is some non-empty set~\(\setofdualopts\subseteq\posdualopts\) of positive linear bounded real functionals such that \(\desirset=\bigcap\cset{\desirset[\dualopt]}{\dualopt\in\setofdualopts}\).
\end{enumerate}
In that case, the largest such set~\(\setofldualopts\) is\/~\(\posldualopts\group{\desirset}\), and the largest such set~\(\setofdualopts\) is\/~\(\posdualopts\group{\desirset}\).
\end{corollary}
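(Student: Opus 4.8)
The plan is to prove the three-way equivalence by a short cyclic chain of implications, \ref{it:archimedean:representation:desirsets:noconstants:twosided:linear}$\then$\ref{it:archimedean:representation:desirsets:noconstants:twosided:superlinear}$\then$\ref{it:archimedean:representation:desirsets:noconstants:twosided:archim}$\then$\ref{it:archimedean:representation:desirsets:noconstants:twosided:linear}, since nearly all of the real work has already been done in the Closure theorem [Theorem~\ref{theo:archimedean:representation:desirsets:noconstants}] and in the observation, established just before it, that \(\archdesirsets\) is an intersection structure. The nesting \(\posdualopts\subseteq\posldualopts\) (and \(\dualopts\subseteq\ldualopts\)) makes one of the three implications immediate, and the remaining two follow from, respectively, the intersection-structure property together with Equation~\eqref{eq:archimedean:is:essentially:archimedean:desirsets:noconstants}, and from the Closure theorem.

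For \ref{it:archimedean:representation:desirsets:noconstants:twosided:linear}$\then$\ref{it:archimedean:representation:desirsets:noconstants:twosided:superlinear}, I would simply observe that any representing set \(\setofdualopts\subseteq\posdualopts\) is also a subset of \(\posldualopts\), so it serves verbatim as the required \(\setofldualopts\). For \ref{it:archimedean:representation:desirsets:noconstants:twosided:superlinear}$\then$\ref{it:archimedean:representation:desirsets:noconstants:twosided:archim}, I would use Equation~\eqref{eq:archimedean:is:essentially:archimedean:desirsets:noconstants} to note that each \(\desirset[\ldualopt]\) with \(\ldualopt\in\setofldualopts\subseteq\posldualopts\) is {\archim}, and then invoke the fact that \(\archdesirsets\) is closed under arbitrary non-empty intersections, so that \(\desirset=\bigcap\cset{\desirset[\ldualopt]}{\ldualopt\in\setofldualopts}\) is itself {\archim}. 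Finally, for \ref{it:archimedean:representation:desirsets:noconstants:twosided:archim}$\then$\ref{it:archimedean:representation:desirsets:noconstants:twosided:linear}, I would note that an {\archim} \(\desirset\) is trivially {\archim} consistent, since it is included in itself and is an {\archim} set, so \(\archnatexdesirset(\desirset)=\desirset\); the Closure theorem then yields both that \(\posdualopts(\desirset)\neq\emptyset\) and that \(\desirset=\bigcap\cset{\desirset[\dualopt]}{\dualopt\in\posdualopts(\desirset)}\), so that \(\setofdualopts\coloneqq\posdualopts(\desirset)\) is a non-empty witness of the required form.

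It remains to identify the largest representing sets. Here I would argue that if \(\desirset=\bigcap\cset{\desirset[\ldualopt]}{\ldualopt\in\setofldualopts}\) with \(\setofldualopts\subseteq\posldualopts\), then each \(\ldualopt\in\setofldualopts\) satisfies \(\desirset\subseteq\desirset[\ldualopt]\), which by Equation~\eqref{eq:poscslfs:from:desirset} means precisely that \(\ldualopt\in\posldualopts(\desirset)\); hence \(\setofldualopts\subseteq\posldualopts(\desirset)\), while the Closure theorem already shows that \(\posldualopts(\desirset)\) is itself a valid representing set, so it is the largest one. The linear case is identical, using Equation~\eqref{eq:posclfs:from:desirset} in place of~\eqref{eq:poscslfs:from:desirset}. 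I do not expect any genuine obstacle: the corollary is essentially a repackaging of the Closure theorem, and the only points that require a little care are the appeal to non-emptiness via {\archim} consistency in the step \ref{it:archimedean:representation:desirsets:noconstants:twosided:archim}$\then$\ref{it:archimedean:representation:desirsets:noconstants:twosided:linear} and the maximality verification.
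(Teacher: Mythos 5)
Your proposal is correct and follows essentially the same route as the paper: both reduce the corollary to the Closure theorem [Theorem~\ref{theo:archimedean:representation:desirsets:noconstants}] for the direction starting from {\archimty}, to Equation~\eqref{eq:archimedean:is:essentially:archimedean:desirsets:noconstants} together with the intersection-structure property of~\(\archdesirsets\) for the converse, and to Equations~\eqref{eq:poscslfs:from:desirset} and~\eqref{eq:posclfs:from:desirset} plus the Closure theorem for the maximality claim. The only difference is organisational: you run a single cyclic chain \ref{it:archimedean:representation:desirsets:noconstants:twosided:linear}\(\then\)\ref{it:archimedean:representation:desirsets:noconstants:twosided:superlinear}\(\then\)\ref{it:archimedean:representation:desirsets:noconstants:twosided:archim}\(\then\)\ref{it:archimedean:representation:desirsets:noconstants:twosided:linear}, exploiting the inclusion \(\posdualopts\subseteq\posldualopts\), whereas the paper proves \ref{it:archimedean:representation:desirsets:noconstants:twosided:archim}\(\ifandonlyif\)\ref{it:archimedean:representation:desirsets:noconstants:twosided:superlinear} and declares \ref{it:archimedean:representation:desirsets:noconstants:twosided:archim}\(\ifandonlyif\)\ref{it:archimedean:representation:desirsets:noconstants:twosided:linear} `completely analogous', so your arrangement is marginally tidier in that it avoids the duplicated argument.
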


\begin{proof}
We prove that \ref{it:archimedean:representation:desirsets:noconstants:twosided:archim}\(\ifandonlyif\)\ref{it:archimedean:representation:desirsets:noconstants:twosided:superlinear}.
The proof that \ref{it:archimedean:representation:desirsets:noconstants:twosided:archim}\(\ifandonlyif\)\ref{it:archimedean:representation:desirsets:noconstants:twosided:linear} is completely analogous.

\ref{it:archimedean:representation:desirsets:noconstants:twosided:archim}\(\then\)\ref{it:archimedean:representation:desirsets:noconstants:twosided:superlinear}.
If \(\desirset\) is {\archim}, then \(\archnatexdesirset(\desirset)=\desirset\).
Now use Theorem~\ref{theo:archimedean:representation:desirsets:noconstants}.

\ref{it:archimedean:representation:desirsets:noconstants:twosided:superlinear}\(\then\)\ref{it:archimedean:representation:desirsets:noconstants:twosided:archim}.
Since all \(\desirset[\ldualopt]\), \(\ldualopt\in\setofldualopts\) are {\archim}---because {\essarchim} by Proposition~\ref{prop:isomorphisms:arch:noconstants}---so is their intersection~\(\desirset\).

The final statement now follows without too many difficulties from Theorem~\ref{theo:archimedean:representation:desirsets:noconstants}.
\end{proof}
\noindent
The discussion in Section~\ref{sec:normalisation} shows that the sets of functionals in Theorem~\ref{theo:archimedean:representation:desirsets:noconstants} and Corollary~\ref{cor:archimedean:representation:desirsets:noconstants:twosided} can also be replaced by~\(\nml(\posldualopts(\desirset))\), \(\nml(\posdualopts(\desirset))\), \(\nml(\setofldualopts)\) and~\(\nml(\setofdualopts)\) respectively, where \(\opt[o]\) is any option in~\(\topint(\posopts)\).
The sets~\(\nml(\posldualopts(\desirset))\) and~\(\nml(\posdualopts(\desirset))\) will now be the largest sets of positive bounded superlinear (linear) functionals that achieve representation and all of whose members are constant additive with respect to the option~\(\opt[o]\).

Corollary~\ref{cor:archimedean:representation:desirsets:noconstants:twosided}, in combination with Proposition~\ref{prop:isomorphisms:arch:noconstants}, shows that a set of desirable options is {\archim} if and only if it is an intersection of open coherent sets of desirable options.
The present notion of {\archimty} for sets of desirable option sets therefore coincides with the even continuity introduced by Cozman~\cite{cozman2018:evenlyconvex} in the more restrictive context of finite-dimensional option---or rather gamble---spaces.

Finally, Proposition~\ref{prop:isomorphisms:arch:mixing:noconstants} leads us to conclude that for sets of desirable options that are \emph{mixing}, the notions of {\archimty} and {\essarchimty} coincide: the mixing (essentially) {\archim} sets of desirable options are exactly the \(\desirset[\dualopt]\) for~\(\dualopt\in\posdualopts\).
To see in more detail how the argument works, it suffices to consider the following result.

\begin{proposition}\label{prop:mixing:and:essentially:archimedean:noconstants}
For all mixing sets of desirable options~\(\desirset\in\mixdesirsets\) and any~\(\opt[o]\in\topint(\posopts)\), the following statements are equivalent:
\begin{enumerate}[label=\upshape(\roman*),leftmargin=*]
\item\label{it:mixing:and:essentially:archimedean:consistent:noconstants} \(\desirset\) is {\archim} consistent;
\item\label{it:mixing:and:essentially:archimedean:a:noconstants} \(\desirset\) is {\archim};
\item\label{it:mixing:and:essentially:archimedean:ea:noconstants} \(\desirset\) is {\essarchim};
\item\label{it:mixing:and:essentially:archimedean:prevs:noconstants} \(\nml\group{\posldualopts(\desirset)}=\nml\group{\posdualopts(\desirset)}=\set{\dualopto}\);
\item\label{it:mixing:and:essentially:archimedean:desirset:noconstants} \(\desirset=\desirset[\dualopto]\).
\end{enumerate}
\end{proposition}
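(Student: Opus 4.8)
The plan is to prove the five statements equivalent by closing the cycle \ref{it:mixing:and:essentially:archimedean:consistent:noconstants}\(\then\)\ref{it:mixing:and:essentially:archimedean:desirset:noconstants}\(\then\)\ref{it:mixing:and:essentially:archimedean:ea:noconstants}\(\then\)\ref{it:mixing:and:essentially:archimedean:a:noconstants}\(\then\)\ref{it:mixing:and:essentially:archimedean:consistent:noconstants}, and then tying in~\ref{it:mixing:and:essentially:archimedean:prevs:noconstants} separately. The engine throughout is Proposition~\ref{prop:dualopt:from:desirset}: since \(\desirset\) is mixing, the functional \(\ldualopto\) is actually \emph{linear}, so I may write \(\dualopto\), and it satisfies \(\dualopto(\opt)\geq0\) on \(\desirset\) and \(\dualopto(\altopt)\leq0\) on \(\co{\desirset}\). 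The latter already gives the inclusion \(\desirset[\dualopto]\subseteq\desirset\), which is one half of~\ref{it:mixing:and:essentially:archimedean:desirset:noconstants}. Three of the arrows are essentially free: \ref{it:mixing:and:essentially:archimedean:a:noconstants}\(\then\)\ref{it:mixing:and:essentially:archimedean:consistent:noconstants} holds because an {\archim} \(\desirset\) equals its own {\archim} closure and is coherent, hence \(\neq\opts\); \ref{it:mixing:and:essentially:archimedean:ea:noconstants}\(\then\)\ref{it:mixing:and:essentially:archimedean:a:noconstants} is Equation~\eqref{eq:archimedean:is:essentially:archimedean:desirsets:noconstants}; and \ref{it:mixing:and:essentially:archimedean:prevs:noconstants}\(\then\)\ref{it:mixing:and:essentially:archimedean:consistent:noconstants} holds because a non-empty normalised image forces \(\posdualopts(\desirset)\neq\emptyset\), whence {\archim} consistency by Theorem~\ref{theo:archimedean:representation:desirsets:noconstants}.

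The core is \ref{it:mixing:and:essentially:archimedean:consistent:noconstants}\(\then\)\ref{it:mixing:and:essentially:archimedean:desirset:noconstants}, where I would close the sandwich \(\desirset[\dualopto]\subseteq\desirset\subseteq\cset{\opt}{\dualopto(\opt)\geq0}\) into an equality. By Theorem~\ref{theo:archimedean:representation:desirsets:noconstants}, {\archim} consistency supplies some \(\dualopt\in\posdualopts(\desirset)\), so \(\desirset\subseteq\desirset[\dualopt]\). Comparing the defining suprema in Equations~\eqref{eq:ldualopt:from:desirset} and~\eqref{eq:normalisation:map}, this inclusion yields \(\dualopto\leq\nml\dualopt\) pointwise. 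Now both \(\dualopto\) and \(\nml\dualopt=\dualopt/\dualopt(\opt[o])\) (Proposition~\ref{prop:normalisation:map:properties}\ref{it:normalisation:map:properties:linear:normalisation}) are \emph{linear}, so their difference \(\dualopto-\nml\dualopt\) is a linear functional that is nowhere positive, hence identically zero; thus \(\dualopto=\nml\dualopt\). By linear internality (Proposition~\ref{prop:normalisation:map:properties}\ref{it:normalisation:map:properties:linear:internality}) \(\nml\dualopt\in\posdualopts(\desirset)\), so \(\dualopto\in\posdualopts(\desirset)\), i.e.~\(\desirset\subseteq\desirset[\dualopto]\), closing the sandwich and proving~\ref{it:mixing:and:essentially:archimedean:desirset:noconstants}. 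The implication \ref{it:mixing:and:essentially:archimedean:desirset:noconstants}\(\then\)\ref{it:mixing:and:essentially:archimedean:ea:noconstants} is then immediate: \(\desirset=\desirset[\dualopto]\) with \(\dualopto\in\posdualopts\) (because \(\posopts\subseteq\desirset\) by coherence) is exactly the representation in Proposition~\ref{prop:isomorphisms:arch:mixing:noconstants}.

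It remains to extract~\ref{it:mixing:and:essentially:archimedean:prevs:noconstants} from the others, say from~\ref{it:mixing:and:essentially:archimedean:desirset:noconstants}. The linear half is a by-product of the sandwich argument: every \(\dualopt\in\posdualopts(\desirset)\) normalises to \(\dualopto\), and \(\posdualopts(\desirset)\neq\emptyset\), so \(\nml\group{\posdualopts(\desirset)}=\set{\dualopto}\). I expect the \emph{superlinear} half, \(\nml\group{\posldualopts(\desirset)}=\set{\dualopto}\), to be the main obstacle, since a priori \(\posldualopts(\desirset)\) could harbour genuinely superlinear functionals that the linear argument does not touch. The plan is to collapse it to the linear case: for \(\ldualopt\in\posldualopts(\desirset)\), Proposition~\ref{prop:relationships:between:sets:of:functionals} gives \(\dualopts(\ldualopt)\subseteq\posdualopts(\desirset)\), so every dominating linear functional of \(\ldualopt\) is a positive multiple of \(\dualopto\); the lower envelope theorem (Corollary~\ref{cor:lower:envelope:for:cslfs}) then writes \(\desirset[\ldualopt]=\bigcap\cset{\desirset[\dualopt]}{\dualopt\in\dualopts(\ldualopt)}=\desirset[\dualopto]=\desirset\), whence, by the very definition of the normalisation, \(\nml\ldualopt=\ldualopto[{\desirset[\ldualopt]}]=\ldualopto[\desirset]=\dualopto\). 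Combined with \(\nml\dualopto=\dualopto\) (Proposition~\ref{prop:normalisation:map:properties}\ref{it:normalisation:map:properties:fixed:point}) and non-emptiness, this delivers~\ref{it:mixing:and:essentially:archimedean:prevs:noconstants}, completing the web of equivalences. The conceptual message I would emphasise is that {\archimty} plus mixingness leaves room for exactly one normalised representing functional—the linear \(\dualopto\)—so that all superlinear representations degenerate.
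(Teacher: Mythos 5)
Your proof is correct, but its core mechanism is genuinely different from the paper's. The paper proves the cycle \ref{it:mixing:and:essentially:archimedean:consistent:noconstants}\(\then\)\ref{it:mixing:and:essentially:archimedean:prevs:noconstants}\(\then\)\ref{it:mixing:and:essentially:archimedean:desirset:noconstants}\(\then\)\ref{it:mixing:and:essentially:archimedean:ea:noconstants}\(\then\)\ref{it:mixing:and:essentially:archimedean:a:noconstants}\(\then\)\ref{it:mixing:and:essentially:archimedean:consistent:noconstants}, and all the work sits in the first arrow: starting from an \emph{arbitrary superlinear} \(\ldualopt\in\posldualopts(\desirset)\), it re-runs a mixingness argument (two options summing to \(\epsilon\opt[o]\in\desirset\), then Axiom~\ref{ax:desirs:mixing}) to show that \(\nml\ldualopt\) is self-conjugate, hence linear, and then identifies \(\nml\ldualopt=\dualopto\) by a sign-comparison argument. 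You instead prove \ref{it:mixing:and:essentially:archimedean:consistent:noconstants}\(\then\)\ref{it:mixing:and:essentially:archimedean:desirset:noconstants} first, drawing a \emph{linear} \(\dualopt\in\posdualopts(\desirset)\) directly from the linear half of Theorem~\ref{theo:archimedean:representation:desirsets:noconstants}, so that mixingness enters only once, through Proposition~\ref{prop:dualopt:from:desirset} (linearity of \(\ldualopto\)); the identification \(\dualopto=\nml\dualopt\) then follows from the elementary fact that a nowhere-positive linear functional vanishes, and internality closes the sandwich \(\desirset[\dualopto]\subseteq\desirset\subseteq\desirset[\dualopt]\). Your treatment of the superlinear half of \ref{it:mixing:and:essentially:archimedean:prevs:noconstants} also differs: where the paper computes directly, you collapse each \(\ldualopt\in\posldualopts(\desirset)\) onto its dominating linear functionals via Proposition~\ref{prop:relationships:between:sets:of:functionals} and the lower envelope theorem (Corollary~\ref{cor:lower:envelope:for:cslfs}), obtaining \(\desirset[\ldualopt]=\desirset[\dualopto]=\desirset\) and hence \(\nml\ldualopt=\ldualopto=\dualopto\) from the very definition of~\(\nml\). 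What your route buys is economy and transparency: the paper's key step essentially repeats the proof of Proposition~\ref{prop:dualopt:from:desirset} with \(\nml\ldualopt\) in place of \(\ldualopto\), whereas you avoid that duplication and reduce the crux to pure linear algebra. What the paper's route buys is self-containment: its key step needs neither Proposition~\ref{prop:relationships:between:sets:of:functionals} nor Corollary~\ref{cor:lower:envelope:for:cslfs}, and it shows directly that mixingness linearises \emph{every} normalised representative, not just \(\ldualopto\). The remaining arrows (\ref{it:mixing:and:essentially:archimedean:desirset:noconstants}\(\then\)\ref{it:mixing:and:essentially:archimedean:ea:noconstants} via Proposition~\ref{prop:isomorphisms:arch:mixing:noconstants}, \ref{it:mixing:and:essentially:archimedean:ea:noconstants}\(\then\)\ref{it:mixing:and:essentially:archimedean:a:noconstants} via Equation~\eqref{eq:archimedean:is:essentially:archimedean:desirsets:noconstants}, and the trivial closing edges) coincide with the paper's, so the difference lies in the decomposition and the key lemma, not in the supporting results.
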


\begin{proof}
We begin by recalling from Proposition~\ref{prop:dualopt:from:desirset} that \(\ldualopto\) is a linear bounded real functional, also denoted by~\(\dualopto\), because \(\desirset\) is mixing, by assumption.
Moreover, it holds that
\begin{equation}\label{eq:mixing:and:essentially:archimedean:noconstants}
\dualopto(\opt)>0\then\opt\in\desirset\text{ for all~\(\opt\in\opts\)}.
\end{equation}
To see this, observe that \(\dualopto(\opt)>0\) implies that there is some real~\(\alpha>0\) such that \(\opt-\alpha\opt[o]\in\desirset\).
Moreover, also \(\opt[o]\in\topint(\posopts)\subseteq\posopts\subseteq\desirset\), where the last inclusion follows from Axiom~\ref{ax:desirs:pos}.
But then Axiom~\ref{ax:desirs:cone} guarantees that \(\opt=(\opt-\alpha\opt[o])+\alpha\opt[o]\in\desirset\).

We'll now prove that \ref{it:mixing:and:essentially:archimedean:consistent:noconstants}\(\then\)\ref{it:mixing:and:essentially:archimedean:prevs:noconstants}\(\then\)\ref{it:mixing:and:essentially:archimedean:desirset:noconstants}\(\then\)\ref{it:mixing:and:essentially:archimedean:ea:noconstants}\(\then\)\ref{it:mixing:and:essentially:archimedean:a:noconstants}\(\then\)\ref{it:mixing:and:essentially:archimedean:consistent:noconstants}.

\ref{it:mixing:and:essentially:archimedean:consistent:noconstants}\(\then\)\ref{it:mixing:and:essentially:archimedean:prevs:noconstants}.
Assume that \(\desirset\) is {\archim} consistent.
Then we infer from Theorem~\ref{theo:archimedean:representation:desirsets:noconstants} that \(\posldualopts(\desirset)\neq\emptyset\).
So, consider any~\(\ldualopt\in\posldualopts(\desirset)\), implying that \(\ldualopt\in\posldualopts\) and \(\desirset\subseteq\desirset[\ldualopt]=\desirset[\nml\ldualopt]\) [use Proposition~\ref{prop:normalisation:map:properties}\ref{it:normalisation:map:properties:preserves:open:cone}].
We'll first show that \(\nml\ldualopt\) is linear.
First of all, if we consider any~\(\opt,\altopt\in\opts\), then the mixingness of~\(\desirset\) implies in particular that if \(\opt+\altopt\in\desirset\), then \(\set{\opt,\altopt}\cap\desirset\neq\emptyset\), and therefore \(\nml\ldualopt(\opt)>0\) or \(\nml\ldualopt(v)>0\).
Now consider any~\(\altopttoo\in\opts\) and any real~\(\epsilon>0\), and let \(\opt\coloneqq\altopttoo-\nml\ldualopt(\altopttoo)\opt[o]\) and \(\altopt\coloneqq\nml\ldualopt(\altopttoo)\opt[o]-\altopttoo+\epsilon\opt[o]\).
Then \(\opt+\altopt=\epsilon\opt[o]\in\desirset\), and therefore [use Proposition~\ref{prop:normalisation:map:properties}\ref{it:normalisation:map:properties:constant:additivity}]
\begin{equation*}
0
<\nml\ldualopt(\opt)
=\nml\ldualopt(\altopttoo)-\nml\ldualopt(\altopttoo)
\end{equation*}
or
\begin{equation*}
0<\nml\ldualopt(\altopt)=\nml\ldualopt(\altopttoo)+\epsilon+\nml\ldualopt(-\altopttoo),
\end{equation*}
whence \(-\nml\ldualopt(-\altopttoo)<\nml\ldualopt(\altopttoo)+\epsilon\).
Since this holds for all~\(\epsilon>0\) and all~\(\altopttoo\in\opts\), this implies that the superlinear bounded real functional~\(\nml\ldualopt\) is self-conjugate [use Proposition~\ref{prop:ldualopt:from:desirset} and Proposition~\ref{prop:ludualopt:properties}\ref{it:ludualopt:properties:lsmalleru}], and therefore indeed linear.

Equation~\eqref{eq:mixing:and:essentially:archimedean:noconstants}, the linearity of both~\(\dualopto\) and~\(\nml\ldualopt\), and the fact that \(\desirset\subseteq\desirset[{\nml\ldualopt}]\) now allow us to conclude that both
\begin{equation}\label{eq:mixing:and:essentially:archimedean:noconstants:combined}
\dualopto(\opt)>0\then\nml\ldualopt(\opt)>0
\text{ and }
\dualopto(\opt)<0\then\nml\ldualopt(\opt)<0
\text{ for all~\(\opt\in\opts\)}.
\end{equation}
Now consider any~\(\altopttoo\in\opts\), and let in particular~\(\opt\coloneqq\altopttoo-\nml\ldualopt(\altopttoo)\opt[o]\), then [use Proposition~\ref{prop:normalisation:map:properties}\ref{it:normalisation:map:properties:constant:additivity} and Equation~\eqref{eq:ldualopt:from:desirset}]
\begin{equation*}
\nml\ldualopt(\opt)=\nml\ldualopt(\altopttoo)-\nml\ldualopt(\altopttoo)=0
\text{ and }
\dualopto(\opt)=\dualopto(\altopttoo)-\nml\ldualopt(\altopttoo).
\end{equation*}
Equation~\eqref{eq:mixing:and:essentially:archimedean:noconstants:combined} then guarantees that \(\dualopto(\altopttoo)=\nml\ldualopt(\altopttoo)\).
This shows that \(\nml\group{\posldualopts(\desirset)}=\set{\dualopto}\), whence also \(\nml\group{\posdualopts(\desirset)}=\set{\dualopto}\).

\ref{it:mixing:and:essentially:archimedean:prevs:noconstants}\(\then\)\ref{it:mixing:and:essentially:archimedean:desirset:noconstants}.
Since \(\posldualopts(\desirset)\) is non-empty, by assumption, we can consider any~\(\ldualopt\in\posldualopts(\desirset)\).
But then \(\desirset\subseteq\desirset[\ldualopt]=\desirset[\nml\ldualopt]=\desirset[\dualopto]=\topint(\desirset)\subseteq\desirset\), where the first equality follows from Proposition~\ref{prop:normalisation:map:properties}\ref{it:normalisation:map:properties:preserves:open:cone}, the second equality from the assumption, and the third equality from Propositions~\ref{prop:dualopt:from:desirset} and~\ref{prop:ldualopto:and:openness}.

\ref{it:mixing:and:essentially:archimedean:desirset:noconstants}\(\then\)\ref{it:mixing:and:essentially:archimedean:ea:noconstants}.
Use Proposition~\ref{prop:isomorphisms:arch:mixing:noconstants}.

\ref{it:mixing:and:essentially:archimedean:ea:noconstants}\(\then\)\ref{it:mixing:and:essentially:archimedean:a:noconstants}.
Use Equation~\eqref{eq:archimedean:is:essentially:archimedean:desirsets:noconstants}.

\ref{it:mixing:and:essentially:archimedean:a:noconstants}\(\then\)\ref{it:mixing:and:essentially:archimedean:consistent:noconstants}.
Trivial.
\end{proof}

\begin{runningexample}
In our coin example, let us consider the set of desirable options \(\desirset[(\lp,\up)]\) defined by 
\begin{equation*}
\dif\in\desirset[(\lp,\up)]\ifandonlyif
\begin{cases}
\ex[\lp]\group{\dif(\bolleke,\bestreward)}\geq0
&\text{if \(\dif(\heads,\bestreward)\geq\dif(\tails,\bestreward)\)}\\
\ex[\up]\group{\dif(\bolleke,\bestreward)}\geq0
&\text{if \(\dif(\heads,\bestreward)\leq\dif(\tails,\bestreward)\)}
\end{cases}
\text{ and }
\dif(\bolleke,\bestreward)\neq0,
\text{ for all \(\dif\in\difs\)},
\end{equation*}
where we assume that \(0<\lp<\up<1\); see Figure~\ref{fig:archim:but:not:essarchim} for a graphical representation. 
It's not difficult to see that \(\desirset[(\lp,\up)]=\bigcap_{p\in(\lp,\up)}\desirset[p]\), where, as in instalment~\ref{ex:assessment:lower:upper:betting:rates},
\begin{equation*}
\dif\in\desirset[p]\ifandonlyif\ex[p]\group{\dif(\bolleke,\bestreward)}=p\dif(\heads,\bestreward)+(1-p)\dif(\tails,\bestreward)>0,
\text{ for all \(\dif\in\difs\)}.
\end{equation*}
We've seen in instalment~\ref{ex:essential:archimedeanity} that all such~\(\desirset[p]\) are {\essarchim}, under both background orderings~\(\gblgt\) and~\(\gblpgt\).
It therefore follows from Theorem~\ref{theo:archimedean:representation:desirsets:noconstants} that the intersection~\(\desirset[(\lp,\up)]\) is {\archim} under these background orderings as well.
But \(\desirset[(\lp,\up)]\) is not {\essarchim}, as it's clearly not an open set. 
\stopit
\end{runningexample}

\begin{figure}[h]
\centering
\begin{tikzpicture}[scale=.5]\footnotesize
\fill[blue!50] (0,0) -- (4,-2) -- (4,4) -- (-1,4) -- cycle;
\draw[gray,->] (-2.2,0) -- (4.2,0) node[below right] {\(\heads\)};
\draw[gray,->] (0,-2.2) -- (0,4.2) node[above left] {\(\tails\)};
\draw[blue,thick] (0,0) -- (4,-2) node[below] {\(\ex[\lp]=0\)};
\draw[blue,dashed] (0,0) -- (-2,1);
\draw[blue,thick] (0,0) -- (-1,4) node[left] {\(\ex[\up]=0\)};
\draw[blue,dashed] (0,0) -- (.5,-2);
\node[draw=blue,fill=white,circle,inner sep=1pt] at (0,0) {};
\node[white] at (3,3) {\(\desirset[(\lp,\up)]\)};
\end{tikzpicture}
\caption{The {\archim} but not {\essarchim} set of desirable options~\(\desirset[(\lp,\up)]\), represented by the values~\(\dif(\bolleke,\bestreward)\) that its elements~\(\dif\in\desirset[(\lp,\up)]\) assume in the reward~\(\bestreward\). The full blue lines represent `borders' that are included in the set.}
\label{fig:archim:but:not:essarchim}
\end{figure}

\section{{\archimty} for sets of desirable option sets}\label{sec:archimedeanity:non-binary}
We can now `lift' our discussion of {\archimty} from binary to general choice models, that is, sets of desirable option sets.

We begin by setting up the relevant machinery, in close analogy with Section~\ref{sec:archimedanity:binary}.
Given a set of desirable option sets~\(\rejectset\in\rejectsets\), we let
\begin{align}
\ldualopts(\rejectset)
\coloneqq
&\cset{\ldualopt\in\ldualopts}
{(\forall\optset\in\rejectset)(\exists\opt\in\optset)\ldualopt(\opt)>0}
=\cset{\ldualopt\in\ldualopts}{(\forall\optset\in\rejectset)\optset\cap\desirset[\ldualopt]\neq\emptyset}\notag\\
=&\cset{\ldualopt\in\ldualopts}{(\forall\optset\in\rejectset)\optset\in\rejectset[\ldualopt]}
=\cset{\ldualopt\in\ldualopts}{\rejectset\subseteq\rejectset[\ldualopt]},
\label{eq:cslfs:from:rejectset}
\end{align}
where we used Equation~\eqref{eq:desirset:from:cslf} and let
\begin{equation}\label{eq:cslf:to:rejectset}
\rejectset[\ldualopt]
\coloneqq\rejectset[{\desirset[\ldualopt]}]
=\cset{\optset\in\optsets}{\optset\cap\desirset[\ldualopt]\neq\emptyset}
=\cset{\optset\in\optsets}{(\exists\opt\in\optset)\ldualopt(\opt)>0}
\text{ for any~\(\ldualopt\in\ldualopts\).}
\end{equation}
Similarly,
\begin{equation}\label{eq:clfs:from:rejectset}
\dualopts(\rejectset)
\coloneqq\cset{\dualopt\in\dualopts}
{(\forall\optset\in\rejectset)(\exists\opt\in\optset)\dualopt(\opt)>0}
=\cset{\dualopt\in\dualopts}{\rejectset\subseteq\rejectset[\dualopt]}
\subseteq\ldualopts(\rejectset).
\end{equation}
We'll also use the positive varieties:
\begin{align}
\posldualopts(\rejectset)
&\coloneqq\cset{\ldualopt\in\posldualopts}
{(\forall\optset\in\rejectset)(\exists\opt\in\optset)\ldualopt(\opt)>0}
=\cset{\ldualopt\in\posldualopts}{\rejectset\subseteq\rejectset[\ldualopt]}
\label{eq:poscslfs:from:rejectset}\\
\posdualopts(\rejectset)
&\coloneqq\cset{\dualopt\in\posdualopts}{\rejectset\subseteq\rejectset[\dualopt]}
\subseteq\posldualopts(\rejectset).
\label{eq:posclfs:from:rejectset}
\end{align}

If we pick any~\(\opt[o]\in\topint(\posopts)\) and associate with it the normalisation~\(\nml\), then since we know from the discussion in Section~\ref{sec:normalisation} that \(\desirset[\ldualopt]=\desirset[\nml\ldualopt]\) and \(\desirset[\dualopt]=\desirset[\nml\dualopt]\), we also infer from Equation~\eqref{eq:cslf:to:rejectset} that
\begin{equation}\label{eq:normalised:cslf:to:rejectset}
\rejectset[\ldualopt]=\rejectset[\nml\ldualopt]
\text{ and }
\rejectset[\dualopt]=\rejectset[\nml\dualopt]
\text{ for all~\(\ldualopt\in\posldualopts\) and~\(\dualopt\in\posdualopts\).}
\end{equation}

We'll call a set of desirable option sets~\(\rejectset\in\rejectsets\) \emph{{\archim}} if it is coherent and if the following separation property is satisfied:\footnote{Observe here too, by the way, that if \(\rejectset\) is coherent, then \(\ldualopt\in\ldualopts(\rejectset)\) implies that \(\rejectset\subseteq\rejectset[\ldualopt]\) and therefore also \(\set{\opt}\in\rejectset[\ldualopt]\), or equivalently, \(\ldualopt(\opt)>0\), for all \(\opt\in\posopts\).
This implies that also \(\ldualopt\in\posldualopts\), whence \(\ldualopts(\rejectset)=\posldualopts(\rejectset)\).}
\begin{enumerate}[label=\({\mathrm{K}}_{\mathrm{A}}\).,ref=\({\mathrm{K}}_{\mathrm{A}}\),leftmargin=*]
\item\label{ax:rejectset:essentially:archimedean:noconstants} \((\forall\optset\notin\rejectset)(\exists\ldualopt\in\ldualopts(\rejectset))(\forall\opt\in\optset)\ldualopt(\opt)\leq0\),
\end{enumerate}
and we denote by~\(\archrejectsets\) the set of all {\archim} sets of desirable option sets.

If we look at Proposition~\ref{prop:isomorphisms:arch:noconstants}, we see that the {\essarchim} sets of desirable options are all the \(\desirset[\ldualopt]\) for \(\ldualopt\in\posldualopts{}\), and Equation~\eqref{eq:cslf:to:rejectset} then tells us that the corresponding binary sets of desirable option sets~\(\rejectset[\ldualopt]\) are all {\archim}:
\begin{equation}\label{eq:archimedean:is:essentially:archimedean:rejectsets:noconstants}
(\forall\ldualopt\in\posldualopts)\rejectset[\ldualopt]\in\archrejectsets.
\end{equation}
But we can go further than this, and establish a strong connection between {\archim} sets of desirable options on the one hand, and binary {\archim} sets of desirable option sets on the other.
This suggests that there is at least some merit in our defining {\archimty} for sets of desirable option sets in the way that we did.

\begin{proposition}[Binary embedding]\label{prop:archimedeanbinaryiffD}
For any~\(\desirset\in\desirsets\), \(\rejectset[\desirset]\) is {\archim} if and only if \(\desirset\) is.
\end{proposition}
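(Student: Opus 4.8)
The plan is to prove the two implications separately, after reducing both notions of {\archimty} to a statement about witnessing functionals. Since Proposition~\ref{prop:coherence:for:binary} already guarantees that $\rejectset[\desirset]$ is coherent exactly when $\desirset$ is, everything comes down to matching the separation property~\ref{ax:desirset:essentially:archimedean:cslf} for $\desirset$ with the separation property~\ref{ax:rejectset:essentially:archimedean:noconstants} for $\rejectset[\desirset]$. The key preliminary observation is that the two sets of candidate witnesses coincide, i.e.\ $\ldualopts(\desirset)=\ldualopts(\rejectset[\desirset])$. This follows from the chain
\[
\ldualopt\in\ldualopts(\desirset)
\ifandonlyif\desirset\subseteq\desirset[\ldualopt]
\ifandonlyif\rejectset[\desirset]\subseteq\rejectset[{\desirset[\ldualopt]}]=\rejectset[\ldualopt]
\ifandonlyif\ldualopt\in\ldualopts(\rejectset[\desirset]),
\]
where the first and last equivalences are the definitions~\eqref{eq:cslfs:from:desirset}, \eqref{eq:cslfs:from:rejectset} and~\eqref{eq:cslf:to:rejectset}, and the middle one uses that $\rejectset[\bolleke]$ is order preserving as well as order reflecting---the latter being immediate by testing inclusions on singletons. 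I will also use repeatedly that $\optset\notin\rejectset[\desirset]$ says exactly $\optset\cap\desirset=\emptyset$, that is, $\optset\subseteq\co{\desirset}$.

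The implication ``$\rejectset[\desirset]$ {\archim} $\then$ $\desirset$ {\archim}'' is the easy one. Assuming $\rejectset[\desirset]$ is {\archim}, I would take any $\opt\notin\desirset$; then $\set{\opt}\cap\desirset=\emptyset$, so $\set{\opt}\notin\rejectset[\desirset]$, and property~\ref{ax:rejectset:essentially:archimedean:noconstants} applied to this singleton produces some $\ldualopt\in\ldualopts(\rejectset[\desirset])=\ldualopts(\desirset)$ with $\ldualopt(\opt)\leq0$. This is precisely~\ref{ax:desirset:essentially:archimedean:cslf}, so $\desirset$ is {\archim}.

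The converse implication ``$\desirset$ {\archim} $\then$ $\rejectset[\desirset]$ {\archim}'' carries the actual content. Assuming $\desirset$ is {\archim}, I would take any $\optset=\set{\opt[1],\dots,\opt[n]}\notin\rejectset[\desirset]$, so each $\opt[k]\in\co{\desirset}$. Applying the linear separation form~\ref{ax:desirset:essentially:archimedean:clf} separately to each $\opt[k]$ yields linear functionals $\dualopt[k]\in\dualopts(\desirset)$ with $\dualopt[k](\opt[k])\leq0$. The main obstacle is that~\ref{ax:rejectset:essentially:archimedean:noconstants} demands a \emph{single} functional that is nonpositive on \emph{all} of $\optset$ simultaneously, whereas these $\dualopt[k]$ a priori differ. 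The remedy is to form the pointwise minimum $\ldualopt\coloneqq\min_{1\leq k\leq n}\dualopt[k]$: the finite family $\set{\dualopt[1],\dots,\dualopt[n]}$ is norm-bounded, so Corollary~\ref{cor:lower:envelope:for:cslfs} shows that its lower envelope $\ldualopt$ is superlinear and bounded, i.e.\ $\ldualopt\in\ldualopts$. Moreover $\ldualopt\in\ldualopts(\desirset)$, since a finite minimum of values that are all strictly positive on $\desirset$ stays strictly positive there, and $\ldualopt(\opt[j])\leq\dualopt[j](\opt[j])\leq0$ for every $j$. Hence $\ldualopt\in\ldualopts(\desirset)=\ldualopts(\rejectset[\desirset])$ witnesses~\ref{ax:rejectset:essentially:archimedean:noconstants} for $\optset$. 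The only remaining case, $\optset=\emptyset$, is handled by observing that an {\archim} $\desirset$ is {\archim} consistent, so $\ldualopts(\desirset)\neq\emptyset$ by Theorem~\ref{theo:archimedean:representation:desirsets:noconstants}, and any of its members witnesses the (vacuous) condition. Combining the two implications gives the stated equivalence.
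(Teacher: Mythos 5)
Your proposal is correct and takes essentially the same route as the paper's own proof: both reduce the equivalence to the identity \(\ldualopts(\desirset)=\ldualopts(\rejectset[\desirset])\), settle the easy direction by applying the separation property~\ref{ax:rejectset:essentially:archimedean:noconstants} to singletons, and settle the hard direction by separating each element of \(\optset\) with a linear functional via~\ref{ax:desirset:essentially:archimedean:clf} and taking the pointwise minimum. If anything, you are slightly more careful than the paper, which leaves implicit both the superlinearity and boundedness of that finite minimum (your appeal to Corollary~\ref{cor:lower:envelope:for:cslfs}) and the degenerate case \(\optset=\emptyset\).
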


\begin{proof}
Before we begin with the actual argument, we observe that
\begin{equation*}
\ldualopts(\rejectset[\desirset])
=\cset{\ldualopt\in\ldualopts}{(\forall\optset\in\rejectset[\desirset])\optset\cap\desirset[\ldualopt]\neq\emptyset}
=\cset{\ldualopt\in\ldualopts}{\desirset\subseteq\desirset[\ldualopt]}
=\ldualopts(\desirset).
\end{equation*}
For the proof, we first assume that \(\desirset\) is {\archim}.
Then \(\desirset\) is in particular coherent, and therefore so is \(\rejectset[\desirset]\), by Proposition~\ref{prop:coherence:for:binary}.
So it remains to prove that \(\rejectset[\desirset]\) satisfies the separation requirement~\ref{ax:rejectset:essentially:archimedean:noconstants}.
Consider any~\(\optset\notin\rejectset[\desirset]\), meaning that \(\optset\cap\desirset=\emptyset\).
Since \(\desirset\) is {\archim}, we infer from the separation requirement~\ref{ax:desirset:essentially:archimedean:clf} that for all~\(\opt\in\optset\), there is some~\(\dualopt[\opt]\in\dualopts(\desirset)\) such that \(\dualopt[\opt](\opt)\leq0\).
If we let \(\ldualopt\coloneqq\min_{\opt\in\optset}\dualopt[\opt]\), then
\begin{equation*}
\ldualopt(\altopt)
=\min_{\opt\in\optset}\dualopt[\opt](\altopt)
\leq\dualopt[\altopt](\altopt)\leq0
\text{ for all~\(\altopt\in\optset\)},
\end{equation*}
so we are done if we can show that \(\ldualopt\in\ldualopts(\rejectset[\desirset])\), or equivalently, that \(\ldualopt\in\ldualopts(\desirset)\).
Consider to this end any~\(\altopt\in\desirset\), then because \(\dualopt[\opt]\in\dualopts(\desirset)\) we see that \(\dualopt[\opt](\altopt)>0\), for all~\(\opt\in\optset\), and therefore also \(\ldualopt(\altopt)>0\).
Hence, indeed, \(\ldualopt\in\ldualopts(\desirset)\).

Next, we assume that \(\rejectset[\desirset]\) is {\archim}.
Then \(\rejectset[\desirset]\) is in particular coherent, and therefore so is \(\desirset\), by Proposition~\ref{prop:coherence:for:binary}.
It therefore remains to prove that \(\desirset\) satisfies the separation requirement~\ref{ax:desirset:essentially:archimedean:cslf}.
So consider any~\(\opt\notin\desirset\).
That \(\rejectset[\desirset]\) is assumed to satisfy the separation requirement~\ref{ax:rejectset:essentially:archimedean:noconstants} implies, with~\(\optset\coloneqq\set{\opt}\), that there is some~\(\ldualopt\in\ldualopts(\rejectset[\desirset])=\ldualopts(\desirset)\) such that \(\ldualopt(\opt)\leq0\).
\end{proof}

It is also easy to see that \(\archrejectsets\) is an intersection structure.
Indeed, consider any non-empty family of {\archim} sets of desirable option sets~\(\rejectset[i]\), \(i\in I\) and let~\(\rejectset\coloneqq\bigcap_{i\in I}\rejectset[i]\), then we already know that \(\rejectset\) is coherent, so we only need to show that the separation condition~\ref{ax:rejectset:essentially:archimedean:noconstants} is satisfied.
So consider any~\(\optset\notin\rejectset\), meaning that there is some~\(i\in I\) such that \(\optset\notin\rejectset[i]\).
Hence there is some~\(\ldualopt\in\ldualopts(\rejectset[i])\) such that \(\ldualopt(\opt)\leq0\) for all~\(\opt\in\optset\).
Since it follows from \(\rejectset\subseteq\rejectset[i]\) and Equation~\eqref{eq:cslfs:from:rejectset} that also \(\ldualopt\in\ldualopts(\rejectset)\), we see that, indeed, \(\rejectset\) is {\archim}.

That \(\archrejectsets\) is an intersection structure also implies that we can introduce an \emph{{\archim} closure} operator~\(\archnatexrejectset\colon\rejectsets\to\archrejectsets\cup\set{\optsets}\) by letting
\begin{equation*}
\archnatexrejectset(\assessment)
\coloneqq\bigcap\cset{\rejectset\in\archrejectsets}{\assessment\subseteq\rejectset}
\text{ for all~\(\assessment\subseteq\optsets\)}
\end{equation*}
be the smallest---if any---{\archim} set of desirable option sets that includes~\(\assessment\).
We call an assessment~\(\assessment\subseteq\optsets\) \emph{{\archim} consistent} if~\(\archnatexrejectset(\assessment)\neq\optsets\), or equivalently, if~\(\assessment\) is included in some {\archim} set of desirable option sets.

\begin{theorem}[Closure]\label{theo:archimedean:representation:rejectsets:noconstants}
For any set of desirable option sets~\(\rejectset\in\rejectsets\), we have that~\(\archnatexrejectset(\rejectset)=\bigcap\cset{\rejectset[\ldualopt]}{\ldualopt\in\posldualopts(\rejectset)}\).
Hence, \(\rejectset\) is {\archim} consistent if and only if~\(\posldualopts(\rejectset)\neq\emptyset\).
And an {\archim} consistent set of desirable option sets~\(\rejectset\) is {\archim} if and only if~\(\rejectset=\bigcap\cset{\rejectset[\ldualopt]}{\ldualopt\in\posldualopts(\rejectset)}\).
\end{theorem}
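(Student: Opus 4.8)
The plan is to prove the first equality $\archnatexrejectset(\rejectset)=\bigcap\cset{\rejectset[\ldualopt]}{\ldualopt\in\posldualopts(\rejectset)}$, since the other two statements follow from it immediately: Archimedean consistency of $\rejectset$ means $\archnatexrejectset(\rejectset)\neq\optsets$, which by the first equality (an empty intersection being $\optsets$) is equivalent to $\posldualopts(\rejectset)\neq\emptyset$; and an Archimedean consistent $\rejectset$ is Archimedean exactly when it coincides with its own Archimedean closure $\archnatexrejectset(\rejectset)$. The whole argument will run in close parallel to the binary case treated in Theorem~\ref{theo:archimedean:representation:desirsets:noconstants}, with the binary sets $\rejectset[\ldualopt]$ playing the role of the $\desirset[\ldualopt]$ and the non-binary separation axiom~\ref{ax:rejectset:essentially:archimedean:noconstants} replacing~\ref{ax:desirset:essentially:archimedean:cslf}.

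First I would dispose of the inconsistent case. If $\rejectset$ is not Archimedean consistent, then $\rejectset\not\subseteq\rejectset'$ for every $\rejectset'\in\archrejectsets$; in particular, since Equation~\eqref{eq:archimedean:is:essentially:archimedean:rejectsets:noconstants} tells us that $\rejectset[\ldualopt]\in\archrejectsets$ for all $\ldualopt\in\posldualopts$, we get $\rejectset\not\subseteq\rejectset[\ldualopt]$ for every such $\ldualopt$, so Equation~\eqref{eq:poscslfs:from:rejectset} forces $\posldualopts(\rejectset)=\emptyset$. Both sides of the claimed equality are then the empty intersection $\optsets$, and there is nothing left to prove. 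So I would assume henceforth that $\rejectset$ is Archimedean consistent and set $\rejectset[\ast]\coloneqq\archnatexrejectset(\rejectset)$, which is then itself Archimedean.

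The inclusion $\rejectset[\ast]\subseteq\bigcap\cset{\rejectset[\ldualopt]}{\ldualopt\in\posldualopts(\rejectset)}$ is the easy half: every $\ldualopt\in\posldualopts(\rejectset)$ satisfies $\rejectset\subseteq\rejectset[\ldualopt]$ by Equation~\eqref{eq:poscslfs:from:rejectset}, while $\rejectset[\ldualopt]$ is Archimedean by Equation~\eqref{eq:archimedean:is:essentially:archimedean:rejectsets:noconstants}; since $\rejectset[\ast]$ is the smallest Archimedean set of desirable option sets containing $\rejectset$, it must lie inside each such $\rejectset[\ldualopt]$. The reverse inclusion is the step I expect to be the main obstacle, since it is where the separation axiom and the several identifications between sets of functionals all have to be threaded together, and I would argue it contrapositively. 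Take any $\optset\notin\rejectset[\ast]$. Because $\rejectset[\ast]$ is Archimedean, the separation property~\ref{ax:rejectset:essentially:archimedean:noconstants} supplies some $\ldualopt\in\ldualopts(\rejectset[\ast])$ with $\ldualopt(\opt)\leq0$ for all $\opt\in\optset$; by Equation~\eqref{eq:cslf:to:rejectset} this says precisely that $\optset\cap\desirset[\ldualopt]=\emptyset$, i.e.\ $\optset\notin\rejectset[\ldualopt]$. It then remains to check that this $\ldualopt$ genuinely belongs to $\posldualopts(\rejectset)$: since $\rejectset[\ast]$ is coherent, the observation recorded in the footnote to axiom~\ref{ax:rejectset:essentially:archimedean:noconstants} gives $\ldualopts(\rejectset[\ast])=\posldualopts(\rejectset[\ast])$, so $\ldualopt\in\posldualopts(\rejectset[\ast])$; and because $\rejectset\subseteq\rejectset[\ast]$, Equation~\eqref{eq:poscslfs:from:rejectset} yields $\posldualopts(\rejectset[\ast])\subseteq\posldualopts(\rejectset)$, whence $\ldualopt\in\posldualopts(\rejectset)$. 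We have thus exhibited an $\ldualopt\in\posldualopts(\rejectset)$ with $\optset\notin\rejectset[\ldualopt]$, so $\optset\notin\bigcap\cset{\rejectset[\ldualopt]}{\ldualopt\in\posldualopts(\rejectset)}$. This establishes the reverse inclusion, hence the equality, and with it the theorem.
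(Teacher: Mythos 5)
Your proposal is correct and follows essentially the same route as the paper's own proof: the same dismissal of the inconsistent case via Equation~\eqref{eq:archimedean:is:essentially:archimedean:rejectsets:noconstants}, the same easy inclusion from minimality of the closure, and the same contrapositive use of the separation axiom~\ref{ax:rejectset:essentially:archimedean:noconstants} combined with the coherence-based identification \(\ldualopts(\rejectset[\ast])=\posldualopts(\rejectset[\ast])\) and the antitonicity \(\posldualopts(\rejectset[\ast])\subseteq\posldualopts(\rejectset)\). The only cosmetic difference is that you invoke the footnote's observation where the paper spells that argument out inline.
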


\begin{proof}
We only give a proof for the first statement, because the later statements are trivial consequences of the first.
First of all, it follows from Equation~\eqref{eq:archimedean:is:essentially:archimedean:rejectsets:noconstants} that if~\(\rejectset\) is not {\archim} consistent, the statement is trivially true, as both sides are then equal to~\(\optsets\), as empty intersections.
To see this, assume that \(\rejectset\) is not Archimedean consistent, so \(\rejectset\not\subseteq\rejectset'\) for all \(\rejectset'\) in \(\archcohrejectsets\), and hence, by Equation~\eqref{eq:archimedean:is:essentially:archimedean:rejectsets:noconstants}, in particular \(\rejectset\not\subseteq\rejectset[\ldualopt]\) for all \(\ldualopt\) in \(\posldualopts\).
This implies, by Equation~\eqref{eq:poscslfs:from:rejectset}, that indeed \(\posldualopts(\rejectset)=\emptyset\).

So we may assume without loss of generality that \(\rejectset\) is  {\archim} consistent.
But then Equation~\eqref{eq:archimedean:is:essentially:archimedean:rejectsets:noconstants} implies that
\begin{equation*}
\rejectset[\ast]
\coloneqq\archnatexrejectset(\rejectset)
\subseteq\bigcap\cset{\rejectset[\ldualopt]}{\ldualopt\in\posldualopts(\rejectset)},
\end{equation*}
because if \(\ldualopt\in\posldualopts(\rejectset)\) then \(\rejectset[\ldualopt]\) is {\archim} and \(\rejectset\subseteq\rejectset[\ldualopt]\).
So our proof will be complete if we can show that \(\bigcap\cset{\rejectset[\ldualopt]}{\ldualopt\in\posldualopts(\rejectset)}\subseteq\rejectset[\ast]\).
Assume to this end that \(\optset\notin\rejectset[\ast]\), then since \(\rejectset[\ast]\) is {\archim}, the separation requirement~\ref{ax:rejectset:essentially:archimedean:noconstants} tells us that there is some~\(\ldualopt\in\ldualopts(\rejectset[\ast])\) such that \(\optset\notin\rejectset[\ldualopt]\).
Since \(\rejectset[\ast]\) is in particular also coherent, \(\ldualopt\in\ldualopts(\rejectset[\ast])\) implies that \(\rejectset[\ast]\subseteq\rejectset[\ldualopt]\) and therefore also \(\set{\opt}\in\rejectset[\ldualopt]\), or equivalently, \(\ldualopt(\opt)>0\), for all \(\opt\in\posopts\).
This implies that also \(\ldualopt\in\posldualopts(\rejectset[\ast])\).
But since \(\rejectset\subseteq\rejectset[\ast]\), we infer from Equation~\eqref{eq:poscslfs:from:rejectset} that \(\posldualopts(\rejectset[\ast])\subseteq\posldualopts(\rejectset)\).
Hence, indeed, \(\optset\notin\bigcap\cset{\rejectset[\ldualopt]}{\ldualopt\in\posldualopts(\rejectset)}\).
\end{proof}
\noindent
And here too, the following important representation theorem confirms that the positive superlinear bounded real functionals---and therefore the {\essarchim} sets of desirable options---can be used to represent all {\archim} sets of desirable option sets.

\begin{corollary}[Representation]\label{cor:archimedean:representation:rejectsets:noconstants:twosided}
A set of desirable option sets~\(\rejectset\in\rejectsets\) is {\archim} if and only if there is some non-empty set~\(\setofldualopts\subseteq\posldualopts\) of positive superlinear bounded real functionals such that \(\rejectset=\bigcap\cset{\rejectset[\ldualopt]}{\ldualopt\in\setofldualopts}\).
In that case, the largest such set~\(\setofldualopts\) is\/~\(\posldualopts\group{\rejectset}\).
\end{corollary}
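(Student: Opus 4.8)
The plan is to obtain this corollary as an essentially immediate consequence of the Closure Theorem~\ref{theo:archimedean:representation:rejectsets:noconstants} together with the two facts already in hand: that every~\(\rejectset[\ldualopt]\) with \(\ldualopt\in\posldualopts\) is {\archim} [Equation~\eqref{eq:archimedean:is:essentially:archimedean:rejectsets:noconstants}], and that \(\archrejectsets\) is an intersection structure [shown in the discussion preceding the theorem]. The argument runs in close parallel with the proof of the binary analogue, Corollary~\ref{cor:archimedean:representation:desirsets:noconstants:twosided}.

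For the ``only if'' direction I would assume that \(\rejectset\) is {\archim}. Then it is in particular {\archim} consistent and coincides with its own {\archim} closure, so \(\archnatexrejectset(\rejectset)=\rejectset\). Feeding this into Theorem~\ref{theo:archimedean:representation:rejectsets:noconstants} gives at once \(\rejectset=\bigcap\cset{\rejectset[\ldualopt]}{\ldualopt\in\posldualopts(\rejectset)}\), and the same theorem guarantees that \(\posldualopts(\rejectset)\neq\emptyset\). Hence the choice \(\setofldualopts\coloneqq\posldualopts(\rejectset)\) is a non-empty witness set of the required form. For the ``if'' direction I would start from any non-empty \(\setofldualopts\subseteq\posldualopts\) with \(\rejectset=\bigcap\cset{\rejectset[\ldualopt]}{\ldualopt\in\setofldualopts}\); by Equation~\eqref{eq:archimedean:is:essentially:archimedean:rejectsets:noconstants} each \(\rejectset[\ldualopt]\) in the intersection is {\archim}, and since \(\archrejectsets\) is closed under non-empty intersections, \(\rejectset\) is {\archim} as well.

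For the final ``largest set'' claim I would combine the two directions. The ``only if'' argument already exhibits \(\posldualopts(\rejectset)\) as a valid witness set, so it remains only to check that it dominates every other one. Here I would invoke Equation~\eqref{eq:poscslfs:from:rejectset}: if \(\setofldualopts\) represents \(\rejectset\), then for each \(\ldualopt\in\setofldualopts\) we have \(\rejectset\subseteq\rejectset[\ldualopt]\), which by the very definition of \(\posldualopts(\rejectset)\) means \(\ldualopt\in\posldualopts(\rejectset)\); hence \(\setofldualopts\subseteq\posldualopts(\rejectset)\), as wanted.

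I do not anticipate any genuine obstacle: all the analytic substance---the Hahn--Banach separation and the construction of the normalised functional \(\ldualopto\)---has already been packaged into Theorem~\ref{theo:archimedean:representation:rejectsets:noconstants} and into Equation~\eqref{eq:archimedean:is:essentially:archimedean:rejectsets:noconstants}. The only point deserving a little care is the degenerate, {\archim}-inconsistent case, but this is harmless: on the ``if'' side the hypothesis of a non-empty representing set of \emph{positive} functionals forces {\archim} consistency, and on the ``only if'' side the Closure Theorem supplies \(\posldualopts(\rejectset)\neq\emptyset\) automatically.
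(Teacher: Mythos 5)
Your proposal is correct and follows essentially the same route as the paper's own proof: necessity via the Closure Theorem~\ref{theo:archimedean:representation:rejectsets:noconstants} applied to \(\archnatexrejectset(\rejectset)=\rejectset\), sufficiency via Equation~\eqref{eq:archimedean:is:essentially:archimedean:rejectsets:noconstants} and the intersection-structure property of~\(\archrejectsets\), and the maximality claim from the theorem together with Equation~\eqref{eq:poscslfs:from:rejectset}. Your explicit unpacking of the ``largest set'' step (any representing \(\setofldualopts\) satisfies \(\rejectset\subseteq\rejectset[\ldualopt]\) for each of its members, hence \(\setofldualopts\subseteq\posldualopts(\rejectset)\)) is exactly what the paper leaves implicit in its ``follows at once'' remark.
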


\begin{proof}
For necessity, assume that \(\rejectset\) is {\archim}, then \(\archnatexrejectset(\rejectset)=\rejectset\).
Now use Theorem~\ref{theo:archimedean:representation:rejectsets:noconstants}.
For sufficiency, since all~\(\rejectset[\ldualopt]\), \(\ldualopt\in\setofldualopts\) are {\archim} [see Equation~\eqref{eq:archimedean:is:essentially:archimedean:rejectsets:noconstants}], so is their intersection~\(\rejectset\).
The final statement now follows at once from Theorem~\ref{theo:archimedean:representation:rejectsets:noconstants}.
\end{proof}
\noindent
The discussion in Section~\ref{sec:normalisation} shows that the sets of functionals in Theorem~\ref{theo:archimedean:representation:rejectsets:noconstants} and Corollary~\ref{cor:archimedean:representation:rejectsets:noconstants:twosided} can also be replaced by~\(\nml(\posldualopts(\rejectset))\) and~\(\nml(\setofldualopts)\) respectively, where \(\opt[o]\) is any option in~\(\topint(\posopts)\).
The set~\(\nml(\posldualopts(\rejectset))\) will now be the largest set of positive bounded superlinear functionals that achieves representation and all of whose members are constant additive with respect to the option~\(\opt[o]\).

Interestingly, if we want to find the {\archim} closure of a set of desirable option sets, we can first look for the dominating coherent binary models, and then---if possible---turn these into {\archim} binary models by taking their {\archim} closure.

\begin{theorem}
For any {\archim}~\(\rejectset\in\archrejectsets\), \(\rejectset=\bigcap\cset{\rejectset[{\archnatexdesirset(\desirset)}]}{\desirset\in\cohdesirsets(\rejectset)}\).
\end{theorem}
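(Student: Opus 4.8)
The plan is to establish the two set inclusions separately, leaning throughout on the representation machinery already in place. Before starting I would record that, being {\archim}, $\rejectset$ is in particular coherent and hence consistent, so Theorem~\ref{theo:essentially:archimedean:representation:rejectsets:noconstants} gives $\cohdesirsets(\rejectset)\neq\emptyset$; the intersection on the right-hand side therefore ranges over a non-empty family and the identity is not vacuous. Recall also that $\cohdesirsets(\rejectset)=\cset{\desirset\in\cohdesirsets}{\rejectset\subseteq\rejectset[\desirset]}$, as introduced in Theorem~\ref{theo:coherent:representation:twosided}.

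For the inclusion $\rejectset\subseteq\bigcap\cset{\rejectset[{\archnatexdesirset(\desirset)}]}{\desirset\in\cohdesirsets(\rejectset)}$ I would fix an arbitrary $\desirset\in\cohdesirsets(\rejectset)$ and chain two monotonicity facts. By definition of $\cohdesirsets(\rejectset)$ we have $\rejectset\subseteq\rejectset[\desirset]$. Moreover the {\archim} closure operator is extensive, so $\desirset\subseteq\archnatexdesirset(\desirset)$, and since the map $\desirset\mapsto\rejectset[\desirset]$ is order preserving this gives $\rejectset[\desirset]\subseteq\rejectset[{\archnatexdesirset(\desirset)}]$. Combining, $\rejectset\subseteq\rejectset[{\archnatexdesirset(\desirset)}]$; as $\desirset$ was arbitrary, $\rejectset$ is contained in the intersection.

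The converse inclusion is where the {\archimty} of $\rejectset$ genuinely bites, and I expect it to carry the weight of the proof. I would argue contrapositively: given $\optset\notin\rejectset$, I want a single $\desirset\in\cohdesirsets(\rejectset)$ with $\optset\notin\rejectset[{\archnatexdesirset(\desirset)}]$. The separation property~\ref{ax:rejectset:essentially:archimedean:noconstants} supplies some $\ldualopt\in\ldualopts(\rejectset)$ with $\ldualopt(\opt)\leq0$ for all $\opt\in\optset$; by the footnote observation $\ldualopts(\rejectset)=\posldualopts(\rejectset)$, so in fact $\ldualopt\in\posldualopts$ and $\rejectset\subseteq\rejectset[\ldualopt]$. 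The natural candidate is $\desirset\coloneqq\desirset[\ldualopt]$. I would verify it lies in $\cohdesirsets(\rejectset)$: by Proposition~\ref{prop:isomorphisms:arch:noconstants} it is {\essarchim}, hence coherent, and $\rejectset\subseteq\rejectset[\ldualopt]=\rejectset[{\desirset[\ldualopt]}]$ by Equation~\eqref{eq:cslf:to:rejectset}. The decisive simplification is that $\desirset[\ldualopt]$ is already {\archim} by Equation~\eqref{eq:archimedean:is:essentially:archimedean:desirsets:noconstants}, so its {\archim} closure is itself, $\archnatexdesirset(\desirset[\ldualopt])=\desirset[\ldualopt]$, whence $\rejectset[{\archnatexdesirset(\desirset[\ldualopt])}]=\rejectset[\ldualopt]$. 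Finally, $\ldualopt(\opt)\leq0$ for all $\opt\in\optset$ says exactly that $\optset\cap\desirset[\ldualopt]=\emptyset$, i.e.\ $\optset\notin\rejectset[\ldualopt]$; so $\optset$ fails this one term, hence the intersection.

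I would close with a cautionary remark that disposes of a possible worry: the $\desirset\in\cohdesirsets(\rejectset)$ that are \emph{not} of the special form $\desirset[\ldualopt]$ need not be {\archim} consistent, and for such a $\desirset$ one has $\archnatexdesirset(\desirset)=\opts$, so the corresponding term $\rejectset[{\archnatexdesirset(\desirset)}]$ is as large as possible; but an oversized term only enlarges the right-hand side and interferes with neither inclusion. The only real obstacle is thus recognising that the separating functional from~\ref{ax:rejectset:essentially:archimedean:noconstants} can be promoted to a coherent---indeed {\archim}---\emph{binary} witness through $\desirset[\ldualopt]$; once that identification is made, both inclusions are short.
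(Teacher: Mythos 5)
Your proposal is correct and follows essentially the same route as the paper's own proof: the forward inclusion via the extensivity of \(\archnatexdesirset\) and the order-preservation of \(\desirset\mapsto\rejectset[\desirset]\) (the paper phrases this through Theorem~\ref{theo:coherent:representation:twosided}, but the content is the same), and the converse inclusion by using the separation property~\ref{ax:rejectset:essentially:archimedean:noconstants} to produce a functional \(\ldualopt\in\ldualopts(\rejectset)\) whose induced set \(\desirset[\ldualopt]\) lies in \(\cohdesirsets(\rejectset)\), is already {\archim} (hence a fixpoint of \(\archnatexdesirset\)), and excludes \(\optset\). Your closing remark about the terms with \(\archnatexdesirset(\desirset)=\opts\) is a harmless addition the paper leaves implicit.
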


\begin{proof}
Since \(\rejectset\) is {\archim}, it's in particular also coherent, so Theorem~\ref{theo:coherent:representation:twosided} already makes sure that
\begin{equation*}
\rejectset
=\bigcap\cset{\rejectset[\desirset]}{\desirset\in\cohdesirsets(\rejectset)}
\subseteq\bigcap\cset{\rejectset[{\archnatexdesirset(\desirset)}]}{\desirset\in\cohdesirsets(\rejectset)}.
\end{equation*}
For the converse inclusion, consider any~\(\optset\in\optsets\) and assume that \(\optset\notin\rejectset\), so we infer from the separation property~\ref{ax:rejectset:essentially:archimedean:noconstants} that there is some~\(\ldualopt\) in~\(\ldualopts(\rejectset)\) such that \(\optset\cap\desirset[\ldualopt]=\emptyset\).
Now, \(\ldualopt\in\ldualopts(\rejectset)\) implies that \(\desirset[\ldualopt]\in\cohdesirsets(\rejectset)\), by Equation~\eqref{eq:cslfs:from:rejectset}, and \(\optset\cap\desirset[\ldualopt]=\emptyset\) means that \(\optset\notin\rejectset[{\desirset[\ldualopt]}]\).
But \(\desirset[\ldualopt]\) is {\archim}---see Equation~\eqref{eq:archimedean:is:essentially:archimedean:desirsets:noconstants}---and therefore \(\archnatexdesirset(\desirset[\ldualopt])=\desirset[\ldualopt]\), whence also \(\optset\notin\rejectset[{\archnatexdesirset(\desirset[\ldualopt])}]\).
\end{proof}

To conclude this discussion of {\archimty} for general choice models, let us see what happens if we also impose mixingness: what can we say about {\archim} and mixing sets of desirable option sets?
We begin by showing that any representation must then needs involve positive \emph{linear} bounded real functionals, at least \emph{after normalisation}.

\begin{proposition}\label{prop:mixing:rejectsets:linear:dominance}
Consider any set of desirable option sets~\(\rejectset\in\rejectsets\), any~\(\ldualopt\in\posldualopts\) and any~\(\opt[o]\in\topint(\posopts)\).
If \(\rejectset\) is mixing, then \(\rejectset\subseteq\rejectset[\ldualopt]\) implies that\/ \(\nml\ldualopt\) is linear.
Therefore, \(\nml\group{\posldualopts(\rejectset)}=\nml\group{\posdualopts(\rejectset)}\).
\end{proposition}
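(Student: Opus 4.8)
The plan is to mirror, in the non-binary setting, the argument used for the implication \ref{it:mixing:and:essentially:archimedean:consistent:noconstants}$\then$\ref{it:mixing:and:essentially:archimedean:prevs:noconstants} in the proof of Proposition~\ref{prop:mixing:and:essentially:archimedean:noconstants}; the only genuinely new ingredient is how the mixingness of a set of desirable option \emph{sets} replaces the binary mixingness used there. First I would pass to the normalised functional: since \(\rejectset[\ldualopt]=\rejectset[\nml\ldualopt]\) by Equation~\eqref{eq:normalised:cslf:to:rejectset}, the hypothesis \(\rejectset\subseteq\rejectset[\ldualopt]\) is the same as \(\rejectset\subseteq\rejectset[\nml\ldualopt]\); and since \(\ldualopt\in\posldualopts\) gives \(\posopts\subseteq\desirset[\ldualopt]=\desirset[\nml\ldualopt]\) [Proposition~\ref{prop:normalisation:map:properties}\ref{it:normalisation:map:properties:preserves:open:cone}], the functional \(\nml\ldualopt\) again lies in \(\posldualopts\). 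The goal is then to show that the superlinear \(\nml\ldualopt\) is self-conjugate, which, by the definition of linearity and Proposition~\ref{prop:ludualopt:properties}\ref{it:ludualopt:properties:lsmalleru}, is exactly what it takes to be linear.

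The crucial step---and the main obstacle---is to extract from the mixingness of \(\rejectset\) the \emph{splitting property}: for all \(\opt,\altopt\in\opts\), \(\opt+\altopt\in\posopts\) implies \(\nml\ldualopt(\opt)>0\) or \(\nml\ldualopt(\altopt)>0\). I would obtain this by setting \(\optset\coloneqq\set{\opt,\altopt}\) and \(\altoptset\coloneqq\set{\opt,\altopt,\opt+\altopt}\), so that \(\optset\subseteq\altoptset\subseteq\posi(\optset)\). Because \(\opt+\altopt\in\posopts\), Axiom~\ref{ax:rejects:pos} gives \(\set{\opt+\altopt}\in\rejectset\), and then Axiom~\ref{ax:rejects:mono} gives \(\altoptset\in\rejectset\); the mixing Axiom~\ref{ax:rejects:removepositivecombinations} then yields \(\optset\in\rejectset\). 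Since \(\rejectset\subseteq\rejectset[\nml\ldualopt]\), this means \(\set{\opt,\altopt}\cap\desirset[\nml\ldualopt]\neq\emptyset\), i.e.\ \(\nml\ldualopt(\opt)>0\) or \(\nml\ldualopt(\altopt)>0\), as claimed. This is precisely the move that lets the \(\rejectset\)-level mixing axiom do the work that the binary axiom~\ref{ax:desirs:mixing} did before.

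With the splitting property in hand, the remainder follows the binary template almost verbatim. Fixing \(\altopttoo\in\opts\) and any real \(\epsilon>0\), I would apply the splitting property to \(\opt\coloneqq\altopttoo-\nml\ldualopt(\altopttoo)\opt[o]\) and \(\altopt\coloneqq\nml\ldualopt(\altopttoo)\opt[o]-\altopttoo+\epsilon\opt[o]\), whose sum is \(\epsilon\opt[o]\in\posopts\). Constant additivity [Proposition~\ref{prop:normalisation:map:properties}\ref{it:normalisation:map:properties:constant:additivity}] forces \(\nml\ldualopt(\opt)=0\), so the alternative \(\nml\ldualopt(\altopt)>0\) must hold; expanding \(\nml\ldualopt(\altopt)=\nml\ldualopt(-\altopttoo)+\nml\ldualopt(\altopttoo)+\epsilon\) and letting \(\epsilon\downarrow0\) gives \(\nml\ldualopt(\altopttoo)+\nml\ldualopt(-\altopttoo)\geq0\). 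Together with the reverse inequality supplied by Proposition~\ref{prop:ludualopt:properties}\ref{it:ludualopt:properties:lsmalleru}, this makes \(\nml\ldualopt\) self-conjugate, hence linear.

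Finally, for the \emph{Therefore} clause, the inclusion \(\nml\group{\posdualopts(\rejectset)}\subseteq\nml\group{\posldualopts(\rejectset)}\) is immediate from \(\posdualopts(\rejectset)\subseteq\posldualopts(\rejectset)\). For the converse, take any \(\ldualopt\in\posldualopts(\rejectset)\); the first part shows \(\nml\ldualopt\) is linear, and it is positive and bounded (as \(\nml\ldualopt\in\posldualopts\)), so \(\nml\ldualopt\in\posdualopts\); moreover \(\rejectset\subseteq\rejectset[\ldualopt]=\rejectset[\nml\ldualopt]\) gives \(\nml\ldualopt\in\posdualopts(\rejectset)\). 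Idempotency [Proposition~\ref{prop:normalisation:map:properties}\ref{it:normalisation:map:properties:idempotency}] then yields \(\nml\ldualopt=\nml(\nml\ldualopt)\in\nml\group{\posdualopts(\rejectset)}\), establishing \(\nml\group{\posldualopts(\rejectset)}\subseteq\nml\group{\posdualopts(\rejectset)}\) and hence the desired equality.
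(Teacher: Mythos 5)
Your proposal is correct and follows essentially the same route as the paper's proof: the same choice of \(\opt=\altopttoo-\nml\ldualopt(\altopttoo)\opt[o]\) and \(\altopt=(\nml\ldualopt(\altopttoo)+\epsilon)\opt[o]-\altopttoo\), the same use of Axioms~\ref{ax:rejects:pos}, \ref{ax:rejects:mono} and~\ref{ax:rejects:removepositivecombinations} to get \(\set{\opt,\altopt}\in\rejectset\), the same limiting argument \(\epsilon\downarrow0\) yielding self-conjugacy, and the same idempotency step for the final set equality. Isolating the ``splitting property'' as a named intermediate step, and phrasing the last inclusion element-wise rather than at the level of direct images, are purely presentational differences.
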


\begin{proof}
Assume that \(\rejectset\) is mixing, and consider any~\(\altopttoo\in\opts\) and any real~\(\epsilon>0\).
Let \(\opt\coloneqq\altopttoo-\nml\ldualopt(\altopttoo)\opt[o]\) and \(\altopt\coloneqq(\nml\ldualopt(\altopttoo)+\epsilon)\opt[o]-\altopttoo\), then \(\opt+\altopt=\epsilon\opt[o]\in\posopts\) and therefore \(\set{\opt,\altopt,\opt+\altopt}\in\rejectset\), because the mixing~\(\rejectset\) is in particular coherent.
The mixingness of~\(\rejectset\) then implies that also \(\set{\opt,\altopt}\in\rejectset\).
Since \(\nml\ldualopt(\opt)=\nml\ldualopt(\altopttoo)-\nml\ldualopt(\altopttoo)=0\) [use Proposition~\ref{prop:normalisation:map:properties}\ref{it:normalisation:map:properties:constant:additivity}], we infer from~\(\rejectset\subseteq\rejectset[\ldualopt]=\rejectset[\nml\ldualopt]\) that, necessarily, \(0<\nml\ldualopt(\altopt)=\nml\ldualopt(\altopttoo)+\epsilon+\nml\ldualopt(-\altopttoo)\) [again use Proposition~\ref{prop:normalisation:map:properties}\ref{it:normalisation:map:properties:constant:additivity} for the equality].
Since this holds for all \(\epsilon>0\), we get that~\(-\nml\ldualopt(-\altopttoo)\leq\nml\ldualopt(\altopttoo)\) for all~\(\altopttoo\in\opts\), so \(\nml\ldualopt\) is self-conjugate, and therefore indeed linear.

To prove the last statement, it suffices to show that \(\nml\group{\posldualopts(\rejectset)}\subseteq\nml\group{\posdualopts(\rejectset)}\).
Use Equations~\eqref{eq:cslfs:from:rejectset} and~\eqref{eq:normalised:cslf:to:rejectset} and the argumentation above to find that \(\nml\group{\posldualopts(\rejectset)}\subseteq\posdualopts(\rejectset)\).
Taking the direct image of both sides of this inclusion under the map~\(\nml\) yields that, indeed, \(\nml(\posldualopts(\rejectset))=\nml(\nml(\posldualopts(\rejectset)))\subseteq\nml(\posdualopts(\rejectset))\), where the equality follows from Proposition~\ref{prop:normalisation:map:properties}\ref{it:normalisation:map:properties:idempotency}.
\end{proof}
\noindent
And as a sort of converse, the following result identifies the mixing and {\archim} binary sets of desirable option sets.
It extends Proposition~\ref{prop:isomorphisms:arch:mixing:noconstants} from {\essarchimty} to {\archimty}.

\begin{proposition}[Binary embedding]\label{prop:mixing:and:archimedean:binary}
Consider any~\(\opt[o]\in\topint(\posopts)\) and any set of desirable options~\(\desirset\in\desirsets\), then \(\rejectset[\desirset]\) is mixing and {\archim} if and only if \(\desirset=\desirset[\dualopt]\) for some~\(\dualopt\in\posdualopts\), and we can always make sure that \(\dualopt(\opt[o])=1\).
\end{proposition}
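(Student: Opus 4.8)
The plan is to push everything down to the binary (set\nobreakdash-of\nobreakdash-desirable\nobreakdash-options) level and then assemble the representation from the results already available in Section~\ref{sec:essential:archimedeanity}. The key observation is that both defining properties of~\(\rejectset[\desirset]\) transfer across the binary embedding: Proposition~\ref{prop:mixingbinaryiffD} gives that \(\rejectset[\desirset]\) is mixing if and only if \(\desirset\in\mixdesirsets\), and Proposition~\ref{prop:archimedeanbinaryiffD} gives that \(\rejectset[\desirset]\) is {\archim} if and only if \(\desirset\) is. Hence \(\rejectset[\desirset]\) is mixing and {\archim} if and only if \(\desirset\) is itself mixing and {\archim}, and the whole statement is thereby converted into one purely about~\(\desirset\).

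For necessity, I would start from a mixing and {\archim}~\(\desirset\). Because \(\desirset\) is already known to be mixing, Proposition~\ref{prop:mixing:and:essentially:archimedean:noconstants} applies, and its equivalence of {\archimty} with {\essarchimty} upgrades~\(\desirset\) to an {\essarchim} \emph{and} mixing set of desirable options. Proposition~\ref{prop:isomorphisms:arch:mixing:noconstants} then yields a positive linear bounded real functional~\(\dualopt\in\posdualopts\) with \(\desirset=\desirset[\dualopt]\), as desired. For sufficiency, I would run the chain backwards: given \(\dualopt\in\posdualopts\) with \(\desirset=\desirset[\dualopt]\), Proposition~\ref{prop:isomorphisms:arch:mixing:noconstants} makes \(\desirset\) {\essarchim} and mixing; Equation~\eqref{eq:archimedean:is:essentially:archimedean:desirsets:noconstants} promotes {\essarchimty} to {\archimty}, so \(\desirset\) is mixing and {\archim}; and the two embedding propositions then carry these properties back up to~\(\rejectset[\desirset]\).

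Finally, for the normalisation clause, once any representing \(\dualopt\in\posdualopts\) is in hand I would replace it by~\(\nml\dualopt\). Since \(\opt[o]\in\topint(\posopts)\) forces \(\dualopt(\opt[o])>0\), Proposition~\ref{prop:normalisation:map:properties}\ref{it:normalisation:map:properties:linear:normalisation} gives \(\nml\dualopt=\dualopt/\dualopt(\opt[o])\), a strictly positive rescaling; hence \(\nml\dualopt\in\posdualopts\) [Proposition~\ref{prop:normalisation:map:properties}\ref{it:normalisation:map:properties:linear:internality}], \(\desirset=\desirset[\dualopt]=\desirset[\nml\dualopt]\) [Proposition~\ref{prop:normalisation:map:properties}\ref{it:normalisation:map:properties:preserves:open:cone}], and \(\nml\dualopt(\opt[o])=1\), so \(\nml\dualopt\) is a representing functional of the required normalised form.

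I expect no genuinely hard step: the proposition is essentially a bookkeeping assembly of earlier results, and its content lies in combining the two binary\nobreakdash-embedding propositions with the mixing representation. The one point that needs care is the order of inference in the necessity direction---mixingness of~\(\desirset\) must be established first (via Proposition~\ref{prop:mixingbinaryiffD}) \emph{before} Proposition~\ref{prop:mixing:and:essentially:archimedean:noconstants} can legitimately be invoked to collapse {\archimty} onto {\essarchimty}, since that proposition is stated only for sets of desirable options that are already mixing.
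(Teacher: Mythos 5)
Your proposal is correct and follows essentially the same route as the paper: both reduce the statement to one about \(\desirset\) via Propositions~\ref{prop:mixingbinaryiffD} and~\ref{prop:archimedeanbinaryiffD}, then invoke Propositions~\ref{prop:isomorphisms:arch:mixing:noconstants} and~\ref{prop:mixing:and:essentially:archimedean:noconstants} for the representation, and Proposition~\ref{prop:normalisation:map:properties} for the normalisation clause. The only cosmetic difference is that the paper exhibits the canonical functional \(\dualopto\), which is already normalised by Proposition~\ref{prop:normalisation:map:properties}\ref{it:normalisation:map:properties:fixed:point}\&\ref{it:normalisation:map:properties:linear:normalisation}, whereas you rescale an arbitrary representing functional via \(\nml\)---both are valid and rest on the same facts.
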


\begin{proof}
Combining the results of Propositions~\ref{prop:mixingbinaryiffD} and~\ref{prop:archimedeanbinaryiffD}, we find that \(\rejectset[\desirset]\) is mixing and {\archim} if and only if \(\desirset\) is.
Propositions~\ref{prop:isomorphisms:arch:mixing:noconstants} and~\ref{prop:mixing:and:essentially:archimedean:noconstants} then make sure that \(\desirset=\desirset[\dualopto]\), where \(\dualopto\in\dualopts(\desirset)\subseteq\posdualopts\).
Proposition~\ref{prop:normalisation:map:properties}\ref{it:normalisation:map:properties:fixed:point}\&\ref{it:normalisation:map:properties:linear:normalisation} now guarantee that \(\dualopto(\opt[o])=\nml\dualopto(\opt[o])=1\).
\end{proof}
\noindent
In combination with Corollary~\ref{cor:archimedean:representation:rejectsets:noconstants:twosided}, these propositions lead to another important representation result.

\begin{corollary}[Representation]\label{cor:archimedean:mixing:representation:rejectsets:noconstants:twosided}
Consider any~\(\opt[o]\in\topint(\posopts)\), then a set of desirable option sets~\(\rejectset\in\rejectsets\) is mixing and {\archim} if and only if there is some non-empty set~\(\setofdualopts\subseteq\posdualopts\) of positive linear bounded real functionals~\(\dualopt\), with moreover \(\dualopt(\opt[o])=1\), such that \(\rejectset=\bigcap\cset{\rejectset[\dualopt]}{\dualopt\in\setofdualopts}\).
In that case, the largest such set~\(\setofdualopts\) is~\(\nml\group{\posldualopts(\rejectset)}=\nml\group{\posdualopts(\rejectset)}\).
\end{corollary}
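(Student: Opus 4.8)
The plan is to assemble this result from pieces already in place: the Archimedean representation of Corollary~\ref{cor:archimedean:representation:rejectsets:noconstants:twosided}, the linearity of normalised representatives forced by mixingness in Proposition~\ref{prop:mixing:rejectsets:linear:dominance}, the normalisation identities of Proposition~\ref{prop:normalisation:map:properties}, and the fact that both \(\archrejectsets\) and \(\mixrejectsets\) are intersection structures. The bulk of the work is bookkeeping; the genuinely new ingredient is that mixingness lets us trade the superlinear functionals of the pure {\archim} representation for linear ones after normalisation.

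For necessity, I would start from a mixing and {\archim}~\(\rejectset\). Its {\archimty} together with Corollary~\ref{cor:archimedean:representation:rejectsets:noconstants:twosided} yields \(\rejectset=\bigcap\cset{\rejectset[\ldualopt]}{\ldualopt\in\posldualopts(\rejectset)}\) with \(\posldualopts(\rejectset)\neq\emptyset\). Rewriting each term through the normalisation, using Equation~\eqref{eq:normalised:cslf:to:rejectset}, gives \(\rejectset=\bigcap\cset{\rejectset[\nml\ldualopt]}{\ldualopt\in\posldualopts(\rejectset)}\). Every \(\ldualopt\in\posldualopts(\rejectset)\) satisfies \(\rejectset\subseteq\rejectset[\ldualopt]\) by definition, so the mixingness of~\(\rejectset\) and Proposition~\ref{prop:mixing:rejectsets:linear:dominance} make each \(\nml\ldualopt\) linear, so that \(\nml(\posldualopts(\rejectset))\subseteq\posdualopts\) and moreover \(\nml(\posldualopts(\rejectset))=\nml(\posdualopts(\rejectset))\). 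Each member additionally satisfies \(\nml\ldualopt(\opt[o])=1\) by Proposition~\ref{prop:normalisation:map:properties}\ref{it:normalisation:map:properties:normalisation}. Taking \(\setofdualopts\coloneqq\nml(\posldualopts(\rejectset))\) then supplies the required nonempty family of positive linear functionals with value~\(1\) at~\(\opt[o]\).

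For sufficiency, I would take any nonempty \(\setofdualopts\subseteq\posdualopts\) of positive linear functionals with \(\dualopt(\opt[o])=1\) and \(\rejectset=\bigcap\cset{\rejectset[\dualopt]}{\dualopt\in\setofdualopts}\). Each \(\rejectset[\dualopt]=\rejectset[{\desirset[\dualopt]}]\) is binary, and its representing \(\desirset[\dualopt]\) is {\essarchim} and mixing by Proposition~\ref{prop:isomorphisms:arch:mixing:noconstants}; hence \(\rejectset[\dualopt]\) is mixing by Proposition~\ref{prop:mixingbinaryiffD} and {\archim} by Equation~\eqref{eq:archimedean:is:essentially:archimedean:rejectsets:noconstants} (or Proposition~\ref{prop:archimedeanbinaryiffD}). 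Because \(\archrejectsets\) and \(\mixrejectsets\) are both intersection structures, the common intersection \(\rejectset\) is simultaneously {\archim} and mixing.

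The remaining task, and the one I expect to require the most care, is the maximality claim. Here I would note that a positive linear functional~\(\dualopt\) with \(\dualopt(\opt[o])=1\) can belong to a witnessing family only if \(\rejectset\subseteq\rejectset[\dualopt]\), that is, only if \(\dualopt\in\posdualopts(\rejectset)\); and since \(\dualopt(\opt[o])=1\), Proposition~\ref{prop:normalisation:map:properties}\ref{it:normalisation:map:properties:linear:normalisation} gives \(\nml\dualopt=\dualopt\), so \(\dualopt\in\nml(\posdualopts(\rejectset))\). Thus every admissible witnessing family is contained in \(\nml(\posdualopts(\rejectset))\), while the necessity argument already exhibits \(\nml(\posldualopts(\rejectset))=\nml(\posdualopts(\rejectset))\) as itself a witnessing family. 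Hence it is the largest one. The only delicate point is to keep the two descriptions aligned---as the normalised image of the largest superlinear representing set of Corollary~\ref{cor:archimedean:representation:rejectsets:noconstants:twosided}, and as the set of all normalised positive linear functionals representing~\(\rejectset\)---which is exactly what Proposition~\ref{prop:mixing:rejectsets:linear:dominance} and Proposition~\ref{prop:normalisation:map:properties}\ref{it:normalisation:map:properties:linear:normalisation} guarantee.
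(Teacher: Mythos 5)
Your proposal is correct and takes essentially the same route as the paper's own proof: necessity via Corollary~\ref{cor:archimedean:representation:rejectsets:noconstants:twosided} followed by normalisation and Proposition~\ref{prop:mixing:rejectsets:linear:dominance} to force linearity, and sufficiency via the binary mixing-and-{\archim} models (you unpack Proposition~\ref{prop:mixing:and:archimedean:binary} into its ingredients, but that is the same argument) together with the intersection-structure facts. Your handling of the maximality claim is, if anything, more explicit than the paper's, which merely remarks that it goes along the same lines as the necessity proof.
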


\begin{proof}
For necessity, assume that \(\rejectset\) is mixing and {\archim}.
Then we infer from Corollary~\ref{cor:archimedean:representation:rejectsets:noconstants:twosided} that there is some non-empty set~\(\setofldualopts\subseteq\posldualopts\) of positive superlinear bounded real functionals such that \(\rejectset=\bigcap\cset{\rejectset[\ldualopt]}{\ldualopt\in\setofldualopts}\) and that \(\setofldualopts\subseteq\ldualopts\group{\rejectset}\).
It is the import of the discussion in Section~\ref{sec:normalisation}---and Equation~\eqref{eq:normalised:cslf:to:rejectset}---that we can replace these sets by~\(\nml\setofldualopts\subseteq\nml(\posldualopts\group{\rejectset})=\nml(\posdualopts\group{\rejectset})\), where the last equality follows from Proposition~\ref{prop:mixing:rejectsets:linear:dominance}.
If we let \(\setofdualopts\coloneqq\nml(\setofldualopts)\), then we are done, because Proposition~\ref{prop:normalisation:map:properties}\ref{it:normalisation:map:properties:linear:normalisation} guarantees that \(\dualopt(\opt[o])=1\) for all~\(\dualopt\in\nml(\dualopts(\rejectset))\).

For sufficiency, consider any set~\(\setofdualopts\subseteq\posdualopts\) of positive linear bounded real functionals~\(\dualopt\), with moreover \(\dualopt(\opt[o])=1\).
It then follows from Proposition~\ref{prop:mixing:and:archimedean:binary} that all~\(\rejectset[\dualopt]\) are mixing and {\archim}, and so is, therefore, their intersection.

The proof of the last statement goes along the same lines as the necessity proof, where we can simply replace \(\setofldualopts\) by its superset~\(\posldualopts(\rejectset)\).
\end{proof}

\begin{runningexample}\label{ex:nonbinary:archimedean}
For our coin example, it follows from Corollary~\ref{cor:archimedean:mixing:representation:rejectsets:noconstants:twosided} and the discussions in instalments~\ref{ex:identical:sides:nonbinary:inference}, \ref{ex:functionals}, \ref{ex:essential:archimedeanity} and~\ref{ex:normalisation} that the set of desirable option sets~\(\rejectset[{\heads\,\text{or}\,\tails}]\) is {\archim} and mixing.
\stopit
\end{runningexample}

\section{Conclusions}\label{sec:conclusions}
With all these results still fresh in our minds, a brief comparison between the {\archimty} notion introduced here, and related notions discussed elsewhere, will be useful in putting them in perspective.
\emph{{\Essarchimty}} for sets of desirable \emph{options}, as discussed in Section~\ref{sec:essential:archimedeanity}, is inspired by Walley's notion of \emph{strictly desirable gambles} \cite{walley1991}.\footnote{Walley's use of the term `strict' in a context where it can be confused with the much general notion of `strict preference' is a bit unfortunate.} 
One direct way of lifting the openness condition to sets of desirable \emph{option sets} is based on the `includes some open neighbourhood of each of its points' aspect of openness, and led Jasper De Bock and me to introduce and study what we now prefer to call `\emph{strong {\archimty}}' for sets of desirable option sets.
That it is indeed stronger than {\archimty}, is exemplified by the fact that it admits representation results in terms of \emph{closed} sets of (super)linear functionals \cite{debock2019:choice:arxiv,debock2019:choice:isipta,debock2020:axiomatic:archimedean:arxiv}.
De Bock has shown recently \cite{debock2020:axiomatic:archimedean:arxiv} that there is alternative way of lifting the `includes some open neighbourhood of each of its points' aspect of openness to sets of desirable option sets that does not lead to strong {\archimty}, but rather to the {\archimty} notion I have introduced here.

\emph{{\archimty}} for sets of desirable \emph{options} is based on an `intersection of open sets' idea, and is therefore closely related to \emph{even convexity} \cite{daniilidis2002:even:convexity}, and the \emph{even continuity} that Cozman \cite{cozman2018:evenlyconvex} defined for finite-dimensional option spaces.
In their seminal work on partially ordered (binary) preferences on horse lotteries, Seidenfeld et al.~\cite{seidenfeld1995} introduced an {\archimty} condition, which I'll call \emph{SSK-{\archimty}} here, and which Van Camp and Seidenfeld \cite{vancamp2019:isipta:poster} have recently argued to be strictly weaker than Cozman's even continuity.
It is this \emph{SSK-{\archimty}} that Seidenfeld et al.~\cite{seidenfeld2010} lift to the context of choice functions on horse lotteries, and which, in combination with mixingness, is instrumental in allowing them to prove their representation result. 
How this lifted \emph{SSK-{\archimty}} condition is related to the {\archimty} condition studied here, is after quite some study still far from clear to me, and therefore an open question, especially because (i) as I've just mentioned, the unlifted SSK-{\archimty} is strictly weaker than even convexity---which in turn is equivalent to our unlifted {\archimty} on finite-dimensional option spaces---and (ii) the representation result for mixing choice models that Seidenfeld et al.~\cite{seidenfeld2010} have proved based on this lifted \emph{SSK-{\archimty}} condition is somewhat more involved, and needs an extra condition in order to more closely resemble the one in Corollary~\ref{cor:archimedean:mixing:representation:rejectsets:noconstants:twosided}.

The results presented here constitute the basis for a very general theory of binary and non-binary choice.
Its foundations are laid by the coherence axioms, which can be made stronger by adding mixingness and {\archimty}, separately or jointly.
For each of the sets of axioms thus formed, we get a conservative inference framework with corresponding closure operators, as well as representation theorems that allow us to construe all coherent, {\archim} or mixing models---as well as combinations of them---as intersections (infima) of specific types of binary ones.
These representations are especially interesting because they lead to a complete axiomatic characterisation of various well-known decision making schemes.
To give but one example, the (coherent and) {\archim} and mixing models are exactly the ones that correspond to decision making using Levi's E-admissibility scheme~\cite{levi1980a,troffaes2007} associated with general---not necessarily closed or convex---sets of linear previsions.
I believe that such a characterisation---jointly with the one in Jasper De Bock's paper~\cite{debock2020:axiomatic:archimedean:arxiv}---is achieved here for the first time in its full simplicity and generality, in Corollary~\ref{cor:archimedean:mixing:representation:rejectsets:noconstants:twosided}.\footnote{The representation result by Seidenfeld et al.~\cite{seidenfeld2010} for these types of models in the context of horse lotteries has, to the best of my knowledge, so far only been proved in a less general context and for the special case of mixing choice models, and the conditions for representation seem, at least to me, less intuitive and more involved than the ones presented here.}
And the theory is also flexible enough to allow for characterisations for a plethora of other schemes, amongst which Walley--Sen maximality~\cite{walley1991,troffaes2007}.
Indeed, for the \emph{binary choice models} we get the decision making schemes based on
\begin{enumerate}[label=\upshape(\roman*),leftmargin=*]
\item maximality for sets of desirable gambles, covered by coherent sets of desirable options;
\item lexicographic probability orderings, covered by mixing sets of desirable options;
\item evenly convex sets of positive superlinear bounded real functionals---lower previsions essentially---covered by {\archim} sets of desirable options; and
\item evenly convex sets of positive linear bounded real functionals---linear previsions essentially---covered by {\archim} and mixing sets of desirable options.
\end{enumerate}
And for their more general, \emph{non-binary} counterparts we get, through the representation theorems, schemes that are based on arbitrary intersections---infima---of a whole variety of such binary cases.

What I haven't talked about here are the more constructive aspects of the various conservative inference frameworks.
The representation results in this paper essentially allow us to express the closure operator that effects the conservative inference as an intersection of dominating special binary models, which are not always easy (and in some cases even impossible) to identify constructively.
We therefore also need to look for other and more constructive ways of tackling the conservative inference problem; early work on this topic seems to suggest that this is not an entirely hopeless endeavour~\cite{debock2019:choice:isipta,debock2019:choice:arxiv}.
On a related note, the {\archimty} axioms~\ref{ax:desirset:essentially:archimedean:cslf}, \ref{ax:desirset:essentially:archimedean:clf} and~\ref{ax:rejectset:essentially:archimedean:noconstants} are similarly `nonconstructive', as they're based on the existence of (super)linear functionals that `do certain things'.
For an equivalent approach to these axioms with a more constructive flavour, and with gambles as options, I refer to Jasper De Bock's recent paper on this topic~\cite{debock2020:axiomatic:archimedean:arxiv}.

In future work it may be interesting to use the results presented here to derive similar axiomatic characterisations, conservative inference frameworks and representation theorems when the option space is not a linear space but a convex set of horse lotteries.
While I gave a few indications about how this can be done in the Introduction, and worked out the details for the special case of uncertainty about variables with only two possible values in the running coin flip example, the focus there is on the linear option space the horse lotteries are `embedded' in.
Using the embedding to directly formulate and study the axioms, inference methods and representations in horse lottery language, could also allow us to better lay bare the relationship with the (less general) approach followed by Seidenfeld et al.~\cite{seidenfeld2010}.

The main advantage of working with sets of desirable option sets is that, on the one hand, it is mathematically equivalent to working with choice or rejection functions, as explained in Section~\ref{sec:non-binary-choice}, right before instalment~\ref{ex:identical:sides:nonbinary:inference} of the running example. 
On the other hand, the framework is formally close enough to that of sets of desirable options in order to make `lifting' axioms from the latter to the former at the same time easy enough to do and likely enough to be successful in producing conservative inference mechanisms and representation theorems. 
This was already showcased in previous work by the author in collaboration with Jasper De Bock \cite{debock2018:choice:smps,debock2018:choice:arxiv,debock2019:choice:arxiv,debock2019:choice:isipta}.
But, the power of the `lifting' procedure is again made clear here in the transition from Section~\ref{sec:archimedanity:binary} to Section~\ref{sec:archimedeanity:non-binary}, where I lift Archimedeanity for sets of desirable options to the corresponding notion for sets of desirable option sets, and therefore indirectly also for choice or rejection functions. 
This, I believe, explains why the results obtained here seem, to me at least, more powerful and elegantly provable than those in earlier work \cite{seidenfeld1995,seidenfeld2010,2017vancamp:phdthesis,2018vancamp:lexicographic,vancamp2018:choice:and:indifference,vancamp2018:exchangeable:choice}, which focuses on other (but mathematically equivalent) types of choice models such as choice or rejection functions, and which relies on other axiomatisations.

\section*{Acknowledgements}
This work owes an intellectual debt to Teddy Seidenfeld, who introduced me to the topic of choice functions, and to Arthur Van Camp, together with whom I started exploring it, several years ago.
I've been working closely together with Jasper De Bock on various aspects of coherent choice, and my continuous discussions with him have, as ever, been very helpful in finding the right questions to ask here.
I also want to say `Thank you so much!' to both anonymous reviewers for their knowledgeable, detailed and extremely helpful comments. 
Any author should feel blessed and honoured for getting reviews like theirs.
The basis for much of the work presented above was laid during a research stay as visiting professor at Durham University in the late spring of 2019.
I am grateful to Frank Coolen and Durham University's Department of Mathematical Sciences for making that stay pleasant, fruitful, and possible.
I finished work on this paper during the second Covid-19 lockdown, in the fall of 2020 and early spring of 2021.


\begin{thebibliography}{10}

\bibitem{anscombe1963}
Francis~J. Anscombe and Robert~J. Aumann.
\newblock A definition of subjective probability.
\newblock {\em The Annals of Mathematical Statistics}, 34:199--2005, 1963.

\bibitem{aumann1962}
Robert~J. Aumann.
\newblock Utility theory without the completeness axiom.
\newblock {\em Econometrica}, 30:445--462, 1962.

\bibitem{benavoli2016:hermitian}
Alessio Benavoli, Alessandro Facchini, and Marco Zaffalon.
\newblock Quantum mechanics: The Bayesian theory generalized to the space of Hermitian matrices.
\newblock {\em Physical Review A}, 94:042106, Oct 2016.

\bibitem{debock2018:choice:arxiv}
Jasper De Bock and Gert de Cooman.
\newblock A desirability-based axiomatisation for coherent choice functions.
\newblock arXiv:1806.01044 [cs.AI], 2018.

\bibitem{couso2011}
Inés Couso and Serafín Moral.
\newblock Sets of desirable gambles: conditioning, representation, and precise probabilities.
\newblock {\em International Journal of Approximate Reasoning}, 52(7):1034--1055, 2011.

\bibitem{cozman2018:evenlyconvex}
Fabio Gagliardi Cozman.
\newblock Evenly convex credal sets.
\newblock {\em International Journal of Approximate Reasoning}, 103:124--138, December 2018.

\bibitem{daniilidis2002:even:convexity}
Aris Daniilidis and Juan-Enrique Martinez-Legaz.
\newblock Characterizations of evenly convex sets and evenly quasiconvex functions.
\newblock {\em Journal Of Mathematical Analysis And Applications}, 273(1):58--66, 2002.

\bibitem{debock2020:axiomatic:archimedean:arxiv}
Jasper De Bock.
\newblock Archimedean choice functions: an axiomatic foundation for imprecise decision making.
\newblock arXiv:2002.05196 [cs.AI], 2020.

\bibitem{debock2018:choice:smps}
Jasper De Bock and Gert de Cooman.
\newblock A desirability-based axiomatisation for coherent choice functions.
\newblock In {\em Uncertainty Modelling in Data Science: Proceedings of SMPS 2018}, volume 832, pages 46--53, 2018.

\bibitem{debock2019:choice:arxiv}
Jasper De Bock and Gert de Cooman.
\newblock Interpreting, axiomatising and representing coherent choice functions in terms of desirability.
\newblock arXiv:1903.00336 [cs.AI], 2019.

\bibitem{debock2019:choice:isipta}
Jasper De Bock and Gert de Cooman.
\newblock Interpreting, axiomatising and representing coherent choice functions in terms of desirability.
\newblock In {\em International Symposium on Imprecise Probabilities: Theories and Applications, ISIPTA 2019, Proceedings}, volume 103, pages 125--134, 2019.

\bibitem{cooman2010}
Gert de Cooman and Erik Quaeghebeur.
\newblock Exchangeability and sets of desirable gambles.
\newblock {\em International Journal of Approximate Reasoning}, 53(3):363--395, 2012.
\newblock Special issue in honour of Henry E.~Kyburg, Jr.

\bibitem{cooman2005e}
Gert de Cooman, Matthias C.~M. Troffaes, and Enrique Miranda.
\newblock $n$-Monotone exact functionals.
\newblock {\em Journal of Mathematical Analysis and Applications}, 347:143--156, 2008.

\bibitem{levi1980a}
Isaac Levi.
\newblock {\em The Enterprise of Knowledge}.
\newblock MIT Press, London, 1980.

\bibitem{maass2002}
Sebastian Maa\ss.
\newblock Exact functionals and their core.
\newblock {\em Statistical Papers}, 43:75--93, 2002.

\bibitem{nau2006}
Robert Nau.
\newblock The shape of incomplete preferences.
\newblock {\em The Annals of Statistics}, 34(5):2430--2448, 2006.

\bibitem{schechter1997}
Eric Schechter.
\newblock {\em Handbook of Analysis and Its Foundations}.
\newblock Academic Press, San Diego, CA, 1997.

\bibitem{seidenfeld1995}
Teddy Seidenfeld, Mark~J. Schervish, and Jay~B. Kadane.
\newblock A representation of partially ordered preferences.
\newblock {\em The Annals of Statistics}, 23:2168--2217, 1995.
\newblock Reprinted in \cite{seidenfeld1999}, pp.~69--129.

\bibitem{seidenfeld1999}
Teddy Seidenfeld, Mark~J. Schervish, and Jay~B. Kadane.
\newblock {\em Rethinking the Foundations of Statistics}.
\newblock Cambridge University Press, Cambridge, 1999.

\bibitem{seidenfeld2010}
Teddy Seidenfeld, Mark~J. Schervish, and Joseph~B. Kadane.
\newblock Coherent choice functions under uncertainty.
\newblock {\em Synthese}, 172(1):157--176, 2010.

\bibitem{troffaes2007}
Matthias C.~M. Troffaes.
\newblock Decision making under uncertainty using imprecise probabilities.
\newblock {\em International Journal of Approximate Reasoning}, 45(1):17--29, 2007.

\bibitem{troffaes2013:lp}
Matthias C.~M. Troffaes and Gert de Cooman.
\newblock {\em Lower Previsions}.
\newblock Wiley, 2014.

\bibitem{2017vancamp:phdthesis}
Arthur Van Camp.
\newblock {\em Choice Functions as a Tool to Model Uncertainty}.
\newblock PhD thesis, Ghent University, Faculty of Engineering and Architecture, January 2018.

\bibitem{vancamp2018:exchangeable:choice}
Arthur Van Camp and Gert de Cooman.
\newblock Exchangeable choice functions.
\newblock {\em International Journal of Approximate Reasoning}, 100:85--104, 2018.

\bibitem{2018vancamp:lexicographic}
Arthur Van Camp, Gert de Cooman, and Enrique Miranda.
\newblock Lexicographic choice functions.
\newblock {\em International Journal of Approximate Reasoning}, pages 97--119, 2018.

\bibitem{vancamp2018:choice:and:indifference}
Arthur Van Camp, Gert de Cooman, Enrique Miranda, and Erik Quaeghebeur.
\newblock Coherent choice functions, desirability and indifference.
\newblock {\em Fuzzy Sets and Systems}, 341:1--36, 2018.

\bibitem{vancamp2019:isipta:poster}
Arthur Van Camp and Teddy Seidenfeld.
\newblock Exposing some points of interest about non-exposed points of desirability.
\newblock Poster abstract presented at ISIPTA 2019 (International Symposium for Imprecise Probability: Theories and Applications), Ghent, Belgium, July 3--6, 2019, 2019.

\bibitem{walley1991}
Peter Walley.
\newblock {\em Statistical Reasoning with Imprecise Probabilities}.
\newblock Chapman and Hall, London, 1991.

\bibitem{walley2000}
Peter Walley.
\newblock Towards a unified theory of imprecise probability.
\newblock {\em International Journal of Approximate Reasoning}, 24:125--148, 2000.

\bibitem{williams1975}
Peter~M. Williams.
\newblock Notes on conditional previsions.
\newblock Technical report, School of Mathematical and Physical Science, University of Sussex, UK, 1975.
\newblock Revised journal version: \cite{williams2007}.

\bibitem{williams2007}
Peter~M. Williams.
\newblock Notes on conditional previsions.
\newblock {\em International Journal of Approximate Reasoning}, 44:366--383, 2007.
\newblock Revised journal version of \cite{williams1975}.

\bibitem{zaffalon2017:incomplete:preferences}
Marco Zaffalon and Enrique Miranda.
\newblock Axiomatising incomplete preferences through sets of desirable gambles.
\newblock {\em Journal of Artificial Intelligence Research}, 60:1057--1126, 2017.

\end{thebibliography}

\end{document}